\documentclass[preprint, 11pt, a4paper]{article} 

\usepackage[truedimen,margin=25truemm]{geometry}

\usepackage{times}
\usepackage{graphicx} 
\usepackage{epsfig} 
\usepackage{subfigure} 

\usepackage{natbib}
\usepackage{hyperref}
\usepackage{algorithm}
\usepackage{algpseudocode} 
\usepackage{placeins}

\usepackage{url}

\usepackage{graphicx} 
\usepackage{epsfig}
\usepackage{subfigure}
\usepackage{enumitem}
\usepackage{amssymb}
\usepackage{amsmath}
\usepackage{amsthm}
\usepackage{mathtools}
\usepackage[raggedright]{titlesec} 
\usepackage{bm}
\usepackage{mathrsfs}
\theoremstyle{plain}
\newtheorem{theorem}{Theorem}[section]
\newtheorem{proposition}[theorem]{Proposition}
\newtheorem{lemma}[theorem]{Lemma}
\newtheorem{corollary}[theorem]{Corollary}

\theoremstyle{definition}
\newtheorem{definition}[theorem]{Definition}

\theoremstyle{remark}
\newtheorem{remark}[theorem]{Remark}

\usepackage{arydshln}
\DeclareMathAlphabet\mathbfcal{OMS}{cmsy}{b}{d}

\DeclareMathOperator*{\argmin}{argmin}
\makeatletter
\def\hlinewd#1{%
  \noalign{\ifnum0=`}\fi\hrule \@height #1 \futurelet
   \reserved@a\@xhline}
\makeatother

\title{
	Guaranteed Low-Multilinear-Rank Tensor Recovery from Modewise Measurements via
	Normalized Block-Weighted Riemannian Gradient Descent
	\thanks{This work was supported in part by the National Key Research and
		Development Program of China under Grant 2023YFA1008502; in part by the
		Fundamental Research Funds for the Central Universities under Grant
		SWU-KR25013; and in part by the National Natural Science Foundation of
		China under Grant 12101512.}
}

\author{
	Yushi Zhou%
	\thanks{School of Mathematics and Statistics, Southwest University,
		Chongqing, China. Email: \texttt{zhou2003625@163.com}.}
	\qquad
	Feng Zhang%
	\thanks{Corresponding author. School of Mathematics and Statistics,
		Southwest University, Chongqing, China.
		Email: \texttt{zfmath@swu.edu.cn}.}
}

\begin{document}

\maketitle

\begin{abstract}
	We consider the recovery of low-multilinear-rank tensors from linear measurements and propose an adaptive block-weighted modewise Riemannian gradient descent method. The method combines memory-efficient modewise measurements with a normalized adaptive weighting strategy for the core and factor components of the Riemannian gradient. The weighting improves convergence without increasing the multilinear-rank bound of the search direction or the size of the reduced core used for retraction. Under the tensor restricted isometry property and a suitable initialization, we establish local linear convergence and derive sampling guarantees for sub-Gaussian and subsampled orthogonal with random sign (SORS) measurements. Numerical experiments on synthetic low-Tucker-rank tensors show that the proposed method reduces iteration counts and computational time while maintaining reliable recovery performance, especially near the recovery threshold and for structured SORS measurements.
\end{abstract}

\section{Introduction}
\label{sec:intro}

The tensor recovery problem arises in a wide range of applications,
including computer vision \cite{li2010tensor,liu2012tensor},
image inpainting \cite{zhang2016exact,gilman2022grassmannian},
multi-temporal image reconstruction \cite{lin2026robust},
machine learning \cite{anandkumar2014tensor},
signal processing \cite{cichocki2015tensor,zhang2021learning},
and quantum state tomography \cite{gross2010quantum,kueng2017low}.
In this paper, we consider the recovery of an unknown tensor
$\mathcal X\in\mathbb R^{n_1\times\cdots\times n_d}$ that admits a Tucker
decomposition \cite{tucker1966some} of multilinear rank
$\boldsymbol r=(r_1,\ldots,r_d)$, from a limited number of linear measurements
\begin{equation}
	\boldsymbol y=\bm{\mathscr L}(\mathcal X),
	\label{eq:intro_measurement_model}
\end{equation}
where
$\bm{\mathscr L}:\mathbb R^{n_1\times\cdots\times n_d}\rightarrow
\mathbb R^M$ is a linear measurement operator and
$M\ll\prod_{i=1}^{d}n_i$.
Since the measurement dimension is considerably smaller than the
ambient tensor dimension, the recovery problem is generally
underdetermined without additional structural assumptions. By imposing
the low-multilinear-rank prior, tensor recovery can be formulated as the
constrained least-squares problem
\begin{equation}
	\min_{\mathcal X\in\mathbb R^{n_1\times\cdots\times n_d}}
	\frac{1}{2}
	\left\|
	\bm{\mathscr L}(\mathcal X)-\boldsymbol y
	\right\|_2^2
	\quad
	\text{subject to}\quad
	\operatorname{mulrank}(\mathcal X)=\boldsymbol r.
	\label{eq:intro_recovery_problem}
\end{equation}

A conventional realization of the measurement operator $\bm{\mathscr L}$ in
\eqref{eq:intro_measurement_model} first vectorizes the tensor and then
applies a single dense measurement matrix of size $M\times\prod_{i=1}^{d}n_i$. This
vectorization-based construction underlies many classical algorithms for
solving the recovery problem \eqref{eq:intro_recovery_problem} and its
sparse counterpart, such as $\ell_1$-minimization \cite{foucart2011sparse,mo2011new},
CoSaMP \cite{needell2009cosamp,foucart2011sparse}, and iterative hard
thresholding \cite{blumensath2008iterative}. Its practical drawback,
however, lies in the size of this matrix: for high-order or large-scale
tensors, merely generating and storing this matrix can require more memory than storing
the target tensor itself. Consequently, although dense vectorized
measurements provide a convenient theoretical model, their storage cost
can render them impractical for large-scale tensor recovery.

Modewise measurement operators provide a structured and
memory-efficient alternative to conventional vectorization-based
measurements. Following the modewise measurement framework of \cite{iwen2021lower,jin2021faster}, as further developed in \cite{haselby2023modewise}, a two-stage modewise linear operator $\bm{\mathscr L}:\mathbb R^{n_1\times\cdots\times n_d}\longrightarrow\mathbb R^{m_1\times\cdots\times m_{d'}}$ can be expressed as
\begin{equation}
	\bm{\mathscr L}(\mathcal X)
	:=
	\bm{\mathscr R}_2
	\left(
	\bm{\mathscr R}_1(\mathcal X)
	\times_1 A_1
	\times_2\cdots
	\times_{\widetilde d}A_{\widetilde d}
	\right)
	\times_1 B_1
	\times_2\cdots
	\times_{d'}B_{d'},
	\label{eq:intro_general_modewise_operator}
\end{equation}
where $\bm{\mathscr R}_1:\mathbb R^{n_1\times\cdots\times n_d}
\longrightarrow\mathbb R^{\widetilde n_1\times\cdots\times
	\widetilde n_{\widetilde d}}$ is a reshaping operator that reorganizes
the entries of the original $d$-mode tensor into a $\widetilde d$-mode
tensor. At the first stage, each matrix
$A_i\in\mathbb R^{\widetilde m_i\times\widetilde n_i}$,
$i\in[\widetilde d]$, is applied along the corresponding mode of the
reshaped tensor, yielding an intermediate tensor in
$\mathbb R^{\widetilde m_1\times\cdots\times
	\widetilde m_{\widetilde d}}$. The operator
$\bm{\mathscr R}_2:\mathbb R^{\widetilde m_1\times\cdots\times
	\widetilde m_{\widetilde d}}\longrightarrow
\mathbb R^{m_1'\times\cdots\times m_{d'}'}$ subsequently reorganizes
the first-stage output into a $d'$-mode tensor. At the second stage,
the matrices $B_j\in\mathbb R^{m_j\times m_j'}$, $j\in[d']$, are
applied along the corresponding modes to further compress the tensor,
producing the final measurement tensor in
$\mathbb R^{m_1\times\cdots\times m_{d'}}$. This framework can be extended to construct more general multi-stage operators by repeatedly alternating the reshaping and modewise-compression procedures.

Conventional vectorized measurements can be viewed as a particular
case of \eqref{eq:intro_general_modewise_operator}, in which
$\bm{\mathscr R}_1$ vectorizes the entire tensor and the first stage
compression is represented by a single dense matrix of size
$M\times\prod_{i=1}^{d}n_i$. In contrast, when $\bm{\mathscr R}_1$ performs
only a moderate reshaping, the modewise construction replaces this
massive matrix with a collection of substantially smaller component
matrices. It therefore requires fewer random parameters and reduces the
storage cost of the measurement operator. The individual mode products
are also naturally amenable to parallel implementation and preserve the
multimodal organization of the tensor throughout the first stage compression.
The storage savings and structural advantages noted above are also
characteristic of related approaches, including modewise oblivious
subspace embeddings, Kronecker-structured Johnson–Lindenstrauss
transforms, and sketching methods for large-scale Tucker
approximation~\cite{iwen2021lower,bamberger2022johnson,malik2018low}.

A key theoretical ingredient that connects such structured embeddings
with low-rank tensor recovery is the tensor restricted isometry
property (TRIP). Introduced in the analysis of tensor iterative hard
thresholding, TRIP extends the classical restricted isometry principle
to low-rank tensor models by requiring the measurement operator to
approximately preserve the Frobenius norm of tensors with prescribed
rank structure \cite{rauhut2017low}. It therefore provides a natural
condition under which the geometry of the low-rank tensor model is
retained after measurement. In the
modewise setting, when the component matrices satisfy suitable
restricted isometry conditions, the resulting one-stage and two-stage
operators likewise satisfy TRIP over tensors with bounded multilinear
rank \cite{haselby2023modewise}. In particular, these guarantees can be
established when the component matrices are chosen from either
subgaussian ensembles or Subsampled Orthogonal with Random Sign (SORS)
ensembles, the two measurement constructions considered later in this
work. Motivated by these results, we specialize the general framework
\eqref{eq:intro_general_modewise_operator} to the one-stage operator
$\bm{\mathscr L}_1=\operatorname{vec}\circ\bm{\mathscr A}\circ\bm{\mathscr R}$ and the
two-stage operator
$\bm{\mathscr L}_2=A_{\mathrm{2nd}}\circ\bm{\mathscr L}_1$, and incorporate them into the proposed Riemannian recovery method.
The corresponding TRIP-based convergence and sampling guarantees are
established in Section~\ref{sec:recovery_guarantee}.

\paragraph{Related work}
A variety of methods have been developed for low-rank tensor recovery.
Convex approaches commonly promote low multilinear rank by minimizing
the sum of the nuclear norms of tensor matricizations \cite{gandy2011tensor,huang2014provable,mu2014square},
but matricization may fail to fully exploit the intrinsic multilinear structure of the tensor and can lead to suboptimal sampling complexities. 
Tensor nuclear-norm formulations provide a more direct
alternative \cite{yuan2016tensor}, but evaluating or optimizing the corresponding
tensor norms is generally computationally demanding. More recently,
nonlocal tensor nuclear norms have been developed to exploit spatial
correlations in high-dimensional image recovery \cite{li2026importance}.
Nonconvex tensor low-rank surrogates have also been employed in imaging
inverse problems, such as tensor logarithmic Schatten-$p$ minimization
for 3D Poissonian image deblurring \cite{lu20243d}.
Nonconvex approaches have therefore received considerable attention due
to their lower computational cost and favorable recovery performance.
One class of methods parameterizes the unknown tensor through its core
tensor and factor matrices, and then applies alternating minimization or
gradient descent to the resulting factorized problem \cite{jain2014provable,xia2019polynomial,xia2021statistically}.
Another class of methods directly enforces the low-rank constraint
through iterative hard thresholding (IHT) \cite{blumensath2008iterative},
which has been extended to the tensor setting as tensor iterative hard
thresholding (TIHT) \cite{rauhut2017low}. However, applying the truncated
higher-order singular value decomposition (HOSVD) to a full-dimensional
tensor requires computing the singular value decomposition of each large
mode-$i$ matricization, whose column dimension grows rapidly with the
tensor order and can become prohibitively costly for high-dimensional
problems.
Riemannian gradient descent (RGD) overcomes this bottleneck by
projecting the Euclidean gradient onto the tangent space of the
fixed-multilinear-rank manifold at the current iterate, so that the
retraction step only needs to process a compact core tensor rather
than the full ambient tensor. Kressner et al.\ first developed this
Riemannian optimization framework for tensor completion, introducing
the tangent-space parametrization and HOSVD-based retraction on
which subsequent RGD methods rely \cite{kressner2014low}. Cai et
al.\ extended RGD to general linear measurement operators and
proved that, once initialized by one step of IHT, the resulting
iteration converges linearly to the underlying tensor under a
tensor restricted isometry condition, with a sampling complexity
optimal in the ambient dimension $n$ and every retraction confined
to core tensors of size at most $2r_1\times\cdots\times 2r_d$
\cite{cai2020provable}. More recent work has further extended RGD
along several directions, including its implicit regularization
behavior \cite{wang2023implicit} and variants based on preconditioned
or adapted Riemannian metrics \cite{zhang2025preconditioned,bian2024preconditioned,hamed2024riemannian}.

The theoretical recovery guarantees of the aforementioned methods rely on the TRIP. Existing TRIP results, however, are predominantly established for vectorized measurement operators. For a tensor in $\mathbb R^{n_1\times\cdots\times n_d}$, such an operator is represented by a matrix of size $M\times\prod_{i=1}^{d}n_i$. The storage of this matrix can exceed that
of the original tensor and therefore limits the practical use of
vectorization-based measurements in large scale settings.
To address this difficulty, Haselby et al.\ introduced
modewise measurement operators based on tensor reshaping and mode
products \cite{haselby2023modewise}. Compared with vectorized
measurements, modewise operators require fewer random parameters, reduce
the storage cost of the sensing map, and are compatible with structured
and parallel implementations. Building upon the modewise measurement framework in \cite{haselby2023modewise}, subsequent work extended this low-memory approach to Tucker approximation from one-pass streamed measurements \cite{haselby2026fast}.
The RGD and modewise measurement frameworks address two complementary
limitations of large-scale tensor recovery. RGD reduces the
computational cost of the optimization step by exploiting the geometry
of the fixed-rank tensor manifold, whereas modewise operators reduce
the storage cost of the measurement process. Despite its computational benefits, standard RGD suffers from a limitation, it applies a uniform global scaling to all orthogonal components of the Riemannian gradient. This approach neglects the heterogeneous scale disparities that often emerge between the core tensor and the factor matrices during optimization. Treating these distinct components with a single step size can lead to severe update imbalances, such as overshooting in the factors or stagnating in the core which degrades convergence speed. To mitigate such ill-conditioning, recent studies have explored Riemannian preconditioning and adapted metrics for low-rank matrix and tensor optimization \cite{hamed2024riemannian,mishra2016riemannian,zhang2025preconditioned,bian2024preconditioned}. Building upon the rationale of these geometric refinements, our motivation is to design a computationally lightweight yet effective scaling strategy that dynamically corrects these imbalances. To this end, we propose the adaptive block-weighted method, which adaptively rescales the core and factor gradient components of the Riemannian gradient according to their individual magnitudes, thereby harmonizing their updates and accelerating the overall recovery process.

\paragraph{Contributions}
In this study, we develop an adaptive block-weighted modewise
Riemannian gradient descent method for low-Tucker-rank tensor recovery.
The proposed approach combines the storage advantages of structured
modewise measurements with the geometric efficiency of Riemannian optimization on
the fixed-multilinear-rank tensor manifold. The main contributions are
summarized as follows.

\begin{itemize}
	\item
	We incorporate one-stage and two-stage modewise measurement
	operators into the RGD framework. The resulting recovery model
	avoids the storage of a single dense vectorized measurement matrix
	and remains compatible with modewise operators satisfying the
	tensor restricted isometry property.
	
	\item
	We propose a normalized adaptive block-weighted tangent operator
	that rescales the core and factor components of the
	Riemannian gradient according to their regularized Frobenius norms.
	The normalization maintains a unit average of the block weights and
	recovers the standard RGD direction when the Riemannian gradient
	components have equal Frobenius norms. Moreover, the weighting does not
	increase the multilinear-rank bound of the search direction or the size
	of the reduced core tensor used in the retraction.
	
	\item
	We establish a local linear convergence guarantee that quantifies the
	effect of the adaptive weighting. Under a
	$\operatorname{TRIP}(\delta_{2\boldsymbol r},2\boldsymbol r)$ condition and a
	sufficiently accurate initialization, the iterates remain in a
	prescribed neighborhood of the underlying tensor and converge to it at
	a linear rate for every admissible weight deviation.
\end{itemize}
\noindent\textbf{Paper Outline.}
The remainder of this paper is organized as follows.
In Section~\ref{sec:preliminaries}, we introduce the tensor notation, Tucker decomposition, fixed-rank tensor manifold, Riemannian gradient descent framework, and modewise measurement operators.
In Section~\ref{sec:adaptive_modewise_rgrad}, we present the proposed adaptive block-weighted modewise Riemannian gradient descent method together with the normalized weighting strategy and its efficient implementation.
In Section~\ref{sec:recovery_guarantee}, we establish the modewise TRIP conditions, local linear convergence guarantees, and sampling complexity of the proposed method.
Section~\ref{sec:experiments} presents numerical experiments under Gaussian and SORS measurements to demonstrate the recovery performance and computational efficiency of our method.
Finally, Section~\ref{sec:conclusion} concludes the paper and discusses directions for future work.

\section{Preliminaries and Problem Formulation}
\label{sec:preliminaries}

Throughout this paper, calligraphic letters, such as $\mathcal X$, are used to denote tensors, while capital Roman letters, such as $V$ and $A$, denote matrices. Bold lowercase letters, such as $\boldsymbol x$, denote vectors. Linear operators on tensors are denoted by bold script letters, such as $\bm{\mathscr{L}}$. Sets and spaces are denoted by calligraphic or blackboard bold letters, depending on the context. For a tensor $\mathcal X\in\mathbb R^{n_1\times\cdots\times n_d}$, its $(j_1,\ldots,j_d)$-th entry is denoted by $x_{j_1\cdots j_d}$ or $[\mathcal X]_{j_1,\ldots,j_d}$. For any positive integer $p$, we write $[p]:=\{1,\ldots,p\}$; in particular, $[d]$ denotes the set of tensor-mode indices. Integer multi-indices are denoted by bold lowercase letters, such as $\boldsymbol a=(a_1,\ldots,a_d)$. For two multi-indices $\boldsymbol a,\boldsymbol b\in\mathbb N^d$, the relation $\boldsymbol a\preceq\boldsymbol b$ means that $a_i\leq b_i$ for every $i\in[d]$. For a matrix $X$, its singular values are arranged in nonincreasing order, and $\sigma_j(X)$ denotes its $j$-th largest singular value. The tensor inner product and the associated Frobenius norm are denoted by $\langle\cdot,\cdot\rangle_F$ and $\|\cdot\|_F$, respectively. The identity operator is denoted by $\bm{\mathscr I}$, the adjoint of a linear operator $\bm{\mathscr L}$ is denoted by $\bm{\mathscr L}^{*}$, and its operator norm is denoted by $\|\bm{\mathscr L}\|$. In the remainder of this section, we introduce the basic tensor operations, the geometry of fixed-multilinear-rank tensors, and the measurement operators used throughout the paper.

\subsection{Tensor notation}
\label{subsec:tensor_notation_tucker}

For tensors $\mathcal X,\mathcal Y\in\mathbb R^{n_1\times\cdots\times n_d}$, the inner product is $\langle\mathcal X,\mathcal Y\rangle_F:=\sum_{i_1=1}^{n_1}\cdots\sum_{i_d=1}^{n_d}x_{i_1\cdots i_d}y_{i_1\cdots i_d}$
and Frobenius norm is $\|\mathcal X\|_F:=\sqrt{\langle\mathcal X,\mathcal X\rangle_F}$.

The mode-$i$ matricization of a tensor $\mathcal X\in\mathbb R^{n_1\times\cdots\times n_d}$ is defined as a linear mapping that unfolds $\mathcal{X}$ into a matrix $\mathcal X_{(i)}\in\mathbb R^{n_i\times N_i}$, where $N_i:=\prod_{j\ne i}n_j$. Specifically, this operation reshapes $\mathcal{X}$ such that each mode-$i$ fiber $\mathcal{X}_{j_1,\ldots,j_{i-1}, :, j_{i+1}, \ldots, j_d} \in \mathbb{R}^{n_i}$ constitutes a column of $\mathcal{X}_{(i)}$. Furthermore, we define the vectorization operator $\operatorname{vec}: \mathbb R^{n_1\times\cdots\times n_d} \to \mathbb R^{\prod_{j=1}^d n_j}$ as the canonical linear bijection that maps the tensor to a column vector, with its inverse denoted by $\operatorname{ten}_{p_1,\ldots,p_s} := \operatorname{vec}^{-1}$ for a target space $\mathbb{R}^{p_1 \times \cdots \times p_s}$.

For a tensor $\mathcal X \in \mathbb R^{n_1\times\cdots\times n_d}$ and a matrix $U\in\mathbb R^{p_i\times n_i}$, the mode-$i$ product, denoted by $\mathcal X\times_iU$, yields a tensor in $\mathbb R^{n_1\times\cdots\times n_{i-1} \times p_i \times n_{i+1}\times\cdots\times n_d}$. Its entries are explicitly defined by
\begin{equation*}
	\bigl(\mathcal X\times_iU\bigr)_{j_1,\ldots,j_{i-1},\ell,j_{i+1},\ldots,j_d}
	=
	\sum_{j_i=1}^{n_i}
	x_{j_1,\ldots,j_i,\ldots,j_d}
	U_{\ell,j_i}.
\end{equation*}
for all $(j_1,\ldots,j_{i-1},\ell,j_{i+1},\ldots,j_d)\in[n_1]\times\cdots\times[n_{i-1}]\times[p_i]\times[n_{i+1}]\times\cdots\times[n_d]$.
When applied to a rank-one tensor $\mathcal Y = \bigcirc_{j=1}^d\boldsymbol v^{(j)}$ composed of vectors $\boldsymbol v^{(j)}\in\mathbb R^{n_j}$, the mode-$i$ product acts exclusively on the $i$-th factor vector, yielding 
\[
\mathcal Y\times_iU = \bigl(\bigcirc_{j=1}^{i-1}\boldsymbol v^{(j)}\bigr) \circ \bigl(U\boldsymbol v^{(i)}\bigr) \circ \bigl(\bigcirc_{j=i+1}^{d}\boldsymbol v^{(j)}\bigr).
\]

\subsection{Tucker decomposition and fixed-rank tensor manifold}
\label{subsec:tucker_manifold}

Let
$\boldsymbol r=(r_1,\ldots,r_d)\in\mathbb N^{d}$
with $1\leq r_i\leq n_i$ for all $i\in[d]$.
The multilinear rank of a tensor
$\mathcal X\in\mathbb R^{n_1\times\cdots\times n_d}$
is defined by $\operatorname{mulrank}(\mathcal X):=\bigl(\operatorname{rank}(\mathcal X_{(1)}),\ldots,\operatorname{rank}(\mathcal X_{(d)})\bigr)$.
We say that $\mathcal X$ has multilinear rank at most
$\boldsymbol r$ if
$\operatorname{mulrank}(\mathcal X)\preceq\boldsymbol r$.
Equivalently, there exist subspaces $\mathcal U_i\subseteq\mathbb R^{n_i}$ and $\dim(\mathcal U_i)=r_i$
such that $\mathcal X\in\bigotimes_{i=1}^d\mathcal U_i$.

For each $i\in[d]$, let
$\{\boldsymbol v_1^{(i)},\ldots,\boldsymbol v_{r_i}^{(i)}\}$
be an orthonormal basis of $\mathcal U_i$, and define the corresponding factor matrix
$V^{(i)}:=\bigl[\boldsymbol v_1^{(i)},\ldots,\boldsymbol v_{r_i}^{(i)}\bigr]
\in\mathbb R^{n_i\times r_i}$.
Then there exists a core tensor
$\mathcal B\in\mathbb R^{r_1\times\cdots\times r_d}$
such that
\begin{align}
	\mathcal X
	=
	\mathcal B
	\times_1V^{(1)}
	\times_2\cdots
	\times_dV^{(d)}
	=
	\sum_{k_1=1}^{r_1}\cdots
	\sum_{k_d=1}^{r_d}
	\mathcal B(k_1,\ldots,k_d)
	\bigcirc_{i=1}^d
	\boldsymbol v_{k_i}^{(i)}.
	\label{eq:tucker_decomposition}
\end{align}
This representation is called an orthogonal Tucker decomposition of
$\mathcal X$. The factor matrices satisfy $\bigl(V^{(i)}\bigr)^TV^{(i)}=I_{r_i}$, for $i\in[d]$.
Moreover,
$\operatorname{mulrank}(\mathcal X)=\boldsymbol r$
if and only if the core tensor $\mathcal B$ has full multilinear rank
$\boldsymbol r$.

The collection of all such tensors forms a smooth embedded submanifold in the ambient Euclidean tensor space, denoted by 
\begin{equation*}
	\mathbb M_{\boldsymbol r} := \left\{ \mathcal X \in \mathbb R^{n_1\times\cdots\times n_d} : \operatorname{mulrank}(\mathcal X) = \boldsymbol r \right\},
\end{equation*}
whose dimension is given by $\dim(\mathbb M_{\boldsymbol r}) = \prod_{i=1}^d r_i + \sum_{i=1}^d (n_ir_i-r_i^2)$ \cite{kressner2014low,cai2020provable}.

By differentiating the factors of the orthogonal Tucker decomposition $\mathcal X = \mathcal B\times_{i\in[d]}V^{(i)} \in \mathbb M_{\boldsymbol r}$ and imposing the gauge conditions $(V^{(i)})^T\dot V^{(i)}=0$ to eliminate the infinitesimal nonuniqueness of the representation, the tangent space at $\mathcal X$ is characterized as
\begin{equation*}
	T_{\mathcal X}\mathbb M_{\boldsymbol r} =
	\Bigg\{ \dot{\mathcal B} \times_{i\in[d]}V^{(i)} + \sum_{k=1}^d \mathcal B \times_{j\in[d]\setminus\{k\}}V^{(j)} \times_k\dot V^{(k)} \;\Bigg|\; \dot{\mathcal B} \in \mathbb R^{r_1\times\cdots\times r_d},\, (V^{(k)})^T\dot V^{(k)}=0 \Bigg\}.
\end{equation*}
Under these gauge conditions, the tangent space naturally admits a mutually orthogonal direct sum decomposition:
\begin{equation}
	T_{\mathcal X}\mathbb M_{\boldsymbol r} = \mathcal S_{\mathcal X}^{(0)} \oplus \mathcal S_{\mathcal X}^{(1)} \oplus\cdots\oplus \mathcal S_{\mathcal X}^{(d)}
	\label{eq:tangent_direct_sum}
\end{equation}
with respect to the Frobenius inner product. Specifically, this consists of one core-variation block $\mathcal S_{\mathcal X}^{(0)} := \{ \dot{\mathcal B} \times_{i\in[d]}V^{(i)} \mid \dot{\mathcal B} \in \mathbb R^{r_1\times\cdots\times r_d} \}$ and $d$ factor-variation blocks $\mathcal S_{\mathcal X}^{(k)} := \{ \mathcal B \times_{j\in[d]\setminus\{k\}}V^{(j)} \times_k\dot V^{(k)} \mid \dot V^{(k)} \in \mathbb R^{n_k\times r_k},\, (V^{(k)})^T\dot V^{(k)}=0 \}$ for each $k\in[d]$.

For $k=0,\ldots,d$, let $\bm{\Pi}_{\mathcal X}^{(k)}:\mathbb R^{n_1\times\cdots\times n_d}\longrightarrow\mathcal S_{\mathcal X}^{(k)}$ denote the orthogonal projector onto the $k$-th tangent block $\mathcal S_{\mathcal X}^{(k)}$. Then the orthogonal projector from the ambient tensor space onto $T_{\mathcal X}\mathbb M_{\boldsymbol r}$ can be written as $\bm{\mathscr P}_{T_{\mathcal X}\mathbb M_{\boldsymbol r}}=\sum_{k=0}^d\bm{\Pi}_{\mathcal X}^{(k)}$.
Importantly, any tangent vector $\xi \in T_{\mathcal X}\mathbb M_{\boldsymbol r}$ inherits a strictly bounded structural complexity from this subspace configuration, yielding a multilinear rank bounded componentwise by $2\boldsymbol r$, i.e., $\operatorname{mulrank}(\xi) \preceq 2\boldsymbol r$ \cite{kressner2014low}.

\subsection{Riemannian gradient descent algorithm}
\label{subsec:standard_rgrad}

To minimize the least-squares objective $f(\mathcal X) = \frac{1}{2}\|\bm{\mathscr L}(\mathcal X)-\boldsymbol y\|_2^2$ over the fixed-rank manifold $\mathbb M_{\boldsymbol r}$, the standard RGD algorithm iteratively updates the tensor via gradient projection and manifold retraction \cite{cai2020provable}. At the $l$-th iterate $\mathcal X_l \in \mathbb M_{\boldsymbol r}$, the Euclidean gradient is given by $\mathcal G_l := \nabla f(\mathcal X_l) = \bm{\mathscr L}^*\bigl(\bm{\mathscr L}(\mathcal X_l)-\boldsymbol y\bigr)$.
Under the Frobenius metric, the Riemannian gradient is uniquely determined by projecting the Euclidean gradient onto the current tangent space $\mathcal S_l := T_{\mathcal X_l}\mathbb M_{\boldsymbol r}$, yielding $\bm{\mathscr{P}}_{\mathcal S_l}\mathcal G_l$. 

Moving along the negative Riemannian gradient direction, the exact line-search step size $\alpha_l$ is computed as
\begin{equation*}
	\alpha_l
	= \argmin_{\alpha\ge0} f\bigl(\mathcal X_l-\alpha \bm{\mathscr{P}}_{\mathcal S_l}\mathcal G_l\bigr)
	= \frac{\|\bm{\mathscr{P}}_{\mathcal S_l}\mathcal G_l\|_F^2}{\|\bm{\mathscr L}(\bm{\mathscr{P}}_{\mathcal S_l}\mathcal G_l)\|_2^2},
\end{equation*}
provided that $\bm{\mathscr{P}}_{\mathcal S_l}\mathcal G_l\ne 0$. The equality follows directly from the orthogonality of the projector $\bm{\mathscr{P}}_{\mathcal S_l}$, which yields $\langle\mathcal G_l,\bm{\mathscr{P}}_{\mathcal S_l}\mathcal G_l\rangle_F =\|\bm{\mathscr{P}}_{\mathcal S_l}\mathcal G_l\|_F^2$.

Because the intermediate tensor $\mathcal X_l - \alpha_l \bm{\mathscr{P}}_{\mathcal S_l}\mathcal G_l$ resides in the tangent space $\mathcal S_l$ and typically possesses a multilinear rank up to $2\boldsymbol r$, a retraction operator is required to pull it back onto the prescribed low-rank manifold $\mathbb M_{\boldsymbol r}$. A computationally efficient and widely adopted retraction is the truncated HOSVD operator $\bm{\mathscr H}_{\boldsymbol r}$, which retains the leading $r_i$ left singular vectors of each mode-$i$ matricization \cite{vannieuwenhoven2012new}. The resulting truncation procedure is summarized in Algorithm~\ref{alg:truncated_hosvd}.

\begin{algorithm}[htbp]
	\caption{Truncated HOSVD $\bm{\mathscr H}_{\boldsymbol r}$, \cite{de2000multilinear,tucker1966some}}
	\label{alg:truncated_hosvd}
	\begin{algorithmic}[1]
		\Require Tensor $\mathcal Y\in\mathbb R^{n_1\times\cdots\times n_d}$ and target rank $\boldsymbol r=(r_1,\ldots,r_d)$.
		\For{$i=1,\ldots,d$}
		\State Compute $V^{(i)}\in\mathbb R^{n_i\times r_i}$ as the $r_i$ dominant left singular vectors of $\mathcal Y_{(i)}$.
		\EndFor
		\State $\mathcal B\gets\mathcal Y\times_{i\in[d]}(V^{(i)})^T$.
		\State \Return $\bm{\mathscr H}_{\boldsymbol r}(\mathcal Y) :=\mathcal B\times_{i\in[d]}V^{(i)}$.
	\end{algorithmic}
\end{algorithm}

Unlike the matrix case, computing best low-rank tensor approximations is generally NP-hard \cite{hillar2013most}. A practical alternative is the truncated HOSVD, which serves as a quasi-projection onto the multilinear-rank-$\boldsymbol r$ manifold $\mathbb M_{\boldsymbol r}$ \cite{de2000multilinear,tucker1966some}. Specifically, letting $\bm{\mathscr P}_{\mathbb M_{\boldsymbol r}}$ denote the exact projection onto $\mathbb M_{\boldsymbol r}$, we have
\begin{equation}
	\|\mathcal Y-\bm{\mathscr H}_{\boldsymbol r}(\mathcal Y)\|_F
	\le
	\sqrt d\,
	\|\mathcal Y-\bm{\mathscr P}_{\mathbb M_{\boldsymbol r}}(\mathcal Y)\|_F,
	\qquad
	\mathcal Y\in\mathbb R^{n_1\times\cdots\times n_d}.
	\label{eq:hosvd_quasi_optimality}
\end{equation}
Consequently, initialized by $\mathcal X_0=\bm{\mathscr H}_{\boldsymbol r}(\bm{\mathscr L}^*\boldsymbol y)$, the standard RGD iteration is formulated as $\mathcal X_{l+1}=\bm{\mathscr H}_{\boldsymbol r}\left(\mathcal X_l-\alpha_l\bm{\mathscr{P}}_{\mathcal S_l}\mathcal G_l\right)$.
This formulation allows the retraction to be efficiently evaluated on a compact core tensor of size at most $2r_1\times\cdots\times 2r_d$, completely avoiding the HOSVD of the full-dimensional ambient tensor.

\subsection{Restricted isometry properties}
\label{subsec:trip_preliminaries}

We first recall the restricted isometry property (RIP) on a prescribed set, which will be used to characterize the component matrices of the modewise measurement operators introduced later. The definition applies to an arbitrary
subset of a normed vector space and is therefore not restricted to a
particular low-dimensional model.

\begin{definition}[RIP($\varepsilon,\mathcal S$) property]
	\label{def:set_rip}
	Let $\mathcal S$ be a subset of a normed vector space and let
	$\bm{\mathscr A}$ be a linear map. For
	$0<\varepsilon<1$, we say that $\bm{\mathscr A}$ satisfies the
	$\operatorname{RIP}(\varepsilon,\mathcal S)$ property if
	\begin{equation}
		(1-\varepsilon)\|\boldsymbol z\|^2
		\le
		\|\bm{\mathscr A}(\boldsymbol z)\|^2
		\le
		(1+\varepsilon)\|\boldsymbol z\|^2,
		\qquad
		\boldsymbol z\in\mathcal S.
		\label{eq:set_rip_definition}
	\end{equation}
\end{definition}

Thus, the RIP requires the linear map to approximately preserve the
norm of every element in the prescribed set $\mathcal S$. 
For tensor recovery, the corresponding norm-preservation property is
TRIP. In this case, the set consists of tensors whose multilinear rank is bounded by
a prescribed rank tuple.

\begin{definition}[TRIP($\delta,\boldsymbol r$) property, \cite{rauhut2017low}]
	\label{def:trip}
	Let $\boldsymbol r=(r_1,\ldots,r_d)\in\mathbb N^d$ and let
	$0<\delta<1$. A linear map $\bm{\mathscr A}$ is said to satisfy the
	$\operatorname{TRIP}(\delta,\boldsymbol r)$ property if
	\begin{equation*}
		(1-\delta)\|\mathcal X\|_F^2
		\le
		\|\bm{\mathscr A}(\mathcal X)\|_F^2
		\le
		(1+\delta)\|\mathcal X\|_F^2
	\end{equation*}
	for every tensor $\mathcal X\in
	\mathbb R^{n_1\times\cdots\times n_d}$ satisfying
	$\operatorname{mulrank}(\mathcal X)\preceq\boldsymbol r$.
\end{definition}

The TRIP therefore requires the measurement operator to act as an
approximate isometry on the set of tensors with multilinear rank
bounded by $\boldsymbol r$. In the following subsection, we introduce
the modewise measurement operators and specify restricted isometry
conditions on their component matrices that ensure the corresponding
TRIP.

\subsection{Modewise measurement}
\label{subsec:modewise_measurement_operators}

Following the multi-stage modewise measurement framework \eqref{eq:intro_general_modewise_operator} introduced in
\cite{haselby2023modewise}, a modewise measurement operator is a structured linear map constructed by alternating tensor reshaping and modewise matrix multiplications.
Before describing the corresponding measurement construction, we introduce two classes of structured sets. Suppose that $\mathcal X\in\mathbb R^{n\times\cdots\times n}$, let $q\geq2$ be an integer dividing $d$, and set $d':=d/q$.

\begin{definition}[The set $\mathcal S_{1,2}$, \cite{haselby2023modewise}]
	\label{def:S12}
	Consider a set of vectors in $\mathbb R^{n^q}$, let
	\[
	\mathcal S_1
	:=
	\left\{
	\bigotimes_{j=1}^{q}\boldsymbol v^{(j)}
	:
	\boldsymbol v^{(j)}\in\mathbb S^{n-1},
	\quad j\in[q]
	\right\}
	\subseteq
	\mathbb S^{n^q-1},
	\]
	and define $\mathcal{S}_2:=\biggl\{\frac{\boldsymbol{x} + \boldsymbol{y}}{\|\boldsymbol{x} + \boldsymbol{y}\|_2}\,\bigg|\,\boldsymbol{x},\boldsymbol{y} \in \mathcal{S}_1,\text{s.t. }\langle \boldsymbol{x},\boldsymbol{y} \rangle = 0\biggr\}$, then set $\mathcal S_{1,2}:=\mathcal S_1\cup\mathcal S_2\subseteq\mathbb S^{n^q-1}$.
\end{definition}

The HOSVD requires the factor vectors within each mode to be mutually
orthogonal. After modewise compression, however, the transformed factor
vectors are generally only approximately orthogonal. The following set
is therefore used to characterize the intermediate tensors produced by
the first-stage compression.

\begin{definition}[Nearly orthogonal tensors, \cite{haselby2023modewise}]
	\label{def:nearly_orthogonal_tensors}
	Let $R\geq 1$, $\mu\geq 0$, $\theta\geq 0$, and let $\widetilde{\boldsymbol r} = (\widetilde r_1,\ldots,\widetilde r_{d'}) \in \mathbb N^{d'}$. The set $\mathcal N_{R,\mu,\theta,\widetilde{\boldsymbol r}}$ consists of all tensors $\mathcal Y \in \mathbb R^{m_1\times\cdots\times m_{d'}}$ that admit the form
	\[
	\mathcal Y = \sum_{k_1=1}^{\widetilde r_1} \cdots \sum_{k_{d'}=1}^{\widetilde r_{d'}} \mathcal C(k_1,\ldots,k_{d'}) \bigcirc_{i=1}^{d'} \boldsymbol v_{k_i}^{(i)},
	\]
	where $\mathcal C \in \mathbb R^{\widetilde r_1 \times\cdots\times \widetilde r_{d'}}$
	is the core tensor and
	$\boldsymbol v_{k_i}^{(i)} \in \mathbb R^{m_i}$,
	$k_i\in[\widetilde r_i]$, $i\in[d']$, are the factor vectors.
	The factor vectors are not required to be mutually orthogonal or normalized.
	Instead, the core tensor and factor vectors satisfy the following conditions:
	\begin{enumerate}
		\item[(a)] $\bigl\| \boldsymbol v_{k_i}^{(i)} \bigr\|_2^2 \leq R$ for every $i\in[d']$ and $k_i\in[\widetilde r_i]$;
		
		\item[(b)] $\bigl| \langle \boldsymbol v_{k_i}^{(i)}, \boldsymbol v_{k_i'}^{(i)} \rangle \bigr| \leq \mu$ for every $i\in[d']$ and any $k_i\neq k_i'$;
		
		\item[(c)] $\|\mathcal C\|_F=1$;
		
		\item[(d)] The mode-$i$ subtensors of $\mathcal C$ are mutually orthogonal for every $i\in[d']$. More precisely, for any distinct indices $p\neq p'$ in $[\widetilde r_i]$, we have
		\[
		\sum_{k_1=1}^{\widetilde r_1} \cdots \sum_{k_{i-1}=1}^{\widetilde r_{i-1}} \sum_{k_{i+1}=1}^{\widetilde r_{i+1}} \cdots \sum_{k_{d'}=1}^{\widetilde r_{d'}} \mathcal C(k_1,\ldots,k_{i-1},p,k_{i+1},\ldots,k_{d'}) \mathcal C(k_1,\ldots,k_{i-1},p',k_{i+1},\ldots,k_{d'}) = 0.
		\]
		\item[(e)] $\|\mathcal Y\|_F\geq\theta$.
	\end{enumerate}
\end{definition}

The first stage groups
every $q$ consecutive modes of the original tensor and applies a
separate measurement matrix along each reshaped mode. The resulting
tensor is then vectorized and, when further dimension reduction is
required, compressed by a secondary measurement matrix. Specifically,
we define $\bm{\mathscr L}_1:=\operatorname{vec}\circ\bm{\mathscr A}\circ\bm{\mathscr R}$ and $\bm{\mathscr L}_2:=A_{\mathrm{2nd}}\circ\bm{\mathscr L}_1$,
where $\bm{\mathscr R}$ is the tensor reshaping operator,
$\bm{\mathscr A}$ is the first stage modewise compression map, and
$A_{\mathrm{2nd}}$ is the second stage measurement matrix.
We refer to $\bm{\mathscr L}_1$ and $\bm{\mathscr L}_2$ as the one-stage and
two-stage modewise measurement operators, respectively.

The complete construction can be summarized as
\[
\mathcal X
\xrightarrow{\ \bm{\mathscr R}\ }
\bm{\mathscr R}(\mathcal X)
\xrightarrow{\ \bm{\mathscr A}\ }
\bm{\mathscr A}(\bm{\mathscr R}(\mathcal X))
\xrightarrow{\ \operatorname{vec}\ }
\bm{\mathscr L}_1(\mathcal X)
\xrightarrow{\ A_{\mathrm{2nd}}\ }
\bm{\mathscr L}_2(\mathcal X).
\]
The first stage consists of tensor reshaping and modewise compression,
whereas the optional second stage applies an additional vectorized
compression to the first-stage output. The details of these operations
are given below.

\subsubsection{Step 1: tensor reshaping}

For simplicity, suppose that
$n_i=n$ and $r_i=r$ for all $i\in[d]$.
Consider the reshaping operator
\[
\bm{\mathscr R}:\bigotimes_{i=1}^{d}\mathbb R^n\longrightarrow\bigotimes_{s=1}^{d'}\mathbb R^{n^q}
\]
which groups every $q$ consecutive modes of a $d$-mode tensor into a
single mode. Thus, $\bm{\mathscr R}$ reduces the tensor order from $d$ to
$d'=d/q$.
$\bm{\mathscr R}$ is the unique linear operator whose action
on a rank-one tensor is given by
\begin{equation}
	\bm{\mathscr R}
	\left(
	\bigcirc_{i=1}^{d}\boldsymbol x^{(i)}
	\right)
	:=
	\bigcirc_{s=1}^{d'}
	\left(
	\bigotimes_{\ell=1+q(s-1)}^{qs}
	\boldsymbol x^{(\ell)}
	\right)
	=:
	\bigcirc_{s=1}^{d'}
	\widetilde{\boldsymbol x}^{(s)},
	\label{eq:reshaping_rank_one}
\end{equation}
where $\circ$ and $\otimes$ denote the outer product and the Kronecker
product, respectively, and $\widetilde{\boldsymbol v}^{(s)}\in\mathbb R^{n^q}$.
Suppose that $\mathcal X$ admits the Tucker decomposition
\eqref{eq:tucker_decomposition}. Then its reshaping
$\widetilde{\mathcal X}:=\bm{\mathscr R}(\mathcal X)
\in\bigotimes_{s=1}^{d'}\mathbb R^{n^q}$ can be written as
\begin{equation}
	\widetilde{\mathcal X}
	=
	\sum_{j_1=1}^{r^q}\cdots
	\sum_{j_{d'}=1}^{r^q}
	\widetilde{\mathcal B}(j_1,\ldots,j_{d'})
	\bigcirc_{s=1}^{d'}
	\widetilde{\boldsymbol v}_{j_s}^{(s)},
	\label{eq:reshaped_tensor_expansion}
\end{equation}
where $\operatorname{mulrank}\bigl(\widetilde{\mathcal X}\bigr)
\preceq\widetilde{\boldsymbol r}:=(r^q,\ldots,r^q)$,
$\widetilde{\mathcal B}$ is obtained by reshaping the original core
tensor $\mathcal B$, and each
$\widetilde{\boldsymbol v}_{j_s}^{(s)}$ is the Kronecker product of the
corresponding $q$ Tucker factor vectors of $\mathcal X$.
Since Kronecker products of orthonormal factor vectors remain
orthonormal, the vectors
$\{\widetilde{\boldsymbol v}_{j_s}^{(s)}\}_{j_s=1}^{r^q}$
form an orthonormal family for each $s\in[d']$. 

\subsubsection{Step 2: first-stage modewise compression}

The second step performs the first-stage compression by applying a
separate measurement matrix along each mode of the reshaped tensor.
For every $i\in[d']$, let $A_i\in\mathbb R^{m_i\times n^q}$, $m_i\ll n^q$. Define linear map $\bm{\mathscr A}:\mathbb R^{n^q\times\cdots\times n^q}\longrightarrow\mathbb R^{m_1\times\cdots\times m_{d'}}$
by
\begin{equation}
	\bm{\mathscr A}(\mathcal Y)
	:=
	\mathcal Y
	\times_1A_1
	\times_2\cdots
	\times_{d'}A_{d'}.
	\label{eq:first_stage_tensor_map}
\end{equation}
Thus, $\bm{\mathscr A}$ acts modewise on $d'$-mode tensors. The corresponding first-stage tensor-valued measurement map on the original tensor space is $\bm{\mathscr F}(\mathcal X):=\bm{\mathscr A}(\bm{\mathscr R}(\mathcal X))$. Let $\widetilde{\mathcal X}:=\bm{\mathscr R}(\mathcal X)$ be the reshaped tensor obtained in Step~1. By
\eqref{eq:reshaped_tensor_expansion}, 
using the action of a mode product on rank-one tensors, we obtain
\begin{align}
	\bm{\mathscr F}(\mathcal X)
	=
	\bm{\mathscr A}(\widetilde{\mathcal X})
	=
	\sum_{j_1=1}^{\widetilde r_1}
	\cdots
	\sum_{j_{d'}=1}^{\widetilde r_{d'}}
	\widetilde{\mathcal B}(j_1,\ldots,j_{d'})
	\bigcirc_{i=1}^{d'}
	\left(
	A_i\widetilde{\boldsymbol v}_{j_i}^{(i)}
	\right).
	\label{eq:first_stage_expansion}
\end{align}

For compatibility with the vector-valued measurement model used in the
recovery problem, we define the one-stage modewise measurement operator
by vectorizing the first-stage output:
\begin{equation*}
	\bm{\mathscr L}_1(\mathcal X)
	:=
	\operatorname{vec}\bigl(\bm{\mathscr F}(\mathcal X)\bigr)
	=
	\operatorname{vec}
	\bigl(
	\bm{\mathscr A}(\bm{\mathscr R}(\mathcal X))
	\bigr),
\end{equation*}
where $\bm{\mathscr L}_1(\mathcal X)\in\mathbb R^{M_{\mathrm{1st}}}$, $M_{\mathrm{1st}}:=\prod_{i=1}^{d'}m_i$. When $m_i=m$ for all $i\in[d']$, the intermediate measurement
dimension is $M_{\mathrm{1st}}=m^{d'}$.

Let $\operatorname{ten}_{m_1,\ldots,m_{d'}}:\mathbb R^{M_{\mathrm{1st}}}\longrightarrow\mathbb R^{m_1\times\cdots\times m_{d'}}$,
denote the inverse vectorization operator. Since the adjoint of a
mode-$i$ multiplication by $A_i$ is the mode-$i$ multiplication by
$A_i^T$, the adjoint of $\bm{\mathscr L}_1$ is
\begin{equation*}
	\bm{\mathscr L}_1^*(\boldsymbol z)
	=
	\bm{\mathscr R}^*
	\left(
	\operatorname{ten}_{m_1,\ldots,m_{d'}}(\boldsymbol z)
	\times_1A_1^T
	\times_2\cdots
	\times_{d'}A_{d'}^T
	\right).
\end{equation*}

The expansion \eqref{eq:first_stage_expansion} shows that the first-stage map acts on the reshaped factor vectors through the component matrices $A_i$. Accordingly, we impose $\operatorname{RIP}(\varepsilon,\mathcal S_{1,2})$ on these matrices, where $\mathcal S_{1,2}$ is defined in Definition~\ref{def:S12}. The following proposition shows that these componentwise RIP conditions imply the TRIP for the first-stage modewise measurement operator.

\begin{proposition}[Theorem 3.3, \cite{haselby2023modewise}]
	\label{prop:first_stage_modewise_trip}
	Let $r\geq2$ and $\boldsymbol r:=(r,\ldots,r)\in\mathbb N^{d}$.
	Suppose that the reshaping operator $\bm{\mathscr R}$ and the modewise
	map $\bm{\mathscr A}$ are defined by
	\eqref{eq:reshaping_rank_one} and
	\eqref{eq:first_stage_tensor_map}, respectively. Assume that, for
	every $i\in[d']$, the matrix $A_i\in\mathbb R^{m_i\times n^q}$
	satisfies the $\operatorname{RIP}(\varepsilon,\mathcal S_{1,2})$
	condition in \eqref{eq:set_rip_definition}. Define
	$\delta:=4d'r^d\varepsilon$ and suppose that $\delta<1$. Then the
	composite map $\bm{\mathscr A}\circ\bm{\mathscr R}$ satisfies the
	$\operatorname{TRIP}(\delta,\boldsymbol r)$ condition. More precisely,
	\begin{equation*}
		(1-\delta)\|\mathcal X\|_F^2
		\leq
		\left\|
		\bm{\mathscr A}(\bm{\mathscr R}(\mathcal X))
		\right\|_2^2
		\leq
		(1+\delta)\|\mathcal X\|_F^2
	\end{equation*}
	for every
	$\mathcal X\in\mathbb R^{n\times\cdots\times n}$
	satisfying $\operatorname{mulrank}(\mathcal X)\preceq\boldsymbol r$.
	
	Moreover, since vectorization preserves the Frobenius norm, the
	one-stage vector-valued operator $\bm{\mathscr L}_1=\operatorname{vec}\circ\bm{\mathscr A}\circ\bm{\mathscr R}$
	satisfies the equivalent bound
	\begin{equation*}
		(1-\delta)\|\mathcal X\|_F^2
		\leq
		\|\bm{\mathscr L}_1(\mathcal X)\|_2^2
		\leq
		(1+\delta)\|\mathcal X\|_F^2.
	\end{equation*}
\end{proposition}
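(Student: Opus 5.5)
Fix $\mathcal X$ with $\operatorname{mulrank}(\mathcal X)\preceq\boldsymbol r$. By homogeneity we may assume $\|\mathcal X\|_F=1$, and we take an orthogonal Tucker decomposition $\mathcal X=\mathcal B\times_{i\in[d]}V^{(i)}$, so that $\|\mathcal B\|_F=1$. We use the HOSVD normalization: choose the $V^{(i)}$ as left singular vectors of the unfoldings. Then the mode-$i$ slices of $\mathcal B$ are mutually orthogonal for every $i$ (all-orthogonality). By \eqref{eq:reshaped_tensor_expansion}, $\widetilde{\mathcal X}=\bm{\mathscr R}(\mathcal X)$ has the expansion with core $\widetilde{\mathcal B}$ and orthonormal Kronecker factors $\widetilde{\boldsymbol v}^{(s)}_{j_s}\in\mathcal S_1$.

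Reshaping preserves all-orthogonality. A mode-$s$ slice of $\widetilde{\mathcal B}$ is indexed by a $q$-tuple of original indices. Two distinct tuples differ in some coordinate $\ell$, and the inner product of the corresponding slices is a partial contraction of the all-orthogonal $\mathcal B$ over all other modes; it vanishes because it is an off-diagonal entry of $\mathcal B_{(\ell)}\mathcal B_{(\ell)}^T$ summed appropriately. The key point is that Gram orthogonality in mode $\ell$ forces this contraction to vanish.

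By \eqref{eq:first_stage_expansion} we expand
\[
\|\bm{\mathscr A}(\widetilde{\mathcal X})\|_2^2=\sum_{\boldsymbol j,\boldsymbol j'}\widetilde{\mathcal B}(\boldsymbol j)\widetilde{\mathcal B}(\boldsymbol j')\prod_{i=1}^{d'}\langle A_i\widetilde{\boldsymbol v}^{(i)}_{j_i},A_i\widetilde{\boldsymbol v}^{(i)}_{j_i'}\rangle .
\]
Using $\operatorname{RIP}(\varepsilon,\mathcal S_{1,2})$, each diagonal factor satisfies $\|A_i\widetilde{\boldsymbol v}\|^2\in[1-\varepsilon,1+\varepsilon]$. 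For the off-diagonal factors we polarize with the normalized sums $(\boldsymbol x\pm\boldsymbol y)/\sqrt2\in\mathcal S_2$; applied to $\boldsymbol x+\boldsymbol y$ and $\boldsymbol x+(-\boldsymbol y)$, with $-\boldsymbol y$ still in $\mathcal S_1$ after absorbing the sign into one factor, this gives $|\langle A_i\boldsymbol x,A_i\boldsymbol y\rangle|\le\varepsilon$ for orthonormal $\boldsymbol x,\boldsymbol y\in\mathcal S_1$. Then we compare with $\|\widetilde{\mathcal X}\|_F^2=\sum_{\boldsymbol j}\widetilde{\mathcal B}(\boldsymbol j)^2=1$.

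The main obstacle is controlling the product of $d'$ inner products and the cross terms. We telescope mode by mode: pass from $A_1,\dots,A_{d'}$ to identities one mode at a time, writing the difference at step $i$ as the $i$-th Gram perturbation $E_i=G_i-I$ contracted against the remaining modes. The cleanest route is to present the telescoping as an exact identity and then bound each term. For the remaining modes we have $\|G_k\|\le 1+\varepsilon+r^q\varepsilon\le2$, and we want $\|G_k\|$ to act as at most a bounded factor. The entries of $E_i$ are bounded by $\varepsilon$, giving $\|E_i\|\le r^q\varepsilon$. The contraction against the other modes involves $\widetilde{\mathcal B}$, whose $\ell_1$-type sums are bounded by $r^{d}$ via Cauchy–Schwarz. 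Here the all-orthogonality step is used: it makes the mode-$i$ Gram matrix of the core diagonal, so the cross terms from the step-$i$ perturbation are controlled by the diagonal. Summing the $d'$ steps should yield a total deviation of at most $4d'r^d\varepsilon=\delta$, where the constant $4$ absorbs the growth factors $(1+\varepsilon)$ and $2$ above.

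Finally, vectorization is an isometry, so the $\bm{\mathscr L}_1$ bound follows immediately.
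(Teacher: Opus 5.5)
The paper gives no proof of this proposition (it is imported from \cite{haselby2023modewise}), so your sketch has to stand on its own. The step you single out as doing the work is false: reshaping does not preserve all-orthogonality. All-orthogonality of $\mathcal B$ in mode $\ell$ says that two mode-$\ell$ slices are orthogonal after summing over \emph{all} other modes, including the other $q-1$ modes of the group, with equal indices in both factors. The inner product of two mode-$s$ slices of $\widetilde{\mathcal B}$ instead keeps those group indices fixed, and possibly different. You therefore only know that a partial trace of $\widetilde{\mathcal B}_{(s)}\widetilde{\mathcal B}_{(s)}^T$ is diagonal, not the matrix itself.

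Concretely, take $d=4$, $q=2$, $r=2$, and let $\mathcal B$ have only the nonzero entries $\mathcal B(1,1,1,1)=\mathcal B(1,2,2,2)=\mathcal B(2,1,1,1)=1$ and $\mathcal B(2,2,2,2)=-1$. Every mode Gram matrix equals $\operatorname{diag}(2,2)$, so $\mathcal B$ is a legitimate HOSVD core. Yet in its $4\times4$ reshaping, the rows indexed by $(1,1)$ and $(2,1)$ are both the unit vector at column $(1,1)$. Your claim that a diagonal core Gram matrix controls the cross terms is therefore unsupported. Even for a core that were all-orthogonal after reshaping, it would only simplify the first telescoping term, since later terms carry non-identity $G_k$ in the earlier modes. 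As a result, your closing assertion that the sum ``should yield'' $4d'r^d\varepsilon$ is not actually derived.

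The repair is to discard all-orthogonality entirely; the factor $r^d$ in $\delta$ is exactly the price of the crude bounds you already have. Let $G_i$ be the Gram matrix of $\{A_i\widetilde{\boldsymbol v}^{(i)}_j\}_{j=1}^{r^q}$ and $E_i=G_i-I$. Then $\|\bm{\mathscr A}(\widetilde{\mathcal X})\|_2^2=\langle\widetilde{\mathcal B},\widetilde{\mathcal B}\times_1G_1\times_2\cdots\times_{d'}G_{d'}\rangle_F$, and the following identity is exact:
\[
\|\bm{\mathscr A}(\widetilde{\mathcal X})\|_2^2-\|\widetilde{\mathcal B}\|_F^2=\sum_{i=1}^{d'}\bigl\langle\widetilde{\mathcal B},\widetilde{\mathcal B}\times_1G_1\times_2\cdots\times_{i-1}G_{i-1}\times_iE_i\bigr\rangle_F .
\]
Use $\|\mathcal Y\times_kM\|_F\le\|M\|_2\|\mathcal Y\|_F$, $\|E_i\|_2\le\|E_i\|_F\le r^q\varepsilon$ and $\|G_k\|_2\le1+r^q\varepsilon$. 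The right-hand side is then at most $d'r^q\varepsilon(1+r^q\varepsilon)^{d'-1}\le e^{1/4}d'r^q\varepsilon\le4d'r^d\varepsilon$, because $d'r^q\varepsilon\le d'r^d\varepsilon<1/4$.

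Equivalently, you can argue entrywise. The bounds $|\prod_iG_i(j_i,j_i')-\prod_i\delta_{j_ij_i'}|\le d'\varepsilon(1+\varepsilon)^{d'-1}$ and $(\sum_{\boldsymbol j}|\widetilde{\mathcal B}(\boldsymbol j)|)^2\le r^d\|\widetilde{\mathcal B}\|_F^2$ give $d'r^d\varepsilon(1+\varepsilon)^{d'-1}\le4d'r^d\varepsilon$. Either route needs only the RIP bound on $\mathcal S_1$ for the diagonal and your (correct) polarization bound $|\langle A_i\boldsymbol x,A_i\boldsymbol y\rangle|\le\varepsilon$, and works with any orthogonal Tucker decomposition. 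Invoking both the operator-norm bound on $E_i$ and the $\ell_1$ bound on the core, as your sketch does, is redundant.
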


\subsubsection{Step 3: secondary compression}

The first stage output $\bm{\mathscr F}(\mathcal X)=\bm{\mathscr A}(\bm{\mathscr R}(\mathcal X))\in\mathbb R^{m_1\times\cdots\times m_{d'}}$
retains a tensor structure. However, the transformed factor vectors $\left\{A_i\widetilde{\boldsymbol v}_{j}^{(i)}\right\}_{j=1}^{\widetilde r_i}$, $i\in[d']$
appearing in \eqref{eq:first_stage_expansion} are generally no longer
mutually orthogonal. Consequently,
\eqref{eq:first_stage_expansion} need not be an HOSVD of
$\bm{\mathscr F}(\mathcal X)$, and its multilinear rank cannot be inferred
directly from that representation.
This difficulty is addressed using the nearly orthogonal tensor class
introduced in Definition~\ref{def:nearly_orthogonal_tensors}. For the
theoretical statements below, we assume that
$m_i=m$ for all $i\in[d']$, so that $M_{\mathrm{1st}}=m^{d'}$.

\begin{proposition}[Lemma 5.11, \cite{haselby2023modewise}]
	\label{prop:first_stage_nearly_orthogonal}
	Let $\boldsymbol r=(r,\ldots,r)\in\mathbb N^d$ and $\widetilde{\boldsymbol r}=(r^q,\ldots,r^q)\in\mathbb N^{d'}$. Suppose that $\mathcal X\in\mathbb R^{n\times\cdots\times n}$ is a unit-norm tensor (i.e., $\|\mathcal X\|_F=1$) satisfying $\operatorname{mulrank}(\mathcal X) \preceq \boldsymbol r$. Assume that the first-stage modewise map $\bm{\mathscr A}$ is defined by \eqref{eq:first_stage_tensor_map}, and each matrix $A_i$ satisfies the $\operatorname{RIP}(\varepsilon,\mathcal S_{1,2})$ property for $i\in[d']$. If $\delta := 12d'r^d\varepsilon < 1$, then the first-stage output satisfies:
	\begin{equation*}
		\bm{\mathscr A}(\bm{\mathscr R}(\mathcal X))
		\in
		\mathcal N_{1+\varepsilon,\varepsilon,1-\delta/3,
		\widetilde{\boldsymbol r}}.
	\end{equation*}
\end{proposition}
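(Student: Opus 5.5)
We read the representation of $\bm{\mathscr A}(\bm{\mathscr R}(\mathcal X))$ directly off the expansion \eqref{eq:first_stage_expansion}. We take $\mathcal C:=\widetilde{\mathcal B}$ as the core and $\boldsymbol w^{(i)}_{j}:=A_i\widetilde{\boldsymbol v}^{(i)}_{j}$ as the (non-orthogonal) factor vectors, and then check conditions (a)--(e) of Definition~\ref{def:nearly_orthogonal_tensors} with $R=1+\varepsilon$, $\mu=\varepsilon$ and $\theta=1-\delta/3$.

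\emph{Conditions (c) and (d).} These come from the orthogonal Tucker structure of $\mathcal X$. We choose the Tucker decomposition \eqref{eq:tucker_decomposition} to be the HOSVD, so that the core $\mathcal B$ is all-orthogonal in every mode. Since the factors are orthonormal, $\|\mathcal B\|_F=\|\mathcal X\|_F=1$. Reshaping preserves entries, so $\|\widetilde{\mathcal B}\|_F=1$. For all-orthogonality of $\widetilde{\mathcal B}$, note that mode $s$ of $\widetilde{\mathcal B}$ indexes $q$-tuples $(k_{q(s-1)+1},\dots,k_{qs})$. The simplest route is to observe that the Gram matrix $\widetilde{\mathcal B}_{(s)}\widetilde{\mathcal B}_{(s)}^T$ is diagonal, because $\widetilde{\mathcal X}_{(s)}=\widetilde V^{(s)}\widetilde{\mathcal B}_{(s)}(\cdots)^T$ with $\widetilde V^{(s)}=\bigotimes_\ell V^{(\ell)}$. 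If the naive reshape does not directly give a diagonal Gram matrix, the fallback is to re-choose an HOSVD of the reshaped tensor $\widetilde{\mathcal X}$ itself. Its mode-$s$ left singular vectors lie in $\operatorname{range}\bigl(\bigotimes_\ell V^{(\ell)}\bigr)$ and are therefore orthonormal combinations of Kronecker factors. In that case we must instead use an enlarged $\mathcal S_{1,2}$-type argument, so this choice should be pinned down first.

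\emph{Conditions (a) and (b).} These come from $\operatorname{RIP}(\varepsilon,\mathcal S_{1,2})$. Each $\widetilde{\boldsymbol v}^{(i)}_j$ is a Kronecker product of $q$ unit vectors, hence lies in $\mathcal S_1$, which gives $\|\boldsymbol w^{(i)}_j\|_2^2\le1+\varepsilon$. For $j\ne j'$ the vectors are orthogonal elements of $\mathcal S_1$, so $(\widetilde{\boldsymbol v}_j\pm\widetilde{\boldsymbol v}_{j'})/\sqrt2\in\mathcal S_2$. We apply the RIP to the sum, and to the difference by replacing $\widetilde{\boldsymbol v}_{j'}$ with $-\widetilde{\boldsymbol v}_{j'}$, which is still in $\mathcal S_1$ since we may flip the sign of one Kronecker factor. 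The polarization identity
\[
\langle A\boldsymbol x,A\boldsymbol y\rangle=\tfrac14\bigl(\|A(\boldsymbol x+\boldsymbol y)\|^2-\|A(\boldsymbol x-\boldsymbol y)\|^2\bigr)
\]
then gives $|\langle \boldsymbol w_j,\boldsymbol w_{j'}\rangle|\le\tfrac14\cdot 2\cdot 2\varepsilon=\varepsilon$. This is valid provided the fallback above is not needed.

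\emph{Condition (e).} This follows from Proposition~\ref{prop:first_stage_modewise_trip}. With $\delta':=4d'r^d\varepsilon=\delta/3<1$ we get $\|\bm{\mathscr A}(\bm{\mathscr R}(\mathcal X))\|_F^2\ge1-\delta/3$, so $\|\cdot\|_F\ge\sqrt{1-\delta/3}\ge1-\delta/3$.

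The main obstacle is (d): guaranteeing that the reshaped core is all-orthogonal while keeping factor vectors that are pure Kronecker products, which is what (a) and (b) require. I expect the argument to go through because the mode-$s$ unfolding of $\widetilde{\mathcal B}$ is, up to row permutation, a Khatri--Rao/Kronecker-type combination of the unfoldings $\mathcal B_{(\ell)}$. If it fails, the plan is to rotate within each Kronecker block and bound the resulting cross terms using the $\mathcal S_2$ condition.
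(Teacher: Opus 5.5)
The paper gives no proof of its own here; it imports Lemma 5.11 of Haselby et al. So your argument has to stand on its own. Your checks of (a), (b), (c) and (e) for the Kronecker representation $\mathcal C=\widetilde{\mathcal B}$, $\boldsymbol w^{(i)}_j=A_i\widetilde{\boldsymbol v}^{(i)}_j$ are correct. The genuine gap is (d), and your ``simplest route'' for it is false.

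All-orthogonality of an HOSVD core $\mathcal B$ is a statement about the single-mode unfoldings $\mathcal B_{(\ell)}$. By contrast, $\widetilde{\mathcal B}_{(s)}$ is the unfolding of $\mathcal B$ with the $q$ modes of group $s$ merged into rows. Its Gram matrix is not determined by the single-mode Gram matrices, and no Khatri--Rao identity makes it diagonal. For a concrete instance, take $d=4$, $q=2$, $r\ge2$ and $\mathcal X=\tfrac1{\sqrt2}\bigl(\boldsymbol e_1\circ\boldsymbol e_1\circ\boldsymbol e_1\circ\boldsymbol e_1+\boldsymbol e_2\circ\boldsymbol e_2\circ\boldsymbol e_1\circ\boldsymbol e_1\bigr)$. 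Its HOSVD core, with nonzero entries $\mathcal B(1,1,1,1)=\mathcal B(2,2,1,1)=1/\sqrt2$, is all-orthogonal. Yet $\widetilde{\mathcal B}$, an $r^2\times r^2$ matrix since $d'=2$, has two identical nonzero mode-1 subtensors (rows), indexed by $(1,1)$ and $(2,2)$.

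The obstruction is intrinsic to your approach. Suppose $\widetilde{\mathcal X}=\mathcal C\times_1\widetilde V_1\times_2\widetilde V_2$ with orthonormal factor columns and $\mathcal C$ satisfying (d). Then $\widetilde{\mathcal X}_{(1)}\widetilde{\mathcal X}_{(1)}^T=\widetilde V_1\mathcal C_{(1)}\mathcal C_{(1)}^T\widetilde V_1^T$ is an eigendecomposition. Here $\widetilde{\mathcal X}=\boldsymbol u\circ(\boldsymbol e_1\otimes\boldsymbol e_1)$ with $\boldsymbol u=(\boldsymbol e_1\otimes\boldsymbol e_1+\boldsymbol e_2\otimes\boldsymbol e_2)/\sqrt2$, so $\pm\boldsymbol u$ is forced to be a factor vector. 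But $\boldsymbol u\notin\mathcal S_1$, since it vectorizes a rank-two matrix. Hence no choice of orthonormal Kronecker factors, and no rotation inside Kronecker blocks that keeps the factors in $\mathcal S_1$, can deliver (d).

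Your fallback, taking the HOSVD of $\widetilde{\mathcal X}$, does secure (c) and (d). Its factors, however, are $\widetilde V^{(s)}\boldsymbol q_j$ with $\boldsymbol q_j$ orthonormal, i.e.\ general unit vectors in $\operatorname{range}(\widetilde V^{(s)})$. Through your polarization bound, $\operatorname{RIP}(\varepsilon,\mathcal S_{1,2})$ controls only the Gram matrix $G_s=(A_s\widetilde V^{(s)})^TA_s\widetilde V^{(s)}$ entrywise: diagonal entries lie in $[1-\varepsilon,1+\varepsilon]$ and off-diagonal entries are at most $\varepsilon$. Gershgorin then gives $\|G_s-I\|_2\le r^q\varepsilon$, hence only $\bm{\mathscr A}(\bm{\mathscr R}(\mathcal X))\in\mathcal N_{1+r^q\varepsilon,\,r^q\varepsilon,\,1-\delta/3,\,\widetilde{\boldsymbol r}}$.

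This loss is real, not an artifact of Gershgorin. Take $q=2$, $r=n\ge3$ and $\boldsymbol u=n^{-1/2}\sum_k\boldsymbol e_k\otimes\boldsymbol e_k$. The matrix $A=(I+t\boldsymbol u\boldsymbol u^T)^{1/2}$ satisfies $\operatorname{RIP}(2t/n,\mathcal S_{1,2})$, because $\langle\boldsymbol z,\boldsymbol u\rangle^2\le 2/n$ for every $\boldsymbol z\in\mathcal S_{1,2}$. Yet $\|A\boldsymbol u\|_2^2=1+\tfrac n2\varepsilon$ with $\varepsilon=2t/n$. Moreover, $\boldsymbol u$ is the forced HOSVD factor for $\mathcal X=n^{-1/2}\sum_k\boldsymbol e_k\circ\boldsymbol e_k\circ\boldsymbol e_1\circ\boldsymbol e_1$. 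So your plan proves at best the weaker membership. Obtaining the stated constants $(1+\varepsilon,\varepsilon)$ together with (d) requires an idea your proposal does not contain.
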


Proposition~\ref{prop:first_stage_nearly_orthogonal} demonstrates that the first-stage measurements of all unit-norm tensors with multilinear rank bounded by $\boldsymbol r$ reside in the set of nearly orthogonal tensors $\mathcal N_{1+\varepsilon,\varepsilon,1-\delta/3,\widetilde{\boldsymbol r}}$. Consequently, this specific set defines the domain over which the secondary compression matrix must preserve Euclidean norms.

To achieve a more compact final measurement vector, we introduce a secondary measurement matrix $A_{\mathrm{2nd}} \in \mathbb R^{M_{\mathrm{2nd}}\times M_{\mathrm{1st}}}$. The cascaded two-stage modewise measurement operator is then defined as:
\begin{equation}
	\bm{\mathscr L}_2(\mathcal X) := A_{\mathrm{2nd}}\bm{\mathscr L}_1(\mathcal X) = A_{\mathrm{2nd}} \operatorname{vec}\bigl( \bm{\mathscr A}(\bm{\mathscr R}(\mathcal X)) \bigr) \in \mathbb R^{M_{\mathrm{2nd}}},
	\label{eq:two_stage_modewise_operator}
\end{equation}
with adjoint given by $\bm{\mathscr L}_2^*(\boldsymbol z) = \bm{\mathscr L}_1^*(A_{\mathrm{2nd}}^T\boldsymbol z)$ for any $\boldsymbol z\in\mathbb R^{M_{\mathrm{2nd}}}$.

The following proposition gives a sufficient condition for the
two-stage operator to satisfy the TRIP.

\begin{proposition}[Theorem 3.8, \cite{haselby2023modewise}]
	\label{prop:two_stage_modewise_trip}
	Let $r\geq2$, $\boldsymbol r = (r,\ldots,r)\in\mathbb N^d$, and $\widetilde{\boldsymbol r} = (r^q,\ldots,r^q)\in\mathbb N^{d'}$. Suppose $\bm{\mathscr R}$, $\bm{\mathscr A}$, and $\bm{\mathscr L}_2$ are defined by \eqref{eq:reshaping_rank_one}, \eqref{eq:first_stage_tensor_map}, and \eqref{eq:two_stage_modewise_operator}, respectively. Assume that each $A_i\in\mathbb R^{m_i\times n^q}$ satisfies the $\operatorname{RIP}(\varepsilon,\mathcal S_{1,2})$ property for $i\in[d']$, and let $\delta = 12d'r^d\varepsilon < 1$. If $A_{\mathrm{2nd}}$ satisfies the $\operatorname{RIP}\bigl(
	\delta/3,
	\operatorname{vec}(
	\mathcal N_{1+\varepsilon,\varepsilon,1-\delta/3, \widetilde{\boldsymbol r}}
	)\bigr)$ property, then $\bm{\mathscr L}_2$ satisfies the $\operatorname{TRIP}(\delta,\boldsymbol r)$ condition. Specifically,
	\begin{equation*}
		(1-\delta)\|\mathcal X\|_F^2 \leq \|\bm{\mathscr L}_2(\mathcal X)\|_2^2 \leq (1+\delta)\|\mathcal X\|_F^2
	\end{equation*}
	holds for every $\mathcal X\in\mathbb R^{n\times\cdots\times n}$ satisfying $\operatorname{mulrank}(\mathcal X) \preceq \boldsymbol r$.
\end{proposition}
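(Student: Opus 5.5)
By homogeneity it suffices to treat unit-norm tensors. Both sides of the TRIP inequality scale by $\|\mathcal X\|_F^2$, and $\mathcal X=0$ is trivial. So I fix $\mathcal X$ with $\|\mathcal X\|_F=1$ and $\operatorname{mulrank}(\mathcal X)\preceq\boldsymbol r$, and I must show $1-\delta\le\|\bm{\mathscr L}_2(\mathcal X)\|_2^2\le1+\delta$.

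The plan chains the two stages. The first stage is controlled by Proposition~\ref{prop:first_stage_modewise_trip}. The second stage is controlled by the RIP of $A_{\mathrm{2nd}}$ on the nearly orthogonal set, which applies because of Proposition~\ref{prop:first_stage_nearly_orthogonal}. The obstacle is that Proposition~\ref{prop:first_stage_nearly_orthogonal} places $\bm{\mathscr F}(\mathcal X)$ in $\mathcal N_{1+\varepsilon,\varepsilon,1-\delta/3,\widetilde{\boldsymbol r}}$, whose elements have core normalized by $\|\mathcal C\|_F=1$. This normalization does not mean $\|\bm{\mathscr F}(\mathcal X)\|_F=1$. Fortunately, the RIP definition (Definition~\ref{def:set_rip}) is stated on the set itself and not on a normalized version, so no rescaling is needed. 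Membership of $\operatorname{vec}(\bm{\mathscr F}(\mathcal X))$ in $\operatorname{vec}(\mathcal N_{\dots})$ directly gives
\[
(1-\delta/3)\|\bm{\mathscr L}_1(\mathcal X)\|_2^2\le\|A_{\mathrm{2nd}}\bm{\mathscr L}_1(\mathcal X)\|_2^2\le(1+\delta/3)\|\bm{\mathscr L}_1(\mathcal X)\|_2^2 .
\]
The hypothesis $\delta=12d'r^d\varepsilon<1$ is exactly what Proposition~\ref{prop:first_stage_nearly_orthogonal} requires.

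For the first stage, set $\delta_1:=4d'r^d\varepsilon=\delta/3<1$. Proposition~\ref{prop:first_stage_modewise_trip} then gives $(1-\delta/3)\le\|\bm{\mathscr L}_1(\mathcal X)\|_2^2\le(1+\delta/3)$. Multiplying the two sandwiches yields
\[
(1-\delta/3)^2\le\|\bm{\mathscr L}_2(\mathcal X)\|_2^2\le(1+\delta/3)^2 .
\]

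It remains to compare these constants with $1\pm\delta$. For the lower bound, $(1-\delta/3)^2=1-2\delta/3+\delta^2/9\ge1-\delta$. For the upper bound, $(1+\delta/3)^2=1+2\delta/3+\delta^2/9\le1+\delta$, which holds because $\delta^2/9\le\delta/3$ when $\delta<1$. Rescaling then gives the claim for all tensors of multilinear rank at most $\boldsymbol r$.

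I expect the only delicate point to be the set-membership and normalization issue in the second step, namely checking that $\operatorname{vec}(\bm{\mathscr F}(\mathcal X))$ lies exactly in the set on which $A_{\mathrm{2nd}}$ is assumed to satisfy RIP. Everything else is arithmetic.
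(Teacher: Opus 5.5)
Your argument is correct. It is essentially the standard proof behind the cited result; the paper itself only quotes Theorem~3.8 of \cite{haselby2023modewise} without proof, and its constants $\delta/3=4d'r^d\varepsilon$ are set up for exactly your chain: normalize $\mathcal X$, apply Proposition~\ref{prop:first_stage_modewise_trip} at level $\delta/3$, use Proposition~\ref{prop:first_stage_nearly_orthogonal} to place $\bm{\mathscr L}_1(\mathcal X)$ in $\operatorname{vec}(\mathcal N_{1+\varepsilon,\varepsilon,1-\delta/3,\widetilde{\boldsymbol r}})$, and absorb $(1\pm\delta/3)^2$ into $1\pm\delta$. You also handle the one delicate point correctly: the RIP of $A_{\mathrm{2nd}}$ is a statement about elements of that set, which is not scale-invariant, so it applies to $\bm{\mathscr L}_1(\mathcal X)$ with no further rescaling once $\mathcal X$ itself has unit norm.
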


Proposition~\ref{prop:two_stage_modewise_trip} shows that, under the prescribed RIP conditions, the secondary compression further reduces the measurement dimension while preserving the approximate isometry of the modewise measurement operator over tensors of bounded multilinear rank. In Section~\ref{subsec:modewise_trip_2r}, we specialize these guarantees to multilinear rank bounded by $2\boldsymbol r$ and present the corresponding measurement dimension bounds for sub-Gaussian and SORS matrices.

In the remainder of this paper, $\bm{\mathscr L}$ denotes either $\bm{\mathscr L}_1$ or $\bm{\mathscr L}_2$, depending on whether the one-stage or two-stage measurement model is utilized.

\section{Adaptive Block-Weighted Modewise RGD}
\label{sec:adaptive_modewise_rgrad}

To recover the underlying low-rank tensor from the compressed measurements, we formulate the task as a constrained least squares optimization problem over the fixed rank tensor manifold $\mathbb M_{\boldsymbol r}$:
\begin{equation}
	\min_{\mathcal X\in\mathbb M_{\boldsymbol r}} f(\mathcal X) := \frac{1}{2} \left\| \bm{\mathscr L}(\mathcal X)-\boldsymbol y \right\|_2^2.
	\label{eq:weighted_objective}
\end{equation}
In this section, we introduce the Adaptive Block-Weighted Modewise Riemannian Gradient Descent algorithm for solving the low-rank tensor recovery problem in \eqref{eq:weighted_objective}.

\subsection{Adaptive weighted tangent operator and algorithm}
\label{subsec:adaptive_tangent_operator}

We construct the adaptive weighted tangent operator from the gradient at each iteration. Let $\mathcal{G}_l := \bm{\mathscr L}^* \big(\bm{\mathscr L}(\mathcal{X}_l) - \boldsymbol{y}\big)$ be the Euclidean gradient of the objective function at the $l$-th iterate $\mathcal{X}_l$, and let $\mathcal{S}_l := T_{\mathcal X_l}\mathbb M_{\boldsymbol r}$ denote the tangent space of the manifold $\mathbb M_{\boldsymbol r}$ at $\mathcal{X}_l$. In our proposed algorithm, we retain the standard orthogonal tangent space decomposition and adaptively rescale its individual blocks.

As introduced in Section~\ref{subsec:tucker_manifold}, the tangent space $\mathcal S_l$ admits the mutually orthogonal direct-sum decomposition $\mathcal S_l=\mathcal S_l^{(0)}\oplus\mathcal S_l^{(1)}\oplus\cdots\oplus\mathcal S_l^{(d)}$, where $\bm{\Pi}_l^{(k)}$ denotes the orthogonal projector onto $\mathcal S_l^{(k)}$. Consequently, the orthogonal projector onto the full tangent space can be written as $\bm{\mathscr{P}}_{\mathcal S_l}=\sum_{k=0}^{d}\bm{\Pi}_l^{(k)}$.
By applying these block projectors to the Euclidean gradient $\mathcal{G}_l$, we define the gradient components as follows:
\begin{equation}
	\mathcal D_l=\bm{\Pi}_l^{(0)}\mathcal G_l,
	\qquad
	\mathcal W_l^{(k)}=\bm{\Pi}_l^{(k)}\mathcal G_l,
	\quad k=1,\ldots,d.
	\label{eq:block_gradient_components}
\end{equation}
This yields the standard decomposition of the Riemannian gradient:
\begin{equation}
	\bm{\mathscr{P}}_{\mathcal S_l}\mathcal G_l = \mathcal D_l + \sum_{k=1}^{d} \mathcal W_l^{(k)}. \label{eq:standard_gradient_decomposition}
\end{equation}

In standard RGD, all components in \eqref{eq:standard_gradient_decomposition} contribute to the Riemannian gradient with the same unit weight. However, the relative magnitudes of the core and factor gradient components, given by $\bm{\Pi}_l^{(k)}\mathcal G_l$ for $k=0,\ldots,d$, can vary considerably across iterations. Motivated by this observation, we introduce an adaptive weighting rule based on the Frobenius norms of these individual tangent-gradient components.
Specifically, we define the block magnitudes as
\begin{equation}
	s_{l,k} = \sqrt{ \left\| \bm{\Pi}_l^{(k)}\mathcal G_l \right\|_F^2 + \eta^2 }, \qquad k=0,\ldots,d, \label{eq:block_magnitude}
\end{equation}
where $\eta>0$ is a strictly positive regularization parameter. The regularization parameter $\eta>0$ keeps all regularized block magnitudes strictly positive and ensures that, as the tangent-gradient components vanish near convergence, the normalized weights approach one, thereby recovering the standard RGD scaling. To convert these regularized block magnitudes into comparable scaling factors for the tangent components, we normalize them so that the average weight remains equal to one. This leads to the following definition.

\begin{definition}[Normalized adaptive weights]
	\label{def:normalized_weights}
	At the $l$-th iteration, the normalized adaptive weight corresponding to the $k$-th tangent block is defined as
	\begin{equation}
		\omega_{l,k}
		=
		(d+1)
		\frac{s_{l,k}}{\sum_{j=0}^{d}s_{l,j}},
		\qquad k=0,\ldots,d.
		\label{eq:normalized_weights}
	\end{equation}
\end{definition}

The normalization in Definition~\ref{def:normalized_weights} preserves the
average scale of the tangent-gradient blocks. The weighting mechanism redistributes the relative
contributions of the tangent blocks without introducing an additional iteration-dependent global scaling.

The following elementary bound will be useful in Section~\ref{sec:recovery_guarantee}.

\begin{lemma}[Boundedness of the adaptive weights]
	\label{lem:weight_boundedness}
	Suppose that $\max_{0\le k\le d}\|\bm{\Pi}_l^{(k)}\mathcal G_l\|_F\le\Gamma$ on a set of iterates. Then, for all $k=0,\ldots,d$,
	\begin{equation*}
		0<
		\frac{\eta}{\sqrt{\Gamma^2+\eta^2}}
		=:\omega_{\min}
		\le
		\omega_{l,k}
		\le
		d+1
		=:\omega_{\max}.
	\end{equation*}
\end{lemma}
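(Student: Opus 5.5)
The plan is to bound the numerator and denominator in the definition \eqref{eq:normalized_weights} separately, using only $\eta>0$ and the hypothesis $\|\bm{\Pi}_l^{(k)}\mathcal G_l\|_F\le\Gamma$. From \eqref{eq:block_magnitude}, every regularized block magnitude satisfies
\[
\eta\le s_{l,k}\le\sqrt{\Gamma^2+\eta^2},\qquad k=0,\ldots,d,
\]
since $\|\bm{\Pi}_l^{(k)}\mathcal G_l\|_F^2\in[0,\Gamma^2]$. In particular all $s_{l,k}>0$, so the denominator $\sum_{j=0}^d s_{l,j}$ is strictly positive. Hence $\omega_{l,k}$ is well defined and $\omega_{l,k}>0$.

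For the upper bound, I would note that $s_{l,k}$ is one of the summands of $\sum_{j=0}^d s_{l,j}$, and all summands are nonnegative. Therefore $s_{l,k}/\sum_j s_{l,j}\le 1$, which gives $\omega_{l,k}\le d+1=\omega_{\max}$. This step does not use $\Gamma$ at all.

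For the lower bound, I would use the lower estimate on the numerator, $s_{l,k}\ge\eta$, together with the upper estimate on each of the $d+1$ terms of the denominator, $\sum_{j=0}^d s_{l,j}\le (d+1)\sqrt{\Gamma^2+\eta^2}$. Substituting both into \eqref{eq:normalized_weights}, the factor $(d+1)$ cancels and we obtain
\[
\omega_{l,k}\ge (d+1)\frac{\eta}{(d+1)\sqrt{\Gamma^2+\eta^2}}=\frac{\eta}{\sqrt{\Gamma^2+\eta^2}}=\omega_{\min}.
\]
Strict positivity of $\omega_{\min}$ follows from $\eta>0$.

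There is no real obstacle here. The only point needing care is that the bounds must hold uniformly over the whole set of iterates on which the hypothesis is assumed. This holds because $\Gamma$ and $\eta$ do not depend on $l$ or $k$.
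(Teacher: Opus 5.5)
Your proof is correct and follows the same route as the paper: the paper calls this bound elementary, and the argument it writes out in the proof of Lemma~\ref{lem:app_weight_deviation_control} likewise starts from $\eta\le s_{l,k}\le\sqrt{\Gamma^2+\eta^2}$ and then bounds the ratio in \eqref{eq:normalized_weights}. The only difference is that the paper keeps the $k$-th summand separate in the denominator, which gives the sharper bounds $\frac{(d+1)\eta}{\eta+d\sqrt{\Gamma^2+\eta^2}}\le\omega_{l,k}\le\frac{(d+1)\sqrt{\Gamma^2+\eta^2}}{\sqrt{\Gamma^2+\eta^2}+d\eta}$; these imply the two bounds you prove.
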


Based on the adaptive weights defined above, we obtain an adaptive block-weighted tangent operator as follows.

\begin{definition}[Adaptive block-weighted tangent operator]
	\label{def:adaptive_tangent_operator}
	Let $\{\omega_{l,k}\}_{k=0}^{d}$ be the normalized adaptive weights
	defined in Definition~\ref{def:normalized_weights}. The adaptive
	block-weighted tangent operator at $\mathcal X_l$ is defined as
	\begin{equation}
		\bm{\mathscr{P}}_l^{\boldsymbol\omega_l}
		=
		\sum_{k=0}^{d}
		\omega_{l,k}\bm{\Pi}_l^{(k)},
		\label{eq:adaptive_tangent_operator}
	\end{equation}
	The corresponding adaptive block-weighted tangent direction is defined by
	\begin{equation}
		\mathcal Z_l
		=
		\bm{\mathscr{P}}_l^{\boldsymbol\omega_l}\mathcal G_l.
		\label{eq:weighted_gradient}
	\end{equation}
\end{definition}

\begin{remark}
	If all tangent-gradient components have the same Frobenius norm, then
	Definition~\ref{def:normalized_weights} gives
	$\omega_{l,0}=\cdots=\omega_{l,d}=1$. Hence
	$\bm{\mathscr{P}}_l^{\boldsymbol\omega_l}=\bm{\mathscr{P}}_{\mathcal S_l}$, and
	$\mathcal Z_l=\bm{\mathscr{P}}_{\mathcal S_l}\mathcal G_l$.
	Thus, the proposed weighting scheme recovers the standard RGD search direction as a special case.
\end{remark}

Using \eqref{eq:block_gradient_components}, the weighted tangent
gradient admits the block representation
\begin{equation*}
	\mathcal Z_l
	=
	\omega_{l,0}\mathcal D_l
	+
	\sum_{k=1}^{d}
	\omega_{l,k}\mathcal W_l^{(k)}.
\end{equation*}

The adaptive operator \eqref{eq:adaptive_tangent_operator} also admits
a variable-metric interpretation \cite{mishra2016riemannian}. For $\xi,\nu\in \mathcal S_l$, define the
iteration-dependent inner product
\begin{equation}
	g_l^{\boldsymbol\omega_l}(\xi,\nu)
	=
	\sum_{k=0}^{d}
	\omega_{l,k}^{-1}
	\left\langle
	\bm{\Pi}_l^{(k)}\xi,
	\bm{\Pi}_l^{(k)}\nu
	\right\rangle_F.
	\label{eq:adaptive_metric}
\end{equation}
Since $\omega_{l,k}>0$ for every $k$, this defines a positive definite
inner product on $\mathcal S_l$.

For any $\xi\in \mathcal S_l$, using
\eqref{eq:weighted_gradient}--\eqref{eq:adaptive_metric}, we obtain
\begin{align*}
	g_l^{\boldsymbol\omega_l}(\mathcal Z_l,\xi)
	=
	\sum_{k=0}^{d}
	\omega_{l,k}^{-1}
	\left\langle
	\bm{\Pi}_l^{(k)}\mathcal Z_l,
	\bm{\Pi}_l^{(k)}\xi
	\right\rangle_F
	=
	\sum_{k=0}^{d}
	\left\langle
	\bm{\Pi}_l^{(k)}\mathcal G_l,
	\bm{\Pi}_l^{(k)}\xi
	\right\rangle_F
	=
	\left\langle
	\mathcal G_l,\xi
	\right\rangle_F
	=
	Df(\mathcal X_l)[\xi],
\end{align*}
where $Df(\mathcal X_l)[\xi]$ denotes the directional derivative of the objective function $f$ at $\mathcal X_l$ along the tangent direction $\xi$. Therefore, $\mathcal Z_l$ can be viewed as the gradient associated with
the iteration-dependent metric
\eqref{eq:adaptive_metric}. Since the weights depend on the current
gradient, this metric is understood as a local variable metric defined
at each iteration rather than a fixed metric prescribed globally on
$\mathbb M_{\boldsymbol r}$.

Having constructed the adaptive tangent direction $\mathcal Z_l$, we next determine the step size by minimizing the least-squares objective along the search direction $-\mathcal Z_l$. Since the objective is quadratic in $\alpha$, the exact line search admits a closed-form solution. Specifically,
\begin{equation}
	\alpha_l
	=
	\argmin_{\alpha\geq0}
	f(\mathcal X_l-\alpha\mathcal Z_l)
	=
	\frac{
		\left\langle
		\mathcal G_l,\mathcal Z_l
		\right\rangle_F
	}{
		\left\|
		\bm{\mathscr L}(\mathcal Z_l)
		\right\|_2^2
	}
	=
	\frac{
		\displaystyle
		\sum_{k=0}^{d}
		\omega_{l,k}
		\left\|
		\bm{\Pi}_l^{(k)}\mathcal G_l
		\right\|_F^2
	}{
		\left\|
		\bm{\mathscr L}(\mathcal Z_l)
		\right\|_2^2
	},
	\label{eq:exact_stepsize}
\end{equation}
provided that
$\bm{\mathscr L}(\mathcal Z_l)\neq0$.

In general, the numerator in \eqref{eq:exact_stepsize} is not equal to
$\|\mathcal Z_l\|_F^2$. By the orthogonality of the tangent blocks, $\left\|\mathcal Z_l\right\|_F^2=\sum_{k=0}^{d}\omega_{l,k}^2\left\|\bm{\Pi}_l^{(k)}\mathcal G_l\right\|_F^2$.
The equality $\left\langle\mathcal G_l,\mathcal Z_l\right\rangle_F=\|\mathcal Z_l\|_F^2$
holds only in the standard unweighted case. This distinction is
important because $\bm{\mathscr{P}}_l^{\boldsymbol\omega_l}$ is not an orthogonal projection
and will also play a role in the convergence analysis in
Section~\ref{sec:recovery_guarantee}.

Building on the adaptive block-weighted tangent operator and the associated variable metric introduced above, we formulate the block-weighted modewise RGD method for solving \eqref{eq:weighted_objective}. At each iteration, the residual is first mapped back to the ambient tensor space by $\bm{\mathscr L}^*$ to obtain the Euclidean gradient, which is then decomposed into its core and factor tangent components. The normalized adaptive weights are updated from the Frobenius norms of these components, and the corresponding block-weighted tangent direction is constructed according to Definition~\ref{def:adaptive_tangent_operator}. An exact line search is subsequently performed along this direction, followed by a truncated-HOSVD retraction onto $\mathbb M_{\boldsymbol r}$. The complete procedure is summarized in Algorithm~\ref{alg:abw_modewise_rgrad}.

\begin{algorithm}[t]
	\caption{Adaptive Block-Weighted Modewise RGD}
	\label{alg:abw_modewise_rgrad}
	\begin{algorithmic}[1]
		
		\State \textbf{Initialize:} 
		$\mathcal{X}_0 = \bm{\mathscr H}_{\boldsymbol r} \big( \bm{\mathscr L}^*(\boldsymbol{y}) \big)$
		
		\For{$l=0,1,\ldots$}
		
		\State \textbf{Compute gradient:} 
		$\mathcal{G}_l = \bm{\mathscr L}^* \big( \bm{\mathscr L}(\mathcal{X}_l) - \boldsymbol{y} \big)$
		
		\State \textbf{Update adaptive weights:} 
		Compute $\omega_{l,k}$ for $k=0,1,\ldots,d$ according to \eqref{eq:normalized_weights}
		
		\State \textbf{Compute search direction:} 
		Compute $\mathcal{Z}_l = \bm{\mathscr{P}}_l^{\boldsymbol\omega_l}\mathcal{G}_l$ according to \eqref{eq:adaptive_tangent_operator} and \eqref{eq:weighted_gradient}
		
		\State \textbf{Compute step size:} 
		Compute $\alpha_l$ according to \eqref{eq:exact_stepsize}
		
		\State \textbf{Update tensor:} 
		$\mathcal{X}_{l+1} = \bm{\mathscr H}_{\boldsymbol r} \big( \mathcal{X}_l - \alpha_l \mathcal{Z}_l \big)$
		
		\EndFor
		
		\State \textbf{Output:} $\mathcal{X}_{l+1}$ when the stopping criterion is met.
		
	\end{algorithmic}
\end{algorithm}

\subsection{Computation in Adaptive Block-Weighted RGD}

\subsubsection{Computation of the search direction}

We need to compute the $d+1$ tangent-gradient components introduced in \eqref{eq:block_gradient_components}. Because the direct sum decomposition of the tangent space $\mathcal{S}_l$ in \eqref{eq:tangent_direct_sum} is mutually orthogonal, the orthogonal projection onto $\mathcal{S}_l$ can be decomposed into independent least-squares subproblems for the core tensor component and each factor matrix component. 

Let $\mathcal{Y} \in \mathbb{R}^{n_1\times\cdots\times n_d}$ be an arbitrary tensor, its orthogonal projection onto $\mathcal{S}_l$ is given by the solution to the following optimization problem:
\begin{equation}
	\bm{\mathscr{P}}_{\mathcal S_l}\mathcal{Y}
	=
	\argmin_{\xi\in \mathcal{S}_l}
	\|\mathcal{Y}-\xi\|_F^2.
	\label{eq:global_tangent_ls}
\end{equation}
An arbitrary element $\xi \in \mathcal{S}_l$ admits the representation
\begin{equation}
	\xi
	=
	\dot{\mathcal B}
	\times_{i\in[d]}V_l^{(i)}
	+
	\sum_{k=1}^{d}
	\mathcal B_l
	\times_{j\in[d]\setminus\{k\}}V_l^{(j)}
	\times_k\dot V^{(k)},
	\label{eq:tangent_representation}
\end{equation}
where $V_l^{(k)},\dot V^{(k)}\in \mathbb{R}^{n_k \times r_k}$ satisfy $\left(V_l^{(k)}\right)^T\dot V^{(k)}=0$ for $k=1,\ldots,d$.

The $d+1$ terms in \eqref{eq:tangent_representation}, consisting of the core-variation term $\dot{\mathcal B}\times_{i\in[d]}V_l^{(i)}$ and the $d$ factor-variation terms $\mathcal B_l\times_{j\in[d]\setminus\{k\}}V_l^{(j)}\times_k\dot V^{(k)}$, $k=1,\ldots,d$, are mutually orthogonal. Consequently, the global least-squares problem \eqref{eq:global_tangent_ls} decouples into $d+1$ independent subproblems \cite{cai2020provable}.

\paragraph{Core Component}
The core variation $\dot{\mathcal{B}}_l$ is obtained by solving
\begin{equation*}
	\dot{\mathcal B}_l
	=
	\argmin_{\dot{\mathcal B}\in\mathbb R^{r_1\times\cdots\times r_d}}
	\left\|
	\mathcal Y
	-
	\dot{\mathcal B}
	\times_{i\in[d]}V_l^{(i)}
	\right\|_F^2.
\end{equation*}
The closed-form solution is $\dot{\mathcal B}_l=\mathcal Y\times_{j=1}^{d}\left(V_l^{(j)}\right)^T$.
Consequently, the projection onto the core tangent subspace is $\bm{\Pi}_l^{(0)}\mathcal Y=\dot{\mathcal B}_l\times_{j=1}^{d}V_l^{(j)}$.

\paragraph{Factor Components}
For $k\in[d]$, we define the Kronecker product of the factor matrices (excluding the $k$-th mode) as
\begin{equation*}
	\bar V_l^{(k)}
	=
	V_l^{(d)}
	\otimes
	\cdots
	\otimes
	V_l^{(k+1)}
	\otimes
	V_l^{(k-1)}
	\otimes
	\cdots
	\otimes
	V_l^{(1)},
\end{equation*}
The $k$-th factor variation is obtained by solving the constrained matrix least-squares problem
\begin{equation*}
	\dot V_l^{(k)}
	=
	\argmin_{\substack{
			\dot V^{(k)}\in\mathbb R^{n_k\times r_k}\\
			\left(V_l^{(k)}\right)^T\dot V^{(k)}=0}}
	\left\|
	\mathcal Y_{(k)}
	-
	\dot V^{(k)}
	(\mathcal B_l)_{(k)}
	\left(\bar V_l^{(k)}\right)^T
	\right\|_F^2.
\end{equation*}
Let $\bm{\mathscr P}_{V_l^{(k)\perp}}=I_{n_k}-V_l^{(k)}\left(V_l^{(k)}\right)^T$
denote the orthogonal projector onto the orthogonal complement of the column space of $V_l^{(k)}$. The closed-form solution is
\begin{equation}
	\dot V_l^{(k)}
	=
	\bm{\mathscr P}_{V_l^{(k)\perp}}
	\mathcal Y_{(k)}
	\bar V_l^{(k)}
	\left(\mathcal B_l\right)_{(k)}^\dagger,
	\qquad
	k=1,\ldots,d,
	\label{eq:factor_closed_form}
\end{equation}
where $\mathcal{Y}_{(k)}$ and $(\mathcal{B}_l)_{(k)}$ denote the mode-$k$ matricizations of $\mathcal{Y}$ and $\mathcal{B}_l$, respectively, and $(\mathcal{B}_l)_{(k)}^\dagger$ is the Moore--Penrose pseudoinverse of $(\mathcal{B}_l)_{(k)}$.
The corresponding projection onto the $k$-th factor tangent block is
$\bm{\Pi}_l^{(k)}\mathcal Y=\mathcal B_l\times_{j\in[d]\setminus\{k\}}V_l^{(j)}\times_k \dot V_l^{(k)}$.

By setting $\mathcal Y=\mathcal G_l$, we obtain the core and factor tangent-gradient components
$\mathcal D_l:=\bm{\Pi}_l^{(0)}\mathcal G_l
=\dot{\mathcal B}_l\times_{i\in[d]}V_l^{(i)}$
and
$\mathcal W_l^{(k)}:=\bm{\Pi}_l^{(k)}\mathcal G_l
=\mathcal B_l\times_{j\in[d]\setminus\{k\}}V_l^{(j)}
\times_k\dot V_l^{(k)}$
for $k=1,\ldots,d$. Consequently, the adaptive block-weighted tangent direction admits the representation
\begin{equation*}
	\mathcal Z_l
	=
	\omega_{l,0}
	\dot{\mathcal B}_l
	\times_{i\in[d]}V_l^{(i)}
	+
	\sum_{k=1}^{d}
	\omega_{l,k}
	\mathcal B_l
	\times_{j\in[d]\setminus\{k\}}V_l^{(j)}
	\times_k\dot V_l^{(k)}.
\end{equation*}

\begin{remark}[Computation and scaling of the block magnitudes]
	\label{rem:block_magnitude}
	
	The adaptive weights introduced in Definition~\ref{def:normalized_weights} are determined by the Frobenius norms of the tangent-gradient blocks. These block norms can be efficiently evaluated without forming the corresponding full tangent tensors in the ambient space. For the core component $\mathcal D_l = \dot{\mathcal B}_l \times_{i\in[d]}V_l^{(i)}$, exploiting the orthonormality of the factor matrices yields $\|\mathcal D_l\|_F = \|\dot{\mathcal B}_l\|_F$. For the factor components, the $k$-th component $\mathcal W_l^{(k)} = \mathcal B_l \times_{j\in[d]\setminus\{k\}}V_l^{(j)} \times_k\dot V_l^{(k)}$ has the mode-$k$ matricization $(\mathcal W_l^{(k)})_{(k)}=\dot V_l^{(k)}(\mathcal B_l)_{(k)}(\bar V_l^{(k)})^T$.
	Because $\bar V_l^{(k)}$ has orthonormal columns, it follows that $\|\mathcal W_l^{(k)}\|_F = \| \dot V_l^{(k)}(\mathcal B_l)_{(k)} \|_F$.
	Thus, the block magnitudes in \eqref{eq:block_magnitude} can be computed as
	$s_{l,0}=\sqrt{\|\dot{\mathcal B}_l\|_F^2+\eta^2}$ and
	$s_{l,k}=\sqrt{\|\dot V_l^{(k)}(\mathcal B_l)_{(k)}\|_F^2+\eta^2}$,
	$k=1,\ldots,d$.
	This formulation avoids constructing the full ambient tangent tensors
	when computing the adaptive weights. Moreover,
	$\dot{\mathcal B}_l$ and $\dot V_l^{(k)}$ belong to different blocks
	of the Tucker parametrization, so their parameter-space norms are not
	directly comparable. The quantities
	$\|\mathcal D_l\|_F$ and $\|\mathcal W_l^{(k)}\|_F$ instead measure
	the corresponding perturbations in the common ambient Frobenius norm.
	
	Since the adaptive weighting is applied after the orthogonal
	tangent-space projection, the least-squares subproblems for computing
	$\dot{\mathcal B}_l$ and $\dot V_l^{(k)}$ are unchanged. The additional
	work consists only of evaluating the $d+1$ block magnitudes and
	normalizing the corresponding scalar weights.
\end{remark}

\subsubsection{Computation of the Retraction}
\label{subsubsec:efficient_retraction}

We now detail the computation of the retraction step in Algorithm~\ref{alg:abw_modewise_rgrad}. Let $\mathcal Y_l = \mathcal X_l - \alpha_l\mathcal Z_l$ denote the intermediate tensor in the tangent space $\mathcal{S}_l$. To obtain the next iterate, we retract $\mathcal Y_l$ from the tangent space $\mathcal{S}_l$ back to the smooth multilinear-rank manifold $\mathbb M_{\boldsymbol r}$ via the truncated HOSVD operator, i.e., $\mathcal X_{l+1} = \bm{\mathscr H}_{\boldsymbol r}(\mathcal Y_l)$.
A direct implementation would require applying the truncated HOSVD
to the full ambient tensor $\mathcal Y_l$. Using the tangent-space
representation, the same retraction can instead be computed from a
reduced core tensor \cite{cai2020provable}. Since both $\mathcal X_l$ and the search direction $\mathcal Z_l$ reside in $\mathcal{S}_l$, the updated tensor $\mathcal Y_l$ belongs to $\mathcal{S}_l$. Consequently, its multilinear rank is bounded by $2\boldsymbol{r}$ (i.e., $\operatorname{mulrank}(\mathcal Y_l) \preceq 2\boldsymbol r$). Specifically, expanding the terms yields
\begin{align*}
	\mathcal Y_l
	&=
	\mathcal X_l-\alpha_l\mathcal Z_l
	=
	\mathcal X_l-\alpha_l\Biggl(
	\omega_{l,0}\dot{\mathcal B}_l\times_{i\in[d]}V_l^{(i)}
	+
	\sum_{k=1}^d
	\omega_{l,k}\,
	\mathcal B_l
	\times_{j\in[d]\setminus\{k\}}V_l^{(j)}
	\times_k\dot V_l^{(k)}
	\Biggr)
	\nonumber\\
	&=
	\left(
	\mathcal B_l-\alpha_l\omega_{l,0}\dot{\mathcal B}_l
	\right)
	\times_{i\in[d]}V_l^{(i)}
	-
	\alpha_l\sum_{k=1}^d
	\omega_{l,k}\,
	\mathcal B_l
	\times_{j\in[d]\setminus\{k\}}V_l^{(j)}
	\times_k\dot V_l^{(k)}
	\coloneqq
	\mathcal C_l
	\times_{i\in[d]}
	\begin{bmatrix}
		V_l^{(i)} & \dot V_l^{(i)}
	\end{bmatrix}.
\end{align*}
Thus, $\mathcal Y_l$ admits an augmented Tucker representation parameterized by a sparse block core tensor $\mathcal C_l \in \mathbb R^{2r_1\times\cdots\times2r_d}$. The principal block of this core tensor is given by $[\mathcal C_l]_{1:r_1,\ldots,1:r_d} = \mathcal B_l - \alpha_l\omega_{l,0}\dot{\mathcal B}_l$. Furthermore, for each $k=1,\ldots,d$, the block corresponding to the variation of the $k$-th factor matrix is $[\mathcal C_l]_{1:r_1,\ldots,r_k+1:2r_k,\ldots,1:r_d} = -\alpha_l\omega_{l,k}\mathcal B_l$.
The weights $\omega_{l,k}$ scale only the nonzero blocks of
$\mathcal C_l$ and do not introduce additional factor directions.
Hence the multilinear-rank bound is unchanged.

To compute $\bm{\mathscr H}_{\boldsymbol r}(\mathcal{Y}_l)$ efficiently, we first compute the thin QR factorizations of the augmented factor matrices:
\begin{equation*}
	\begin{bmatrix}
		V_l^{(i)} & \dot V_l^{(i)}
	\end{bmatrix}
	=
	Q_l^{(i)}R_l^{(i)},
	\quad
	i=1,\ldots,d.
\end{equation*}
Using these orthogonal factors, $\mathcal Y_l$ can be equivalently expressed as
\begin{equation*}
	\mathcal Y_l
	=
	\mathcal C_l
	\times_{i\in[d]}
	\left(
	Q_l^{(i)}R_l^{(i)}
	\right)
	=
	\left(
	\mathcal C_l
	\times_{i\in[d]}R_l^{(i)}
	\right)
	\times_{i\in[d]}Q_l^{(i)}
	\coloneqq
	\widetilde{\mathcal C}_l
	\times_{i\in[d]}Q_l^{(i)}.
\end{equation*}
where $\widetilde{\mathcal C}_l = \mathcal C_l \times_{i\in[d]}R_l^{(i)} \in \mathbb R^{2r_1\times\cdots\times2r_d}$. Since the matrices $Q_l^{(i)}$ possess orthonormal columns for all $i=1,\ldots,d$, the dominant mode subspaces of $\mathcal Y_l$ exactly correspond to the leading singular vectors of the respective matricizations of the reduced core tensor $\widetilde{\mathcal C}_l$. Specifically, for each mode $i=1,\ldots,d$, let $\widetilde U_l^{(i)}$ denote the matrix of left singular vectors of
$(\widetilde{\mathcal C}_l)_{(i)}$.
The updated factor matrices are obtained via
\begin{equation*}
	V_{l+1}^{(i)}
	=
	Q_l^{(i)}
	\left[
	\widetilde U_l^{(i)}
	\right]_{:,1:r_i},
	\qquad
	i=1,\ldots,d.
\end{equation*}
The updated core tensor can be computed entirely in the reduced coordinates as
\begin{equation*}
	\mathcal B_{l+1}
	=
	\widetilde{\mathcal C}_l
	\times_{i\in[d]}
	\left(
	\left[
	\widetilde U_l^{(i)}
	\right]_{:,1:r_i}
	\right)^T.
\end{equation*}
Consequently, the final retracted tensor is formulated as 
\begin{equation*}
	\mathcal X_{l+1}=\mathcal B_{l+1}\times_{i\in[d]}V_{l+1}^{(i)}=\bm{\mathscr H}_{\boldsymbol r}(\mathcal Y_l).
\end{equation*}
Thus, the truncated HOSVD need only be applied to the reduced tensor
$\widetilde{\mathcal C}_l\in
\mathbb R^{2r_1\times\cdots\times2r_d}$, together with $d$ thin QR
factorizations of the augmented factor matrices.

The adaptive weighting leaves the reduced-core retraction unchanged
and adds only the evaluation of $d+1$ block magnitudes and their scalar
normalization. The measurement operator and its adjoint are applied
through the modewise implementations of $\bm{\mathscr L}$ and $\bm{\mathscr L}^*$.
Hence the search direction retains multilinear rank at most
$2\boldsymbol r$, and the truncated HOSVD is performed on a tensor of
size $2r_1\times\cdots\times2r_d$.

\section{Convergence Analysis}
\label{sec:recovery_guarantee}

In this section, we establish the local linear convergence guarantee of Algorithm~\ref{alg:abw_modewise_rgrad}. In particular, we prove that Algorithm~\ref{alg:abw_modewise_rgrad} converges linearly to the underlying unknown tensor $\mathcal{T}$, provided that the modewise measurement operator satisfies the TRIP. Furthermore, we establish the sampling complexity of the proposed approach.

\subsection{Modewise Tensor Restricted Isometry Property}
\label{subsec:modewise_trip_2r}
For the explicit modewise TRIP and sampling-complexity bounds, we restrict to the balanced setting $n_i=n$, $r_i=r$ for $i\in[d]$.
We first specify the order of TRIP required for the subsequent theoretical analysis. In the adaptive block-weighted tangent operator $\bm{\mathscr{P}}_l^{\boldsymbol\omega_l} = \sum_{k=0}^{d}\omega_{l,k}\bm{\Pi}_l^{(k)}$ introduced in Section~\ref{sec:adaptive_modewise_rgrad}, $\bm{\Pi}_l^{(k)}$ is the orthogonal projector onto the $k$-th component of $\mathcal S_l$. Since scalar multiplication by $\omega_{l,k}$ does not alter the underlying subspace, then $\omega_{l,k}\bm{\Pi}_l^{(k)}\mathcal{G}_l\in \mathcal S_l^{(k)}$ for $k=0,\ldots,d$. Consequently, the weighted projected gradient satisfies $\mathcal{Z}_l = \bm{\mathscr{P}}_l^{\boldsymbol\omega_l}\mathcal{G}_l \in \mathcal S_l$ with $\operatorname{mulrank}(\mathcal{Z}_l) \preceq 2\boldsymbol{r}$, which implies that the intermediate tensor $\mathcal Y_l = \mathcal{X}_l-\alpha_l\mathcal{Z}_l$ has a multilinear rank of at most $2\boldsymbol{r}$. The proposed adaptive weighting scheme does not increase the multilinear rank order required by the TRIP. 

Next, we describe the modewise measurement operators that satisfy this TRIP assumption. We first consider the one-stage modewise measurement operator $\bm{\mathscr L}_1$ introduced in Section~\ref{subsec:modewise_measurement_operators}, with modewise measurement matrices $A_i\in\mathbb R^{m\times n^q}$, $i=1,\ldots,d'$. The following proposition gives the corresponding TRIP guarantee for tensors of multilinear rank at most $2\boldsymbol r$.

\begin{proposition}[One-stage modewise TRIP at rank $2\boldsymbol r$, \cite{haselby2023modewise}]
	\label{prop:one_stage_trip_2r}
	Suppose $r\ge 1$, $q\ge 2$, and $0 < \zeta < 1$, and define $d':=d/q$. If each measurement matrix $A_i$ satisfies the $\operatorname{RIP}(\varepsilon,\mathcal{S}_{1,2})$ property, and $\delta_{2\boldsymbol r}=4d'(2r)^d\varepsilon<1$, then the first-stage measurement map $\bm{\mathscr L}_1$ respects the $\operatorname{TRIP}(\delta_{2\boldsymbol r},2\boldsymbol r)$ property, i.e.,
	\begin{equation*}
		(1-\delta_{2\boldsymbol r})\|\mathcal{X}\|_F^2 \le \|\bm{\mathscr L}_1(\mathcal{X})\|_2^2 \le (1+\delta_{2\boldsymbol r})\|\mathcal{X}\|_F^2
	\end{equation*}
	holds for all tensors $\mathcal{X}$ with multilinear rank at most $2\boldsymbol{r}$.
	
	Furthermore, this conclusion holds with probability at least $1-\zeta$ provided that either of the following measurement ensembles is utilized:
	\begin{enumerate}[label=(\roman*)]
		\item If the entries of each $A_i$ are properly normalized i.i.d.sub-Gaussian random variables, it is sufficient that
		\begin{equation*}
			m\ge
			C\delta_{2\boldsymbol r}^{-2}(2r)^{2d}
			\max\left\{
			\frac{nd^2\log q}{q},
			\frac{d^2}{q^2}\log\left(\frac{d}{q\zeta}\right)
			\right\}.
		\end{equation*}
		
		\item If each $A_i$ is a SORS matrix, it is sufficient that
		\begin{equation}
			m\ge
			C_1\delta_{2\boldsymbol r}^{-2}(2r)^{2d}
			\frac{nd^2\log q}{q}\,\Psi_1,
			\label{eq:one_stage_sors_2r}
		\end{equation}
		where the logarithmic factor $\Psi_1$ is defined as
		\begin{equation*}
			\Psi_1
			:=
			\log\left(\frac{2d}{q\zeta}\right)
			\log\left(\frac{2en^qd}{q\zeta}\right)
			\log^2\left[
			C_2\delta_{2\boldsymbol r}^{-2}(2r)^{2d}
			\frac{nd^2\log q}{q}
			\log\left(\frac{2d}{q\zeta}\right)
			\right].
		\end{equation*}
	\end{enumerate}
\end{proposition}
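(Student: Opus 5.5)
The deterministic part follows by applying Proposition~\ref{prop:first_stage_modewise_trip} with the rank tuple $2\boldsymbol r=(2r,\ldots,2r)$ in place of $\boldsymbol r$. Its proof depends on the rank only through the Tucker expansion \eqref{eq:reshaped_tensor_expansion}. Any $\mathcal X$ with $\operatorname{mulrank}(\mathcal X)\preceq 2\boldsymbol r$ admits an orthogonal Tucker decomposition with factors $V^{(i)}\in\mathbb R^{n\times 2r}$. Its reshaping $\bm{\mathscr R}(\mathcal X)$ then expands over orthonormal Kronecker factor families of size $(2r)^q$ in each of the $d'$ modes, with a reshaped core $\widetilde{\mathcal B}$ having $(2r)^d$ entries.

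The next step is to retrace the argument of \cite{haselby2023modewise}, Theorem~3.3, with $r$ replaced by $2r$. Expanding $\|\bm{\mathscr A}(\bm{\mathscr R}(\mathcal X))\|_2^2$ via \eqref{eq:first_stage_expansion} yields diagonal terms and cross terms. The $\operatorname{RIP}(\varepsilon,\mathcal S_{1,2})$ of each $A_i$ on $\mathcal S_1$ controls the norms $\|A_i\widetilde{\boldsymbol v}^{(i)}_j\|_2^2$. For the inner products of distinct orthonormal factor vectors, polarization on $\mathcal S_2$ bounds $|\langle A_i\widetilde{\boldsymbol v}_j,A_i\widetilde{\boldsymbol v}_{j'}\rangle|\le\varepsilon$. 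Telescoping over the $d'$ modes and using Cauchy--Schwarz, $\sum|\widetilde{\mathcal B}|\le (2r)^{d/2}\|\widetilde{\mathcal B}\|_F$, bounds the accumulated deviation by $4d'(2r)^d\varepsilon\|\mathcal X\|_F^2=\delta_{2\boldsymbol r}\|\mathcal X\|_F^2$. The hypothesis $r\ge2$ in the cited result is only used to simplify constants and is automatically met by $2r\ge2$, which is why $r\ge1$ suffices here. Vectorization preserves the norm, so the bound transfers to $\bm{\mathscr L}_1$.

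For the probabilistic statements, set $\varepsilon=\delta_{2\boldsymbol r}/(4d'(2r)^d)$, so that $\varepsilon^{-2}\asymp\delta_{2\boldsymbol r}^{-2}(2r)^{2d}d^2/q^2$. The modewise RIP results of \cite{haselby2023modewise} state that each $A_i$ satisfies $\operatorname{RIP}(\varepsilon,\mathcal S_{1,2})$ with failure probability $\zeta/d'$ under the following conditions.
\begin{itemize}
	\item For sub-Gaussian matrices, one needs $m\gtrsim\varepsilon^{-2}\max\{qn\log q,\log(d'/\zeta)\}$. This comes from a covering-number bound for $\mathcal S_{1,2}$ of order $\exp(Cqn\log q)$, combined with a union bound.
	\item For SORS matrices, one needs a bound of the same leading order multiplied by the polylogarithmic factor $\Psi_1$. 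This follows from the RIP-to-JL transfer for SORS and a chaining or union bound over a net of $\mathcal S_{1,2}$.
\end{itemize}
Substituting $\varepsilon$ and $d'=d/q$ turns $\varepsilon^{-2}qn\log q$ into $\delta_{2\boldsymbol r}^{-2}(2r)^{2d}nd^2\log q/q$ and $\varepsilon^{-2}\log(d'/\zeta)$ into $\delta_{2\boldsymbol r}^{-2}(2r)^{2d}(d^2/q^2)\log(d/(q\zeta))$, which are exactly the displayed bounds. A union bound over the $d'$ matrices then gives overall success probability at least $1-\zeta$.

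The main obstacle is the bookkeeping of the SORS logarithmic factor $\Psi_1$. The inner log-squared term contains $m$ itself, so the conditions must be matched to the cited SORS theorem carefully, including its internal failure-probability parameter $\zeta/(2d')$. I would track these constants directly against the statement in \cite{haselby2023modewise} rather than rederive them.
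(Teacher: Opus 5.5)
Your proposal is correct and matches how the paper obtains this result. The paper gives no separate proof: it is Theorem~3.3 of \cite{haselby2023modewise} (Proposition~\ref{prop:first_stage_modewise_trip}) with $r$ replaced by $2r$, which is why $2r\ge2$ covers $r\ge1$. This is combined with the cited sub-Gaussian/SORS RIP bounds on $\mathcal S_{1,2}$ at $\varepsilon=\delta_{2\boldsymbol r}/(4d'(2r)^d)$ and a union bound over the $d'$ matrices, and your substitution reproduces the displayed bounds.

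One small correction concerns your remark that the log-squared term in $\Psi_1$ contains $m$. It does not: up to the constant $C_2$, its argument is exactly $\varepsilon^{-2}qn\log q\,\log(2d'/\zeta)$ under your substitution. So the SORS condition is explicit and needs no self-referential bookkeeping.
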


We next consider the two-stage modewise measurement operator $\bm{\mathscr L}_2$ introduced in Section~\ref{subsec:modewise_measurement_operators}, with second-stage matrix $A_{\mathrm{2nd}}\in\mathbb R^{M_{\mathrm{2nd}}\times m^{d'}}$. The following proposition gives the corresponding TRIP guarantee for tensors of multilinear rank at most $2\boldsymbol r$.

\begin{proposition}[Two-stage modewise TRIP at rank $2\boldsymbol r$, \cite{haselby2023modewise}]
	\label{prop:two_stage_trip_2r}
	Suppose $r\ge 1$, $q\ge 2$, and $0 < \zeta < 1$, and define $d':=d/q$. Assume that each first-stage measurement matrix $A_i$ satisfies the $\operatorname{RIP}(\varepsilon,\mathcal{S}_{1,2})$ property, and let $\delta_{2\boldsymbol r}=12d'(2r)^d\varepsilon<1$. If the second-stage measurement matrix $A_{\mathrm{2nd}}$ satisfies the associated RIP condition with reshaped rank $((2r)^q,\ldots,(2r)^q)$, then the two-stage measurement map $\bm{\mathscr L}_2$ respects the $\operatorname{TRIP}(\delta_{2\boldsymbol r},2\boldsymbol r)$ property, i.e.,
	\begin{equation*}
		(1-\delta_{2\boldsymbol r})\|\mathcal{X}\|_F^2 \le \|\bm{\mathscr L}_2(\mathcal{X})\|_2^2 \le (1+\delta_{2\boldsymbol r})\|\mathcal{X}\|_F^2
	\end{equation*}
	holds for all tensors $\mathcal{X}$ with multilinear rank at most $2\boldsymbol{r}$.
	
	Furthermore, this conclusion holds with probability at least $1-\zeta$ provided that either of the following measurement ensembles is utilized:
	\begin{enumerate}[label=(\roman*)]
		\item If the measurement matrices consist of properly normalized i.i.d.\ sub-Gaussian random variables, it is sufficient that
		\begin{equation*}
			m\ge
			C\delta_{2\boldsymbol r}^{-2}(2r)^{2d}
			\max\left\{
			\frac{nd^2\log q}{q},
			\frac{d^2}{q^2}\log\left(\frac{2d}{q\zeta}\right)
			\right\},
		\end{equation*}
		and
		\begin{align*}
			M_{\mathrm{2nd}}
			\ge C\delta_{2\boldsymbol r}^{-2}
			\max\Bigg\{&
			\left(
			\frac{(2r)^d q+d m(2r)^q}{q}
			\right)
			\log\left(\frac dq+1\right)
			\nonumber\\
			&+
			\frac{d m(2r)^q}{q}
			\log\left(1+\delta_{2\boldsymbol r}(2r)^d\right)
			+
			\frac{d^2m(2r)^q\delta_{2\boldsymbol r}}{q^2},
			\log\left(\frac{2}{\zeta}\right)
			\Bigg\}.
		\end{align*}
		
		\item If the measurement matrices are SORS matrices, the first-stage requirement on $m$ is identical to \eqref{eq:one_stage_sors_2r}. For the second stage, it is sufficient that
		\begin{align}
			M_{\mathrm{2nd}}
			\ge C\delta_{2\boldsymbol r}^{-2}
			\Bigg[&
			\left(
			\frac{(2r)^d q+d m(2r)^q}{q}
			\right)
			\log\left(\frac dq+1\right)
			\nonumber\\
			&+
			\frac{d m(2r)^q}{q}
			\log\left(1+\delta_{2\boldsymbol r}(2r)^d\right)
			+
			\frac{d^2m(2r)^q\delta_{2\boldsymbol r}}{q^2}
			\Bigg]\Psi_2,
			\label{eq:two_stage_second_sors_2r}
		\end{align}
		where the logarithmic factor $\Psi_2$ is defined as
		\begin{align*}
			\Psi_2
			:={}&
			\log^2\Bigg(
			\frac{c_1}{\delta_{2\boldsymbol r}^2}
			\log\left(\frac4\zeta\right)
			\Bigg[
			\left(
			\frac{(2r)^d q+d m(2r)^q}{q}
			\right)
			\log\left(\frac dq+1\right)
			\nonumber\\
			&\hspace{32mm}
			+\frac{d m(2r)^q}{q}
			\log\left(1+\delta_{2\boldsymbol r}(2r)^d\right)
			+\frac{d^2m(2r)^q\delta_{2\boldsymbol r}}{q^2}
			\Bigg]
			\Bigg)\nonumber\\
			&\times
			\log\left(\frac4\zeta\right)
			\log\left(\frac{4em}{\zeta}\right).
		\end{align*}
	\end{enumerate}
\end{proposition}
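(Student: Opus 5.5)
Proposition~\ref{prop:two_stage_trip_2r} is essentially Proposition~\ref{prop:two_stage_modewise_trip} with $\boldsymbol r$ replaced by $2\boldsymbol r$, combined with the probabilistic RIP bounds of \cite{haselby2023modewise} for the component matrices. I would argue in two layers. First comes a deterministic reduction showing that the stated RIP conditions imply $\operatorname{TRIP}(\delta_{2\boldsymbol r},2\boldsymbol r)$. Second comes a probabilistic layer showing that the sampling conditions in (i) and (ii) guarantee those RIP conditions simultaneously with probability at least $1-\zeta$.

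\textbf{Deterministic layer.} Replace $r$ by $2r$ throughout Proposition~\ref{prop:two_stage_modewise_trip}, so the reshaped rank becomes $\widetilde{\boldsymbol r}=((2r)^q,\ldots,(2r)^q)$ and $\delta_{2\boldsymbol r}=12d'(2r)^d\varepsilon$. For a tensor $\mathcal X$ with $\operatorname{mulrank}(\mathcal X)\preceq 2\boldsymbol r$, normalize so that $\|\mathcal X\|_F=1$; the general case then follows by homogeneity.
\begin{enumerate}
\item Proposition~\ref{prop:first_stage_nearly_orthogonal}, applied at rank $2\boldsymbol r$, places $\bm{\mathscr A}(\bm{\mathscr R}(\mathcal X))$ in $\mathcal N_{1+\varepsilon,\varepsilon,1-\delta_{2\boldsymbol r}/3,\widetilde{\boldsymbol r}}$.
\item From the same argument, via Proposition~\ref{prop:first_stage_modewise_trip} with the constant $4d'(2r)^d\varepsilon=\delta_{2\boldsymbol r}/3$, the first stage distorts norms by at most a factor $1\pm\delta_{2\boldsymbol r}/3$.
\item The second-stage RIP with constant $\delta_{2\boldsymbol r}/3$ on the vectorized nearly orthogonal set adds a further factor $1\pm\delta_{2\boldsymbol r}/3$.
\item Multiplying the two factors gives $(1+\delta/3)^2\le 1+\delta$ and $(1-\delta/3)^2\ge1-\delta$ for $\delta<1$, which yields $\operatorname{TRIP}(\delta_{2\boldsymbol r},2\boldsymbol r)$.
\end{enumerate}
One check is needed here: the cited results assume $r\ge2$, while the statement allows $r\ge1$. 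Since $2r\ge2$ always holds, the rank-$2\boldsymbol r$ specialization is legitimate.

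\textbf{Probabilistic layer.} I would invoke the ensemble-specific results of \cite{haselby2023modewise}.
\begin{enumerate}
\item For the first stage, each $A_i$ must satisfy $\operatorname{RIP}(\varepsilon,\mathcal S_{1,2})$. For sub-Gaussian matrices this is a Gaussian-width/covering argument over $\mathcal S_{1,2}$, whose covering number scales like $(C/\epsilon)^{O(nq)}$. For SORS matrices it follows from RIP-to-JL transfer with random signs, which is the source of the $\Psi_1$ factor.
\item Set $\varepsilon=\delta_{2\boldsymbol r}/(12d'(2r)^d)$, so that $\varepsilon^{-2}\sim\delta_{2\boldsymbol r}^{-2}(2r)^{2d}d^2/q^2$.
\item Take a union bound over the $d'$ matrices with failure probability $\zeta/(2d')$ each. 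This produces the $m$ bounds, including the $\log(2d/(q\zeta))$ term.
\item For the second stage, bound the covering number of $\operatorname{vec}(\mathcal N_{1+\varepsilon,\varepsilon,1-\delta/3,\widetilde{\boldsymbol r}})$. The core contributes $(2r)^d$ parameters and the factors contribute $d'm(2r)^q$ parameters. This gives the bracketed quantity in the $M_{\mathrm{2nd}}$ bounds, with its $\log(d/q+1)$ and $\log(1+\delta(2r)^d)$ terms.
\item Apply the sub-Gaussian or SORS embedding theorem with failure probability $\zeta/2$, then take a union bound with the first stage to obtain total failure probability at most $\zeta$.
\end{enumerate}

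The main obstacle is the covering-number estimate for the nearly orthogonal set at rank $(2r)^q$: tracking how the near-orthogonality parameter $\mu=\varepsilon$ and the lower bound $\theta$ enter the net size, so that exactly the stated dimension terms appear. I would take this estimate directly from the corresponding lemma of \cite{haselby2023modewise} with $r\mapsto 2r$, and only verify that its hypotheses hold at the new rank.
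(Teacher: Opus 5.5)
Your proposal is correct and follows the paper's route. The paper gives no separate argument: it specializes Proposition~\ref{prop:two_stage_modewise_trip} and the ensemble bounds of \cite{haselby2023modewise} to rank $2\boldsymbol r$, with reshaped rank $((2r)^q,\ldots,(2r)^q)$ and $\varepsilon=\delta_{2\boldsymbol r}/(12d'(2r)^d)$. Your deterministic chain (nearly-orthogonal membership, two factors of $1\pm\delta_{2\boldsymbol r}/3$ whose product lies in $1\pm\delta_{2\boldsymbol r}$), your probability split $d'\cdot\zeta/(2d')+\zeta/2=\zeta$, and your check that $2r\ge 2$ meets the cited $r\ge2$ hypothesis are exactly what that specialization requires.
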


\subsection{Main Theorems}
\label{subsec:main_local_convergence}

We now establish the local linear convergence of the proposed adaptive block-weighted RGD iteration. To quantify the effect of the adaptive weighting, we first measure the deviation of the adaptive weights from the unit weights used in the standard RGD direction. Specifically, we define the maximal weight deviation at the $l$-th iteration as $\rho_l:=\max_{0\le k\le d}|\omega_{l,k}-1|$.
To bound this deviation, let $\Gamma\ge0$ be an upper bound on the Frobenius norms of the tangent-gradient components, and define $S_{\Gamma}:=\sqrt{\Gamma^2+\eta^2}$. Then $S_{\Gamma}$ provides an upper bound on the corresponding regularized block magnitudes, and $\rho_\eta(\Gamma):=\frac{d(S_{\Gamma}-\eta)}{S_{\Gamma}+d\eta}$ gives a uniform upper bound on the deviation of the normalized adaptive weights from one.
The weight deviation can further be controlled in terms of the current reconstruction error. As shown in Appendix~\ref{app:complete_convergence_proofs}, the $\operatorname{TRIP}(\delta_{2\boldsymbol r},2\boldsymbol r)$ condition yields an upper bound on the tangent-gradient components. Hence, whenever $\|\mathcal X_l-\mathcal T\|_F\le R$, we have $\rho_l\le\rho_\eta\!\left((1+\delta_{2\boldsymbol r})R\right)$. The auxiliary geometric estimates, the TRIP-based bounds, and the complete convergence proof are provided in Appendix~\ref{app:complete_convergence_proofs}.

\begin{theorem}[Local linear convergence of adaptive block-weighted modewise RGD]
	\label{thm:weighted_local_linear_convergence}
	Let $\mathcal T\in\mathbb M_{\boldsymbol r}$ and
	$\boldsymbol y=\bm{\mathscr L}(\mathcal T)$. Assume that the measurement
	operator $\bm{\mathscr L}$ satisfies
	$\operatorname{TRIP}(\delta_{2\boldsymbol r},2\boldsymbol r)$ with
	$\delta_{2\boldsymbol r}\in(0,1)$.
	Fix a local radius $R>0$ and an admissible weight-deviation level
	$\rho_\star\in(0,1)$. Suppose that the initial iterate
	$\mathcal X_0\in\mathbb M_{\boldsymbol r}$ satisfies
	$\|\mathcal X_0-\mathcal T\|_F<R$, and choose the regularization
	parameter $\eta>0$ such that
	$\rho_\eta\!\left((1+\delta_{2\boldsymbol r})R\right)\le\rho_\star$.
	Define
	\begin{equation*}
		\gamma_R
		:=
		\frac{
			2(\sqrt d+1)
		}{
			(1-\delta_{2\boldsymbol r})(1-\rho_\star)
		}
		\left[
		\delta_{2\boldsymbol r}+\rho_\star
		+
		\frac{
			(2^d-1)(1+\delta_{2\boldsymbol r}\rho_\star)
		}{
			\min_{1\le i\le d}\sigma_{r_i}(\mathcal T_{(i)})
		}R
		\right].
	\end{equation*}
	If $\gamma_R<1$, then the iterates generated by
	Algorithm~\ref{alg:abw_modewise_rgrad} satisfy
	\begin{equation*}
		\|\mathcal X_l-\mathcal T\|_F
		\le
		\gamma_R^{\,l}
		\|\mathcal X_0-\mathcal T\|_F
		<
		R,
		\qquad l\ge0.
	\end{equation*}
	Moreover, the adaptive weights satisfy
	$\rho_l\le\rho_\star$ for all $l\ge0$.
\end{theorem}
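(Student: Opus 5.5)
We argue by induction on $l$, proving simultaneously that $\|\mathcal X_l-\mathcal T\|_F\le\gamma_R^{\,l}\|\mathcal X_0-\mathcal T\|_F<R$ and $\rho_l\le\rho_\star$. Assume $\|\mathcal X_l-\mathcal T\|_F<R$. Since $\mathcal G_l=\bm{\mathscr L}^*\bm{\mathscr L}(\mathcal X_l-\mathcal T)$ and $\mathcal X_l-\mathcal T$ has multilinear rank at most $2\boldsymbol r$, each block $\bm\Pi_l^{(k)}\mathcal G_l$ can be tested against unit tangent vectors, which also have rank at most $2\boldsymbol r$. The standard TRIP polarization argument then gives $\|\bm\Pi_l^{(k)}\mathcal G_l\|_F\le(1+\delta_{2\boldsymbol r})R=:\Gamma$. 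The normalized weights are increasing in $s_{l,k}$ and are extremal when one magnitude equals $S_\Gamma$ and the rest equal $\eta$, or the reverse. A short computation shows $|\omega_{l,k}-1|\le\rho_\eta(\Gamma)\le\rho_\star$, which gives the weight claim at step $l$. In particular, $\|\bm{\mathscr P}^{\boldsymbol\omega_l}_l-\bm{\mathscr P}_{\mathcal S_l}\|\le\rho_\star$ on the ambient space, and every $\omega_{l,k}\in[1-\rho_\star,1+\rho_\star]$.

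Next we control the retraction. Write $\mathcal Y_l=\mathcal X_l-\alpha_l\mathcal Z_l$. Because $\mathcal T\in\mathbb M_{\boldsymbol r}$, the quasi-optimality \eqref{eq:hosvd_quasi_optimality} gives $\|\mathcal X_{l+1}-\mathcal T\|_F\le\|\mathcal X_{l+1}-\mathcal Y_l\|_F+\|\mathcal Y_l-\mathcal T\|_F\le(\sqrt d+1)\|\mathcal Y_l-\mathcal T\|_F$. This accounts for the factor $\sqrt d+1$; the remaining factor $2$ will come from the decomposition below. It therefore suffices to bound $\|\mathcal Y_l-\mathcal T\|_F$. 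We split
\[
\mathcal Y_l-\mathcal T=\bigl(\bm{\mathscr P}_{\mathcal S_l}-\alpha_l\bm{\mathscr P}^{\boldsymbol\omega_l}_l\bm{\mathscr L}^*\bm{\mathscr L}\bigr)(\mathcal X_l-\mathcal T)+(\bm{\mathscr I}-\bm{\mathscr P}_{\mathcal S_l})(\mathcal X_l-\mathcal T),
\]
using $\mathcal X_l\in\mathcal S_l$ and $\bm{\mathscr P}_{\mathcal S_l}\mathcal X_l=\mathcal X_l$. For the normal-component term we use the standard geometric estimate of \cite{cai2020provable}: $\|(\bm{\mathscr I}-\bm{\mathscr P}_{\mathcal S_l})(\mathcal X_l-\mathcal T)\|_F\le\frac{(2^d-1)}{\sigma_{\min}(\mathcal T)}\|\mathcal X_l-\mathcal T\|_F^2$, where $\sigma_{\min}(\mathcal T)=\min_i\sigma_{r_i}(\mathcal T_{(i)})$. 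We prove it by expanding $\mathcal T=\mathcal X_l+(\mathcal T-\mathcal X_l)$ in the orthogonal decompositions along the factor projectors $V_lV_l^T$ and $I-V_lV_l^T$. Every mixed term involving at least two complementary projectors is quadratic in the error, and there are $2^d-1-d\le 2^d-1$ such terms. This same bound also controls the cross terms arising below, where $\bm{\mathscr L}^*\bm{\mathscr L}$ acts on the normal part; that is the source of the factor $(1+\delta_{2\boldsymbol r}\rho_\star)$.

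For the tangent term we bound the step size first. The numerator of \eqref{eq:exact_stepsize} lies between $(1-\rho_\star)\|\bm{\mathscr P}_{\mathcal S_l}\mathcal G_l\|_F^2$ and $(1+\rho_\star)\|\bm{\mathscr P}_{\mathcal S_l}\mathcal G_l\|_F^2$. By TRIP, the denominator lies between $(1\pm\delta_{2\boldsymbol r})\|\mathcal Z_l\|_F^2$, and $\|\mathcal Z_l\|_F\in[1-\rho_\star,1+\rho_\star]\|\bm{\mathscr P}_{\mathcal S_l}\mathcal G_l\|_F$. Combining these gives $|\alpha_l-1|\lesssim(\delta_{2\boldsymbol r}+\rho_\star)/((1-\delta_{2\boldsymbol r})(1-\rho_\star))$. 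We then write
\[
\bm{\mathscr P}_{\mathcal S_l}-\alpha_l\bm{\mathscr P}^{\boldsymbol\omega}_l\bm{\mathscr L}^*\bm{\mathscr L}\bm{\mathscr P}_{\mathcal S_l}
=(\bm{\mathscr P}_{\mathcal S_l}-\bm{\mathscr P}_{\mathcal S_l}\bm{\mathscr L}^*\bm{\mathscr L}\bm{\mathscr P}_{\mathcal S_l})+(1-\alpha_l)\bm{\mathscr P}_{\mathcal S_l}\bm{\mathscr L}^*\bm{\mathscr L}\bm{\mathscr P}_{\mathcal S_l}+\alpha_l(\bm{\mathscr P}_{\mathcal S_l}-\bm{\mathscr P}^{\boldsymbol\omega}_l)\bm{\mathscr L}^*\bm{\mathscr L}\bm{\mathscr P}_{\mathcal S_l}.
\]
By TRIP on $2\boldsymbol r$-rank tangent tensors, the first operator has norm at most $\delta_{2\boldsymbol r}$; the second is bounded via $|1-\alpha_l|$, and the third by $\alpha_l\rho_\star(1+\delta_{2\boldsymbol r})$.

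The main obstacle is this step. We must collect these pieces so that they fit exactly under the prefactor $2/((1-\delta_{2\boldsymbol r})(1-\rho_\star))$ with the bracket $\delta_{2\boldsymbol r}+\rho_\star$. The weighted operator is not an orthogonal projector, so the identity $\langle\mathcal G,\mathcal Z\rangle=\|\mathcal Z\|^2$ fails, and the step size has to be handled through the numerator and denominator bounds above rather than through a clean ratio. I would try bounding $\alpha_l\le 1/((1-\delta_{2\boldsymbol r})(1-\rho_\star))$ crudely and absorbing the remaining constants, using the generous factor $2$ if necessary. Once this is done we obtain $\|\mathcal X_{l+1}-\mathcal T\|_F\le\gamma_R\|\mathcal X_l-\mathcal T\|_F$, using $\|\mathcal X_l-\mathcal T\|_F\le R$ to linearize the quadratic term. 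Since $\gamma_R<1$, the iterate stays inside the ball of radius $R$, which closes the induction.
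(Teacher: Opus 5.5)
You follow the paper's proof skeleton: quasi-optimality of $\bm{\mathscr H}_{\boldsymbol r}$ for the factor $\sqrt d+1$, the tangent/normal split, TRIP on $\mathcal S_l$, $\rho_l\le\rho_\eta(\Gamma)\le\rho_\star$, the second-order mismatch estimate, and induction. However, the step you flag as the main obstacle is a genuine gap, and your proposed workaround does not close it.

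Bounding the numerator of \eqref{eq:exact_stepsize} and $\|\mathcal Z_l\|_F$ separately against $\|\bm{\mathscr P}_{\mathcal S_l}\mathcal G_l\|_F^2$ only gives $\alpha_l\le(1+\rho_l)/\bigl((1-\delta_{2\boldsymbol r})(1-\rho_l)^2\bigr)$. The factor $2$ in $\gamma_R$ is not slack. Put $A:=1/\bigl((1-\delta_{2\boldsymbol r})(1-\rho_l)\bigr)$. Your three-operator splitting gives the linear coefficient $\delta_{2\boldsymbol r}+|1-\alpha_l|(1+\delta_{2\boldsymbol r})+\alpha_l\rho_l(1+\delta_{2\boldsymbol r})$. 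Under $\alpha_l\le A$ and $|1-\alpha_l|\le A-1$ this equals $A(1+\delta_{2\boldsymbol r})(1+\rho_l)-1=2(\delta_{2\boldsymbol r}+\rho_l)A$, which is exactly the linear part of $\gamma_R$ with nothing to spare. With your weaker bound the coefficient is strictly larger. For $\delta_{2\boldsymbol r}=\rho_\star=0.05$ and $d=4$, the $R$-independent part of $\gamma_R$ is about $0.66$. The corresponding quantity from your bound, $(\sqrt d+1)\bigl[(1+\delta_{2\boldsymbol r})(1+\rho_\star)^2/((1-\delta_{2\boldsymbol r})(1-\rho_\star)^2)-1\bigr]$, is about $1.05$. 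So your induction fails in cases the theorem covers.

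The missing idea is that a clean ratio \emph{is} available even though $\bm{\mathscr P}_l^{\boldsymbol\omega_l}$ is not a projector. With $a_{l,k}:=\|\bm{\Pi}_l^{(k)}\mathcal G_l\|_F$, block orthogonality gives $\langle\mathcal G_l,\mathcal Z_l\rangle_F=\sum_k\omega_{l,k}a_{l,k}^2$ and $\|\mathcal Z_l\|_F^2=\sum_k\omega_{l,k}^2a_{l,k}^2$. Comparing term by term, $(1-\rho_l)\langle\mathcal G_l,\mathcal Z_l\rangle_F\le\|\mathcal Z_l\|_F^2\le(1+\rho_l)\langle\mathcal G_l,\mathcal Z_l\rangle_F$. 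Together with TRIP on $\mathcal Z_l$ this yields $1/\bigl((1+\delta_{2\boldsymbol r})(1+\rho_l)\bigr)\le\alpha_l\le A$. Since $\frac{1}{(1+\delta_{2\boldsymbol r})(1+\rho_l)}+A\ge2$, the upper deviation dominates, so $|1-\alpha_l|\le A-1$. The quadratic coefficient then also comes out exactly. The normal term contributes $c:=(2^d-1)/\min_i\sigma_{r_i}(\mathcal T_{(i)})$ and the cross term at most $A(1+\rho_l)(1+\delta_{2\boldsymbol r})c$, summing to $2c(1+\delta_{2\boldsymbol r}\rho_l)A$. Monotonicity in $\rho_l$ and in $\|\mathcal X_l-\mathcal T\|_F$ then gives $\gamma_R$.

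Three further points are glossed over.

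\textbf{The cross term.} $\alpha_l\bm{\mathscr P}_l^{\boldsymbol\omega_l}\bm{\mathscr L}^*\bm{\mathscr L}(\bm{\mathscr I}-\bm{\mathscr P}_{\mathcal S_l})\mathcal T$ is not controlled by the mismatch estimate alone, because TRIP says nothing about $\bm{\mathscr L}$ on general tensors. You need $\|\bm{\mathscr P}_{\mathcal S_l}\bm{\mathscr L}^*\bm{\mathscr L}(\bm{\mathscr I}-\bm{\mathscr P}_{\mathcal S_l})\mathcal T\|_F\le(1+\delta_{2\boldsymbol r})\|(\bm{\mathscr I}-\bm{\mathscr P}_{\mathcal S_l})\mathcal T\|_F$. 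This follows by duality and Cauchy--Schwarz only once you observe that $(\bm{\mathscr I}-\bm{\mathscr P}_{\mathcal S_l})\mathcal T$ has multilinear rank at most $2\boldsymbol r$, since its mode-$k$ column space lies in $\operatorname{span}(U^{(k)},V_l^{(k)})$. Then $\|\bm{\mathscr P}_l^{\boldsymbol\omega_l}\mathcal W\|_F\le(1+\rho_l)\|\bm{\mathscr P}_{\mathcal S_l}\mathcal W\|_F$ finishes it.

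\textbf{The mismatch lemma.} In your sketch, the $d$ terms with exactly one complementary projector do not cancel against $\bm{\mathscr P}_{\mathcal S_l}$. The residue $\bigl(\prod_{j\ne i}\bm{\mathscr P}^{(j)}_{V_l^{(j)}}-\bm{\mathscr P}^{(i)}_{\mathcal B_l,\{V_l^{(j)}\}_{j\ne i}}\bigr)\bm{\mathscr P}^{(i)}_{V_l^{(i)\perp}}\mathcal T$ survives and must also be shown to be quadratic. It annihilates $\mathcal X_l$ and commutes with the mode-$i$ projectors, so the bound $\|\bm{\mathscr P}^{(i)}_{V_l^{(i)}}-\bm{\mathscr P}^{(i)}_{U^{(i)}}\|\le\|\mathcal X_l-\mathcal T\|_F/\sigma_{r_i}(\mathcal T_{(i)})$ applies. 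The constant $2^d-1$ counts exactly these $d$ terms plus the $2^d-1-d$ higher-order ones; it is not slack over the latter.

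\textbf{The degenerate case.} You should also handle $\mathcal Z_l=0$, where $\alpha_l$ is undefined. The same estimates together with $\gamma_R<1$ force $\mathcal X_l=\mathcal T$ there.
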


\begin{corollary}[Contraction under the standard RGD initialization]
	\label{cor:weighted_standard_initialization}
	Suppose that the initialization satisfies
	\begin{equation}
		\|\mathcal{X}_{0}-\mathcal{T}\|_F
		\le
		(\sqrt d+1)
		\delta_{2\boldsymbol r}\|\mathcal T\|_F.
		\label{eq:weighted_initial_error}
	\end{equation}
	Fix $\rho_\star\in(0,1)$ and choose $\eta$ such that $\rho_\eta\!\left((1+\delta_{2\boldsymbol r})(\sqrt d+1)\delta_{2\boldsymbol r}\|\mathcal T\|_F\right)\le\rho_\star$.
	Then Theorem~\ref{thm:weighted_local_linear_convergence} applies with
	$R=(\sqrt d+1)\delta_{2\boldsymbol r}\|\mathcal T\|_F$. In particular, the contraction factor becomes
	\begin{equation}
		\begin{split}
			\gamma_{\mathrm w}
			:=
			\frac{
				2(\sqrt d+1)
			}{
				(1-\delta_{2\boldsymbol r})(1-\rho_\star)
			}
			\Bigg[
			&\delta_{2\boldsymbol r}+\rho_\star\\
			&+
			(2^d-1)(1+\delta_{2\boldsymbol r}\rho_\star)
			(\sqrt d+1)\delta_{2\boldsymbol r}
			\frac{\|\mathcal T\|_F}{\min_{1\le i\le d}\sigma_{r_i}(\mathcal{T}_{(i)})}
			\Bigg].
		\end{split}
		\label{eq:weighted_gamma}
	\end{equation}
	If $\gamma_{\mathrm w}<1$, then $\|\mathcal X_l-\mathcal T\|_F\le\gamma_{\mathrm w}^{\,l}\|\mathcal X_0-\mathcal T\|_F$.
\end{corollary}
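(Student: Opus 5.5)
The corollary is a direct specialization of Theorem~\ref{thm:weighted_local_linear_convergence}, so the plan is to check its hypotheses with the choice $R:=(\sqrt d+1)\delta_{2\boldsymbol r}\|\mathcal T\|_F$ and then substitute.

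\emph{Hypotheses.} The TRIP$(\delta_{2\boldsymbol r},2\boldsymbol r)$ assumption and $\mathcal X_0\in\mathbb M_{\boldsymbol r}$ are inherited from the standing setting. The condition on $\eta$ in the corollary is exactly $\rho_\eta((1+\delta_{2\boldsymbol r})R)\le\rho_\star$ for this $R$. Substituting this $R$ into the definition of $\gamma_R$ gives $\gamma_R=\gamma_{\mathrm w}$ verbatim: the term $\tfrac{(2^d-1)(1+\delta_{2\boldsymbol r}\rho_\star)}{\min_i\sigma_{r_i}(\mathcal T_{(i)})}R$ becomes the last term in \eqref{eq:weighted_gamma}. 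So $\gamma_{\mathrm w}<1$ is the same as $\gamma_R<1$.

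\emph{Main obstacle.} The theorem asks for the strict inequality $\|\mathcal X_0-\mathcal T\|_F<R$, while \eqref{eq:weighted_initial_error} gives only $\le$. The remaining step is to close this gap, and I would first try a perturbation argument in $R$.

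Since $\rho_\eta(\Gamma)=d(S_\Gamma-\eta)/(S_\Gamma+d\eta)$ is continuous in $\Gamma$, take $R'=R+\epsilon$ for small $\epsilon>0$. Then $\|\mathcal X_0-\mathcal T\|_F<R'$ holds strictly, and $\gamma_{R'}$ depends continuously and increasingly on $R'$, so $\gamma_{R'}<1$ for small $\epsilon$.

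The difficulty is that the $\eta$-condition is required at $R'$, not at $R$. Two routes address this:
\begin{itemize}
\item Read the hypothesis on $\eta$ with a strict margin, or use a slightly smaller $\eta$; either keeps $\rho_\eta((1+\delta_{2\boldsymbol r})R')\le\rho_\star$ by continuity.
\item Better, avoid the perturbation altogether. Inspect the theorem's proof: strictness of $\|\mathcal X_0-\mathcal T\|_F<R$ is used only to place $\mathcal X_l$ in the radius-$R$ ball, so the inductive argument works verbatim with $\le R$.
\end{itemize}
I would adopt the second route. The induction then gives $\|\mathcal X_l-\mathcal T\|_F\le\gamma_R^{\,l}\|\mathcal X_0-\mathcal T\|_F\le R$ at every step. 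This keeps $\rho_l\le\rho_\star$ through the bound $\rho_l\le\rho_\eta((1+\delta_{2\boldsymbol r})R)$ noted before the theorem.

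\emph{Conclusion.} The theorem's conclusion then yields $\|\mathcal X_l-\mathcal T\|_F\le\gamma_{\mathrm w}^{\,l}\|\mathcal X_0-\mathcal T\|_F$.

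As a remark, \eqref{eq:weighted_initial_error} is the error bound Cai et al.\ obtain for $\mathcal X_0=\bm{\mathscr H}_{\boldsymbol r}(\bm{\mathscr L}^*\boldsymbol y)$ under TRIP. That is why the corollary refers to it as the standard RGD initialization.
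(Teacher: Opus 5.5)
Your proposal is correct and follows the paper's route: the corollary is Theorem~\ref{thm:weighted_local_linear_convergence} with $R=(\sqrt d+1)\delta_{2\boldsymbol r}\|\mathcal T\|_F$, for which $\gamma_R=\gamma_{\mathrm w}$, and the appendix induction starts from $\|\mathcal X_0-\mathcal T\|_F\le R$, so your second route for the strict-versus-nonstrict issue is what the paper's proof supports. There is one slip in the route you discard: $\rho_\eta(\Gamma)=d\Gamma^2/[(S_\Gamma+\eta)(S_\Gamma+d\eta)]$ is decreasing in $\eta$, so compensating for $R'>R$ would need a larger, not smaller, $\eta$; since changing $\eta$ also changes the algorithm, dropping that route was right.
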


The following results further characterize the dependence of the local
convergence bound on the weight deviation and the regularization parameter,
as well as the asymptotic behavior of the adaptive weights.

\begin{remark}[Relation to the unweighted contraction factor]
	\label{rem:weighting_no_gain}
	Setting $\rho_\star=0$ in \eqref{eq:weighted_gamma} yields
	\begin{equation*}
		\gamma_{\mathrm u}
		:=
		\gamma_{\mathrm w}\big|_{\rho_\star=0}
		=
		\frac{2(\sqrt d+1)\delta_{2\boldsymbol r}}{1-\delta_{2\boldsymbol r}}
		\left[
		1+(2^d-1)(\sqrt d+1)
		\frac{\|\mathcal T\|_F}{\min_{1\le i\le d}\sigma_{r_i}(\mathcal T_{(i)})}
		\right],
	\end{equation*}
	which agrees with the corresponding unweighted bound. Moreover,
	$\gamma_{\mathrm w}$ is increasing with respect to $\rho_\star$ on
	$[0,1)$. Indeed, writing
	$\gamma_{\mathrm w}
	=\frac{2(\sqrt d+1)}{1-\delta_{2\boldsymbol r}}
	\frac{N(\rho_\star)}{1-\rho_\star}$, where
	$N(\rho_\star)
	=\delta_{2\boldsymbol r}+\rho_\star
	+(2^d-1)(\sqrt d+1)\delta_{2\boldsymbol r}
	\frac{\|\mathcal T\|_F}
	{\min_{1\le i\le d}\sigma_{r_i}(\mathcal T_{(i)})}
	(1+\delta_{2\boldsymbol r}\rho_\star)$, we have
	\[
	\frac{\partial}{\partial\rho_\star}
	\left(
	\frac{N(\rho_\star)}{1-\rho_\star}
	\right)
	=
	\frac{
		(1+\delta_{2\boldsymbol r})
		\left[
		1+
		(2^d-1)(\sqrt d+1)\delta_{2\boldsymbol r}
		\frac{\|\mathcal T\|_F}
		{\min_{1\le i\le d}\sigma_{r_i}(\mathcal T_{(i)})}
		\right]
	}{
		(1-\rho_\star)^2
	}
	>0.
	\]
	Hence $\gamma_{\mathrm w}\ge\gamma_{\mathrm u}$, with equality at
	$\rho_\star=0$.
\end{remark}

\begin{remark}[Explicit choice of the regularization parameter]
	\label{rem:eta_choice}
	The condition
	$\rho_\eta((1+\delta_{2\boldsymbol r})R)\le\rho_\star$ in
	Theorem~\ref{thm:weighted_local_linear_convergence}
	can be ensured by an explicit choice of $\eta$.
	Since $\sqrt{\Gamma^2+\eta^2}+\eta\ge2\eta$ and
	$\sqrt{\Gamma^2+\eta^2}+d\eta\ge(1+d)\eta$,
	\eqref{eq:rho_eta_bound} gives
	\begin{equation}
		\rho_\eta(\Gamma)
		=
		\frac{d\,\Gamma^2}
		{\bigl(\sqrt{\Gamma^2+\eta^2}+\eta\bigr)
			\bigl(\sqrt{\Gamma^2+\eta^2}+d\eta\bigr)}
		\le
		\frac{d\,\Gamma^2}{2(1+d)\eta^2}.
		\label{eq:rho_eta_explicit}
	\end{equation}
	Therefore, with $\Gamma=(1+\delta_{2\boldsymbol r})R$, the sufficient condition $\eta\ge(1+\delta_{2\boldsymbol r})R\sqrt{\frac{d}{2(1+d)\rho_\star}}$
	guarantees $\rho_l\le\rho_\star$ whenever
	$\|\mathcal X_l-\mathcal T\|_F\le R$.
\end{remark}

\begin{proposition}[Asymptotic weight deviation and contraction rate]
	\label{prop:asymptotic_degeneration}
	Let the assumptions of
	Theorem~\ref{thm:weighted_local_linear_convergence} hold with
	$\gamma_R<1$, so that $\mathcal X_l\to\mathcal T$. Then the adaptive
	weights satisfy
	\begin{equation}
		\rho_l
		\le
		\frac{d(1+\delta_{2\boldsymbol r})^2}{2(1+d)\eta^2}
		\|\mathcal X_l-\mathcal T\|_F^2,
		\label{eq:rho_second_order}
	\end{equation}
	that is, the weight deviation is second order in the reconstruction
	error, and the asymptotic contraction rate obeys
	\begin{equation}
		\limsup_{l\to\infty}
		\frac{\|\mathcal X_{l+1}-\mathcal T\|_F}{\|\mathcal X_l-\mathcal T\|_F}
		\le
		\frac{2(\sqrt d+1)\delta_{2\boldsymbol r}}{1-\delta_{2\boldsymbol r}},
		\label{eq:asymptotic_rate}
	\end{equation}
	which agrees with the corresponding asymptotic bound for the unweighted modewise RGD iteration.
\end{proposition}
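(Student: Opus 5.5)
Two ingredients are needed. The first is a second-order bound on the weight deviation, which follows from the explicit formula \eqref{eq:rho_eta_explicit} in Remark~\ref{rem:eta_choice}. The second is a refined version of the one-step contraction estimate behind Theorem~\ref{thm:weighted_local_linear_convergence}, in which $R$ is replaced by the current error and $\rho_\star$ by the current deviation $\rho_l$.

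\emph{Step 1: the weight bound \eqref{eq:rho_second_order}.} By Theorem~\ref{thm:weighted_local_linear_convergence} every iterate satisfies $\|\mathcal X_l-\mathcal T\|_F<R$. The TRIP-based gradient bound from the appendix then gives
\[
\max_k\|\bm{\Pi}_l^{(k)}\mathcal G_l\|_F\le\Gamma_l:=(1+\delta_{2\boldsymbol r})\|\mathcal X_l-\mathcal T\|_F .
\]
This uses $\|\bm{\Pi}_l^{(k)}\mathcal G_l\|_F\le\|\bm{\mathscr P}_{\mathcal S_l}\mathcal G_l\|_F=\sup_{\xi}\langle\bm{\mathscr L}(\mathcal X_l-\mathcal T),\bm{\mathscr L}\xi\rangle$ over unit $\xi\in\mathcal S_l$, together with the fact that $\mathcal X_l-\mathcal T$ and $\xi$ have rank at most $2\boldsymbol r$. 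The uniform deviation bound $\rho_l\le\rho_\eta(\Gamma_l)$ holds with $\Gamma_l$ in place of $\Gamma$, since $\rho_\eta(\cdot)$ is increasing and bounds $|\omega_{l,k}-1|$ whenever all block norms are at most $\Gamma$. Inserting $\Gamma_l$ into \eqref{eq:rho_eta_explicit} gives
\[
\rho_l\le \frac{d\,\Gamma_l^2}{2(1+d)\eta^2},
\]
which is exactly \eqref{eq:rho_second_order}. Since $\|\mathcal X_l-\mathcal T\|_F\le\gamma_R^l R\to0$, it follows that $\rho_l\to0$.

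\emph{Step 2: a state-dependent one-step bound.} I would revisit the proof of Theorem~\ref{thm:weighted_local_linear_convergence} and check that it actually establishes, for each $l$,
\[
\|\mathcal X_{l+1}-\mathcal T\|_F\le\frac{2(\sqrt d+1)}{(1-\delta_{2\boldsymbol r})(1-\rho_l)}\Bigl[\delta_{2\boldsymbol r}+\rho_l+\frac{(2^d-1)(1+\delta_{2\boldsymbol r}\rho_l)}{\sigma_{\min}}\|\mathcal X_l-\mathcal T\|_F\Bigr]\|\mathcal X_l-\mathcal T\|_F ,
\]
where $\sigma_{\min}=\min_i\sigma_{r_i}(\mathcal T_{(i)})$. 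This is the natural form of that argument. The quasi-optimality \eqref{eq:hosvd_quasi_optimality} gives $\|\mathcal X_{l+1}-\mathcal T\|_F\le(\sqrt d+1)\|\mathcal Y_l-\mathcal T\|_F$. One then splits
\[
\mathcal Y_l-\mathcal T=(\mathcal X_l-\mathcal T)-\alpha_l\bm{\mathscr P}^{\boldsymbol\omega_l}_l\bm{\mathscr L}^*\bm{\mathscr L}(\mathcal X_l-\mathcal T)
\]
into three pieces:
\begin{itemize}
\item the tangent part, controlled by TRIP through $\|\bm{\mathscr P}_{\mathcal S_l}-\bm{\mathscr P}_{\mathcal S_l}\bm{\mathscr L}^*\bm{\mathscr L}\bm{\mathscr P}_{\mathcal S_l}\|\le\delta_{2\boldsymbol r}$;
\item the weight perturbation $\bm{\mathscr P}^{\boldsymbol\omega_l}_l-\bm{\mathscr P}_{\mathcal S_l}$, whose norm is at most $\rho_l$;
\item the normal-space curvature term $\|(\bm{\mathscr I}-\bm{\mathscr P}_{\mathcal S_l})\mathcal T\|_F\lesssim(2^d-1)\|\mathcal X_l-\mathcal T\|_F^2/\sigma_{\min}$.
\end{itemize}
The step size satisfies $\alpha_l\in[\,(1-\rho_l)/(1+\delta),\,\cdot\,]$, which is where the factor $(1-\rho_l)^{-1}$ comes from. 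If the theorem's proof is written only with the constants $\rho_\star$ and $R$, I would redo it with $\rho_l$ and $\|\mathcal X_l-\mathcal T\|_F$. This is legitimate because all of these bounds are monotone in those quantities.

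\emph{Step 3: pass to the limit.} Divide the one-step bound by $\|\mathcal X_l-\mathcal T\|_F$; if this error is $0$ the iteration has stopped at $\mathcal T$ and there is nothing to prove. Take $\limsup$ using $\rho_l\to0$ and $\|\mathcal X_l-\mathcal T\|_F\to0$. The bracket tends to $\delta_{2\boldsymbol r}$ and the prefactor tends to $2(\sqrt d+1)/(1-\delta_{2\boldsymbol r})$, which gives \eqref{eq:asymptotic_rate}. This is $\gamma_{\mathrm u}$ of Remark~\ref{rem:weighting_no_gain} with the curvature term removed, so it matches the unweighted asymptotic bound.

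The main obstacle is Step 2: confirming that the appendix proof yields the per-iteration inequality with $\rho_l$ and the current error, rather than only the frozen constants. The delicate part is the step-size bound, where $\langle\mathcal G_l,\mathcal Z_l\rangle_F$ differs from $\|\mathcal Z_l\|_F^2$. I would handle it with
\[
\min_k\omega_{l,k}\ \ge\ 1-\rho_l
\qquad\text{and}\qquad
\|\mathcal Z_l\|_F\le(1+\rho_l)\|\bm{\mathscr P}_{\mathcal S_l}\mathcal G_l\|_F .
\]
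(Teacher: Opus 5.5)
Your proposal is correct and follows the paper's proof. The block-gradient estimate \eqref{eq:trip_block_gradient_bound} combined with \eqref{eq:rho_eta_explicit} gives \eqref{eq:rho_second_order}, and passing to the limit in the one-step recurrence \eqref{eq:weighted_one_step_recurrence_rhol} gives \eqref{eq:asymptotic_rate}; your Step 2 concern is unfounded, because the appendix already states that recurrence with $\rho_l$ and $\|\mathcal X_l-\mathcal T\|_F$ before bounding them by $\rho_\star$ and $R$ (and note that the factor $(1-\rho_l)^{-1}$ comes from the upper step-size bound $\alpha_l\le 1/((1-\delta_{2\boldsymbol r})(1-\rho_l))$, not from the lower one).
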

\begin{proof}
	The block-gradient estimate \eqref{eq:trip_block_gradient_bound} gives
	$\max_{0\le k\le d}\|\bm{\Pi}_l^{(k)}\mathcal G_l\|_F
	\le(1+\delta_{2\boldsymbol r})\|\mathcal X_l-\mathcal T\|_F=:\Gamma_l$.
	Applying \eqref{eq:rho_eta_explicit} with
	$\Gamma=\Gamma_l$ yields \eqref{eq:rho_second_order}.
	Since $\gamma_R<1$ forces $\|\mathcal X_l-\mathcal T\|_F\to0$, we also
	have $\rho_l\to0$. Substituting $\rho_l\to0$ and
	$\|\mathcal X_l-\mathcal T\|_F\to0$ into the one-step recurrence
	\eqref{eq:weighted_one_step_recurrence_rhol} leaves the limiting factor
	$2(\sqrt d+1)\delta_{2\boldsymbol r}/(1-\delta_{2\boldsymbol r}
	)$, proving
	\eqref{eq:asymptotic_rate}.
\end{proof}

\begin{corollary}[Sampling complexity for linear convergence]
	\label{cor:sampling_complexity_convergence}
	Assume the balanced setting $n_i=n$ and $r_i=r$ for $i\in[d]$. Let $q\ge2$ divide $d$, $d'=d/q$, $\zeta\in(0,1)$ be the failure probability, and $\rho_\star\in(0,1)$ be the target deviation level. Suppose $\delta_\star\in(0,1)$ satisfies
	\begin{equation}
		2(\sqrt d+1)
		\left[
		\delta_\star+\rho_\star
		+
		(2^d-1)(1+\delta_\star\rho_\star)
		(\sqrt d+1)\delta_\star
		\frac{\|\mathcal T\|_F}
		{\min_{1\le i\le d}\sigma_{r_i}(\mathcal T_{(i)})}
		\right]
		<
		(1-\delta_\star)(1-\rho_\star),
		\label{eq:admissible_delta_star}
	\end{equation}
	and choose $\eta>0$ such that $\rho_\eta \!\left( (1+\delta_\star)(\sqrt d+1)\delta_\star\|\mathcal T\|_F \right) \le \rho_\star$. Let $\boldsymbol y=\bm{\mathscr L}(\mathcal T)$ and $\mathcal X_0 = \bm{\mathscr H}_{\boldsymbol r}(\bm{\mathscr L}^*\boldsymbol y)$ be the initialization of Algorithm~\ref{alg:abw_modewise_rgrad}. Then the following hold:
	
	\begin{enumerate}[label=(\roman*)]
		\item For the one-stage sub-Gaussian modewise measurement map, if
		\begin{equation*}
			m \ge C\delta_\star^{-2}(2r)^{2d} \max\left\{ \frac{nd^2\log q}{q}, \frac{d^2}{q^2} \log\left(\frac{d}{q\zeta}\right) \right\},
		\end{equation*}
		then with probability at least $1-\zeta$, Algorithm~\ref{alg:abw_modewise_rgrad} converges linearly to $\mathcal T$:
		\[
		\|\mathcal X_l-\mathcal T\|_F \le \gamma_{\mathrm w}^{\,l}\|\mathcal X_0-\mathcal T\|_F,
		\]
		where $\gamma_{\mathrm w}<1$ is defined in \eqref{eq:weighted_gamma}. The total measurement dimension is $M_{\mathrm{1st}}=m^{d'}$.
		
		\item For the two-stage sub-Gaussian modewise measurement map, if
		\begin{equation*}
			m \ge C\delta_\star^{-2}(2r)^{2d} \max\left\{ \frac{nd^2\log q}{q}, \frac{d^2}{q^2} \log\left(\frac{2d}{q\zeta}\right) \right\},
		\end{equation*}
		and
		\begin{align*}
			M_{\mathrm{2nd}} \ge C\delta_\star^{-2} \max\Bigg\{& \left( \frac{(2r)^d q+d m(2r)^q}{q} \right) \log\left(\frac dq+1\right) \nonumber\\
			&+ \frac{d m(2r)^q}{q} \log\left( 1+\delta_\star(2r)^d \right) + \frac{ d^2m(2r)^q\delta_\star }{ q^2 }, \log\left(\frac2\zeta\right) \Bigg\},
		\end{align*}
		then with probability at least $1-\zeta$, Algorithm~\ref{alg:abw_modewise_rgrad} converges linearly to $\mathcal T$ with contraction factor $\gamma_{\mathrm w}<1$:
		\[
		\|\mathcal X_l-\mathcal T\|_F \le \gamma_{\mathrm w}^{\,l}\|\mathcal X_0-\mathcal T\|_F.
		\]
	\end{enumerate}
	Analogous convergence guarantees hold for SORS measurement ensembles by substituting the respective sampling conditions with \eqref{eq:one_stage_sors_2r} and \eqref{eq:two_stage_second_sors_2r}.
\end{corollary}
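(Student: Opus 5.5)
The plan is to combine three ingredients already available: the TRIP guarantees for modewise operators at rank $2\boldsymbol r$, the initialization bound \eqref{eq:weighted_initial_error} of Corollary~\ref{cor:weighted_standard_initialization}, and the contraction statement of Theorem~\ref{thm:weighted_local_linear_convergence}.

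\emph{Step 1 (TRIP with the admissible constant).} Apply Proposition~\ref{prop:one_stage_trip_2r}(i) in case (i) and Proposition~\ref{prop:two_stage_trip_2r}(i) in case (ii), with the target constant $\delta_{2\boldsymbol r}$ replaced by $\delta_\star$. The stated lower bounds on $m$, and in case (ii) also on $M_{\mathrm{2nd}}$, are exactly those propositions' sufficient conditions with $\delta_{2\boldsymbol r}=\delta_\star$. Hence, with probability at least $1-\zeta$, $\bm{\mathscr L}$ satisfies $\operatorname{TRIP}(\delta_\star,2\boldsymbol r)$. The SORS statements follow in the same way from \eqref{eq:one_stage_sors_2r} and \eqref{eq:two_stage_second_sors_2r}.

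\emph{Step 2 (initialization).} On this event, I would show $\|\mathcal X_0-\mathcal T\|_F\le(\sqrt d+1)\delta_\star\|\mathcal T\|_F$, in the standard way for one-step IHT.

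Write $\mathcal X_0=\bm{\mathscr H}_{\boldsymbol r}(\bm{\mathscr L}^*\bm{\mathscr L}\mathcal T)$. Using $\mathcal T\in\mathbb M_{\boldsymbol r}$ and the quasi-optimality \eqref{eq:hosvd_quasi_optimality}, the triangle inequality gives
$\|\mathcal X_0-\mathcal T\|_F\le(\sqrt d+1)\|\bm{\mathscr P}_{\mathcal S}(\bm{\mathscr L}^*\bm{\mathscr L}-\bm{\mathscr I})\mathcal T\|_F$,
where $\mathcal S$ is a subspace of tensors of multilinear rank at most $2\boldsymbol r$ spanned by the factor subspaces of $\mathcal T$ and $\mathcal X_0$.

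The key fact is the standard consequence of TRIP via polarization: for any such subspace $\mathcal S$ one has $\|\bm{\mathscr P}_{\mathcal S}(\bm{\mathscr L}^*\bm{\mathscr L}-\bm{\mathscr I})\bm{\mathscr P}_{\mathcal S}\|\le\delta_\star$. This yields the bound above. I would present it by restricting to that $2\boldsymbol r$ subspace, and I expect it to be the only nonroutine step. It is delicate because $\bm{\mathscr H}_{\boldsymbol r}$ is not an exact projection. I would therefore compare against the truncation error of the rank-$2\boldsymbol r$ projected tensor rather than the full $\bm{\mathscr L}^*\boldsymbol y$, following the corresponding argument in \cite{cai2020provable}.

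\emph{Step 3 (apply the convergence theorem).} The choice of $\eta$ in the statement is exactly the hypothesis of Corollary~\ref{cor:weighted_standard_initialization} with $\delta_{2\boldsymbol r}=\delta_\star$. Dividing \eqref{eq:admissible_delta_star} by $(1-\delta_\star)(1-\rho_\star)$ shows $\gamma_{\mathrm w}<1$, with $\gamma_{\mathrm w}$ as in \eqref{eq:weighted_gamma} evaluated at $\delta_\star$.

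There is a minor point about strictness. Theorem~\ref{thm:weighted_local_linear_convergence} requires $\|\mathcal X_0-\mathcal T\|_F<R$, whereas Step 2 gives $\le$. This is handled by taking $R$ slightly larger: $\gamma_R$ depends continuously on $R$ and the strict inequality $\gamma_{\mathrm w}<1$ persists. If one wants to keep $\eta$ fixed, note instead that the contraction argument only uses $\le R$. Corollary~\ref{cor:weighted_standard_initialization} then gives $\|\mathcal X_l-\mathcal T\|_F\le\gamma_{\mathrm w}^l\|\mathcal X_0-\mathcal T\|_F$ on the event of probability at least $1-\zeta$. In case (i) the measurement dimension is $M_{\mathrm{1st}}=m^{d'}$ by construction of $\bm{\mathscr L}_1$.
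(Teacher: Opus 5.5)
Your outline matches the paper's proof. The paper also obtains $\operatorname{TRIP}(\delta_\star,2\boldsymbol r)$ from Propositions~\ref{prop:one_stage_trip_2r} and~\ref{prop:two_stage_trip_2r}. It then invokes the initialization bound, which is its appendix Lemma~\ref{lem:app_initialization_error}, proved by exactly your projected-tensor comparison. It concludes via Corollary~\ref{cor:weighted_standard_initialization}. For the strictness issue, your observation that the induction in Theorem~\ref{thm:weighted_local_linear_convergence} only uses $\|\mathcal X_0-\mathcal T\|_F\le R$ is the right fix. Enlarging $R$ instead would force you to re-check the $\eta$ condition and would give the rate $\gamma_{R'}$ rather than $\gamma_{\mathrm w}$. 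The real problem is Step~2, which you rightly single out: it does not go through, and the paper's argument at that point is defective as well.

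\textbf{The identity you need fails for $d\ge3$.} To get $\|\mathcal X_0-\mathcal T\|_F\le(\sqrt d+1)\|\bm{\mathscr P}_{K}(\bm{\mathscr L}^*\bm{\mathscr L}-\bm{\mathscr I})\mathcal T\|_F$ from quasi-optimality, you need $\bm{\mathscr H}_{\boldsymbol r}(\bm{\mathscr L}^*\boldsymbol y)=\bm{\mathscr H}_{\boldsymbol r}(\bm{\mathscr P}_K\bm{\mathscr L}^*\boldsymbol y)$; the paper asserts this ``by the ordering property of the HOSVD''. Quasi-optimality applied to the unprojected $\bm{\mathscr L}^*\boldsymbol y$ only gives $(\sqrt d+1)\|(\bm{\mathscr L}^*\bm{\mathscr L}-\bm{\mathscr I})\mathcal T\|_F$, which the TRIP does not control. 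The identity holds for matrices, because $K^{(2)}$ contains the leading right singular vectors. For $d\ge3$, however, the mode-$i$ unfolding of $\bm{\mathscr P}_K\mathcal Y$ is $K^{(i)}(K^{(i)})^T\mathcal Y_{(i)}\bigl(\bigotimes_{j\ne i}K^{(j)}(K^{(j)})^T\bigr)$, and the right-hand projector is unrelated to the row space of $\mathcal Y_{(i)}$. A concrete counterexample uses standard basis vectors $e_j\in\mathbb R^4$ with $d=3$ and $\boldsymbol r=(1,1,1)$. Take $\mathcal Y=e_1\circ e_2\circ e_3+e_1\circ e_3\circ e_2+e_2\circ e_1\circ e_1+2^{-1/2}e_4\circ e_1\circ e_1$, with $U^{(1)}=e_4$ and $U^{(2)}=U^{(3)}=e_1$. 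Every leading mode vector of $\mathcal Y$ is $e_1$, so $\bm{\mathscr H}_{\boldsymbol r}(\mathcal Y)=0$. By contrast, $\bm{\mathscr P}_K\mathcal Y=2^{-1/2}e_4\circ e_1\circ e_1$ is its own truncation.

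\textbf{The bound itself cannot follow from the TRIP alone.} The leading mode subspaces are governed by $\|((\bm{\mathscr L}^*\bm{\mathscr L}-\bm{\mathscr I})\mathcal T)_{(i)}\|_2$, which involves tensors of full rank in the other modes. An i.i.d.\ Gaussian $\bm{\mathscr L}$ with $m\asymp n$ measurements satisfies $\operatorname{TRIP}(\delta,2\boldsymbol r)$ for $d=3$, $r=1$. Yet $\bm{\mathscr L}^*\boldsymbol y$ is then $\mathcal T$ plus an essentially i.i.d.\ $N(0,\|\mathcal T\|_F^2/m)$ tensor, whose mode-$1$ unfolding has spectral norm $\asymp n/\sqrt m\gg\|\mathcal T\|_F$. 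The leading mode vectors are then asymptotically uncorrelated with those of $\mathcal T$, since the spiked-model threshold here is $m\gtrsim n^{3/2}$, so $\|\mathcal X_0-\mathcal T\|_F\approx\|\mathcal T\|_F$.

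Deferring to \cite{cai2020provable} therefore does not close the step. To finish, you need an extra, operator-specific bound on the modewise spectral norms of $(\bm{\mathscr L}^*\bm{\mathscr L}-\bm{\mathscr I})\mathcal T$ for the modewise ensembles; this may require more measurements than the TRIP conditions. Otherwise you must assume \eqref{eq:weighted_initial_error} directly.
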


\begin{proof}
	By Propositions~\ref{prop:one_stage_trip_2r} and
	\ref{prop:two_stage_trip_2r}, the stated sampling conditions ensure
	that $\bm{\mathscr L}$ satisfies
	$\operatorname{TRIP}(\delta_\star,2\boldsymbol r)$ with probability
	at least $1-\zeta$. Under this TRIP condition, the truncated-HOSVD
	initialization satisfies
	$\|\mathcal X_0-\mathcal T\|_F
	\le(\sqrt d+1)\delta_\star\|\mathcal T\|_F$.
	The prescribed choice of $\eta$ controls the adaptive weight
	deviation, while \eqref{eq:admissible_delta_star} ensures
	$\gamma_{\mathrm w}<1$. The conclusion therefore follows from
	Corollary~\ref{cor:weighted_standard_initialization}.
\end{proof}

\begin{remark}[Measurement dimension and rank dependence]
	For the one-stage construction, $m$ is linear in $n$ when
	the remaining parameters are fixed, but the total number of
	scalar outputs is $M_{\mathrm{1st}}=m^{d/q}$ and is therefore
	superlinear in $n$ whenever $d/q>1$. In the two-stage
	construction, the final measurement dimension
	$M_{\mathrm{2nd}}$ is linear in $n$ up to the
	displayed rank and logarithmic factors.
\end{remark}

\section{Numerical Experiments}
\label{sec:experiments}

In this section, we evaluate the proposed normalized block-weighted
modewise RGD method on synthetic low-multilinear-rank tensor recovery
problems. The unknown fourth-order tensors are generated randomly in
Tucker form. We consider both balanced and nonuniform tensor dimensions
and multilinear ranks. The two-stage measurement operator is constructed
from either Gaussian or SORS matrices. The intermediate modewise dimension
is denoted by $m$, while $p$ denotes the final target dimension. For
comparison, we also test the unweighted modewise RGD method and the
corresponding weighted and unweighted RGD methods with dense vectorized
measurements. In the figures, ``w'' and ``u'' denote the weighted and
unweighted variants, respectively, and ``vec'' denotes vectorized
measurements. For each parameter setting, 20 independent trials are
performed. A trial is regarded as successful if
\[
\frac{\|\mathcal X_l-\mathcal T\|_F}{\|\mathcal T\|_F}<10^{-2}
\]
within 1000 iterations. The reported number of iterations is averaged
over the 20 trials, with 1000 used as the iteration cap.

Figures~\ref{fig:gaussian_success} and \ref{fig:gaussian_iters} report
the results for Gaussian measurements. Four combinations of tensor
dimensions and multilinear ranks are considered. In all cases, the
successful recovery rate exhibits a clear transition as $p$ increases.
A larger intermediate dimension $m$ generally shifts the transition to
a smaller $p$ and reduces the number of iterations. The weighted modewise
method is comparable to, and often better than, its unweighted counterpart,
especially near the recovery threshold and for smaller values of $m$. 

\begin{figure}[!tbp]
	\centering
	\includegraphics[width=0.98\linewidth]{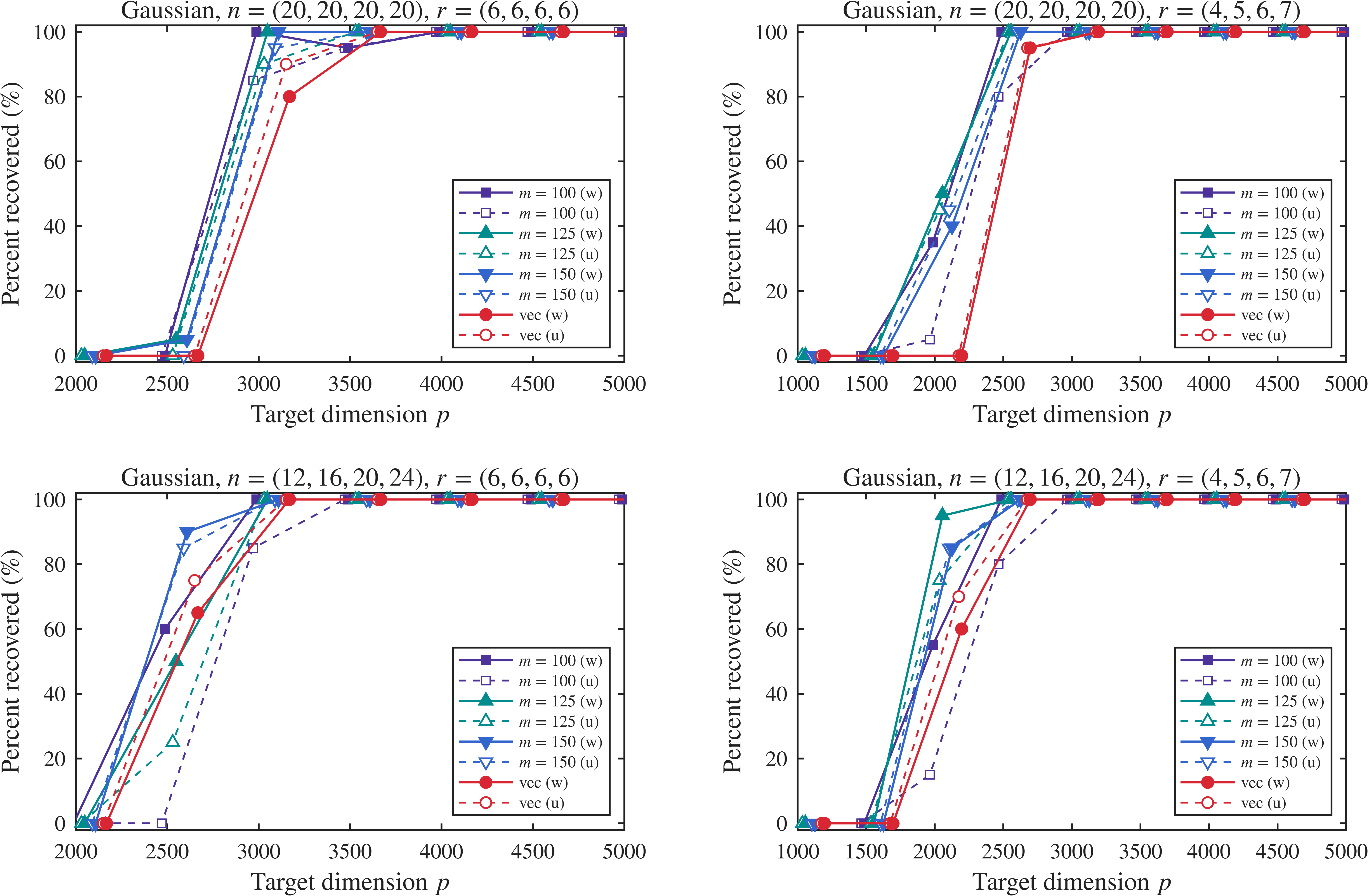}
	\caption{Successful recovery rates for Gaussian measurements. Each point is computed from 20 independent trials, and recovery is declared successful when the relative error is below $10^{-2}$ within 1000 iterations.}
	\label{fig:gaussian_success}
\end{figure}

\begin{figure}[!tbp]
	\centering
	\includegraphics[width=0.98\linewidth]{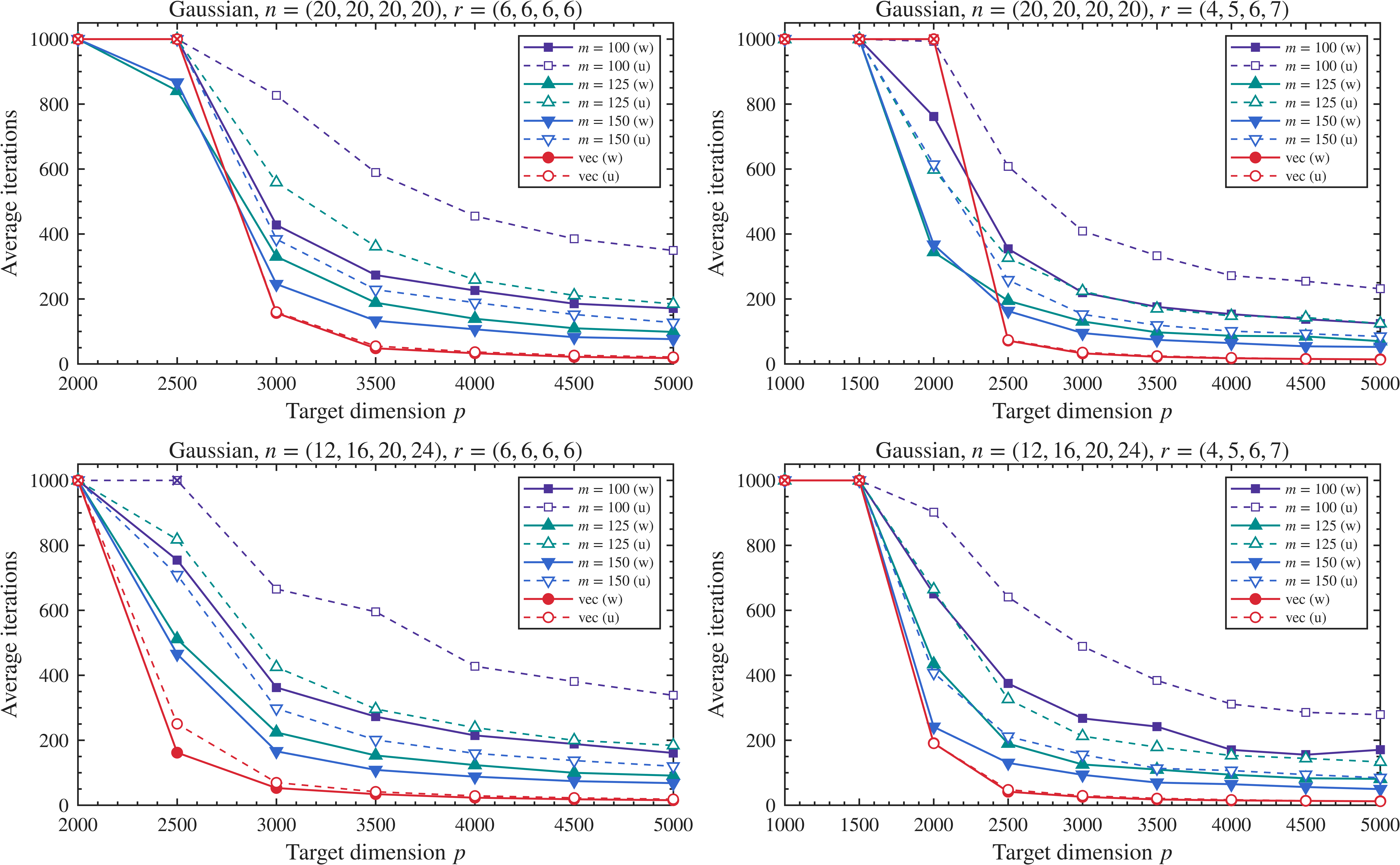}
	\caption{Average iteration numbers for Gaussian measurements. Values close to 1000 correspond to parameter regimes in which many trials do not reach the prescribed accuracy.}
	\label{fig:gaussian_iters}
\end{figure}

Figures~\ref{fig:sors_success} and \ref{fig:sors_iters} show the results
for SORS measurements. In the balanced-rank cases, the modewise methods
with $m=200$ or $250$ achieve nearly full recovery at target dimensions
comparable to, and sometimes smaller than, those required by the
vectorized methods. For the nonuniform rank $(5,6,7,8)$, increasing $m$
from $150$ to $200$ or $250$ substantially improves both the recovery
rate and convergence. The weighted variant generally requires fewer
iterations near the recovery threshold.

\begin{figure}[!tbp]
	\centering
	\includegraphics[width=0.98\linewidth]{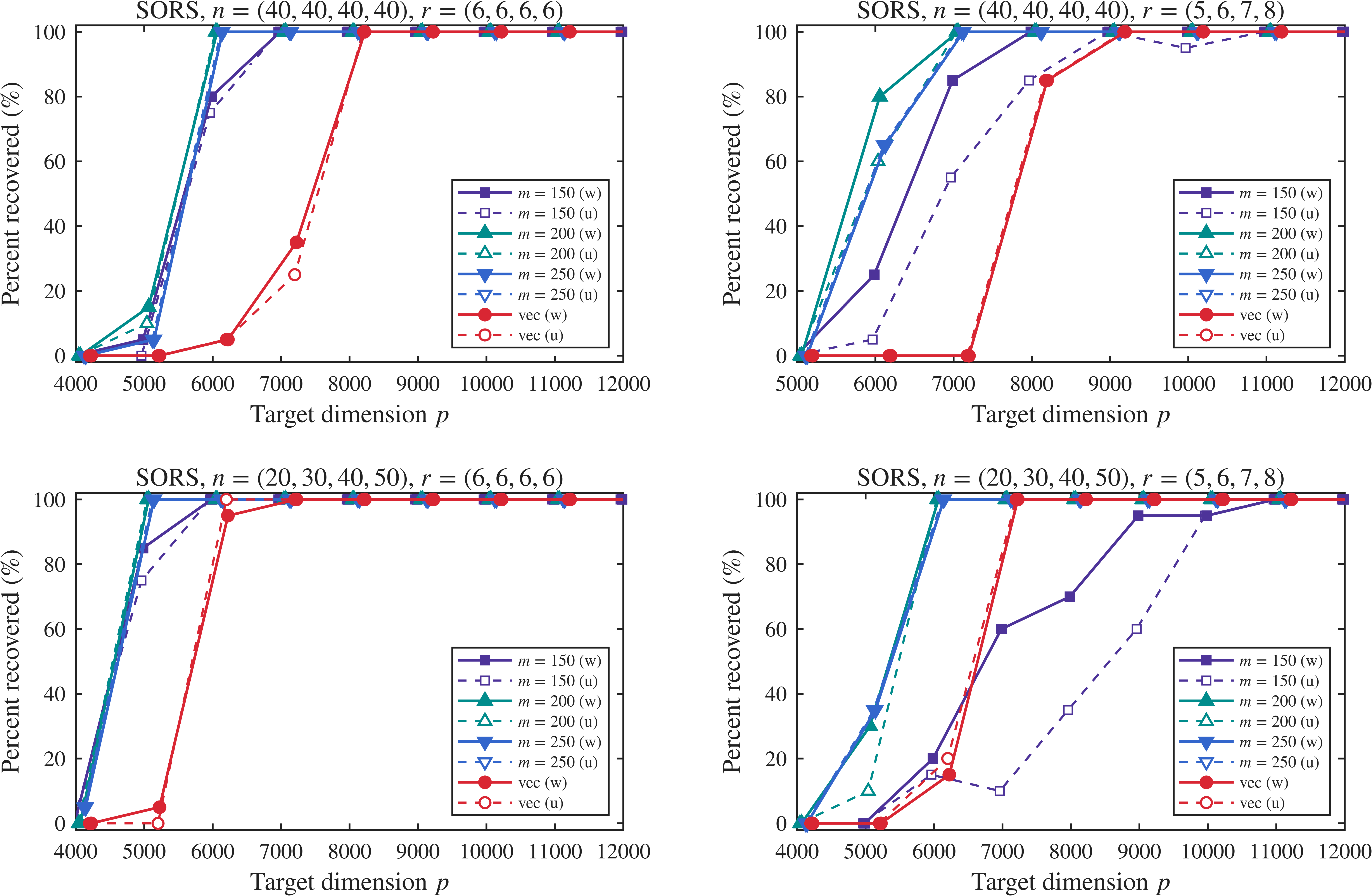}
	\caption{Successful recovery rates for SORS measurements under balanced and unbalanced tensor dimensions and multilinear ranks.}
	\label{fig:sors_success}
\end{figure}

\begin{figure}[!tbp]
	\centering
	\includegraphics[width=0.98\linewidth]{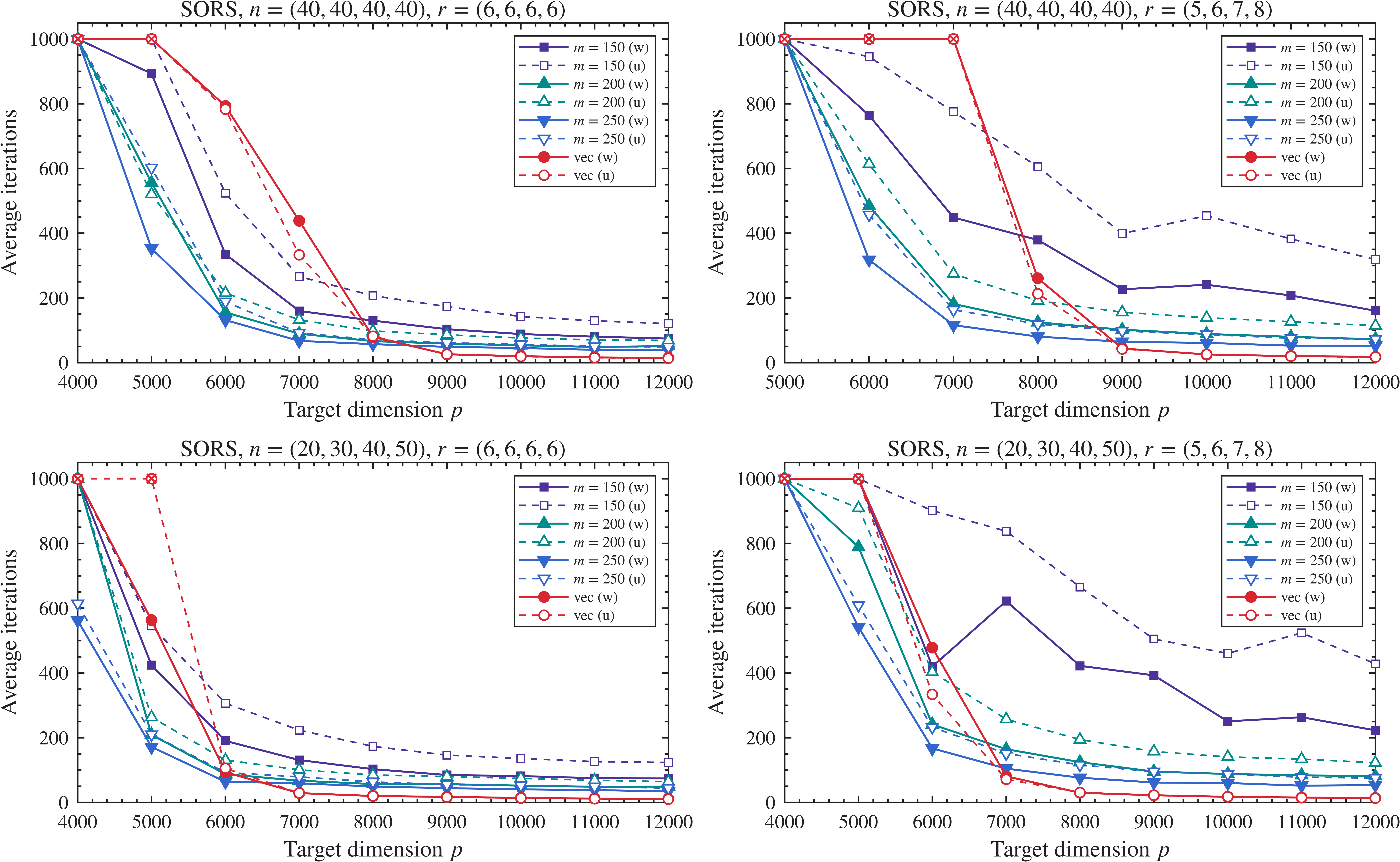}
	\caption{Average iteration numbers for SORS measurements. Increasing the intermediate dimension improves convergence, particularly in the unbalanced-rank experiments.}
	\label{fig:sors_iters}
\end{figure}

To further evaluate the computational efficiency,
Figure~\ref{fig:cpu_time} compares five methods: the proposed weighted
modewise RGD (weighted-mRGD), the unweighted modewise RGD (mRGD),
the vectorized RGD (vec-RGD), the modewise TIHT (TIHT), and the
vectorized TIHT (vec-TIHT), in terms of relative recovery error versus
CPU time. For Gaussian measurements, weighted-mRGD exhibits the fastest
convergence and reaches a relative error below $10^{-4}$ in approximately
$7$ seconds. Under SORS measurements, weighted-mRGD, mRGD, and vec-RGD
converge, whereas TIHT and vec-TIHT remain nearly stagnant. Among the
RGD methods, weighted-mRGD achieves the fastest error decay and the lowest
final error. These results show that the normalized weighting improves the
practical convergence efficiency of mRGD, particularly under SORS
measurements.

\begin{figure}[!tbp]
	\centering
	\includegraphics[width=0.98\linewidth]{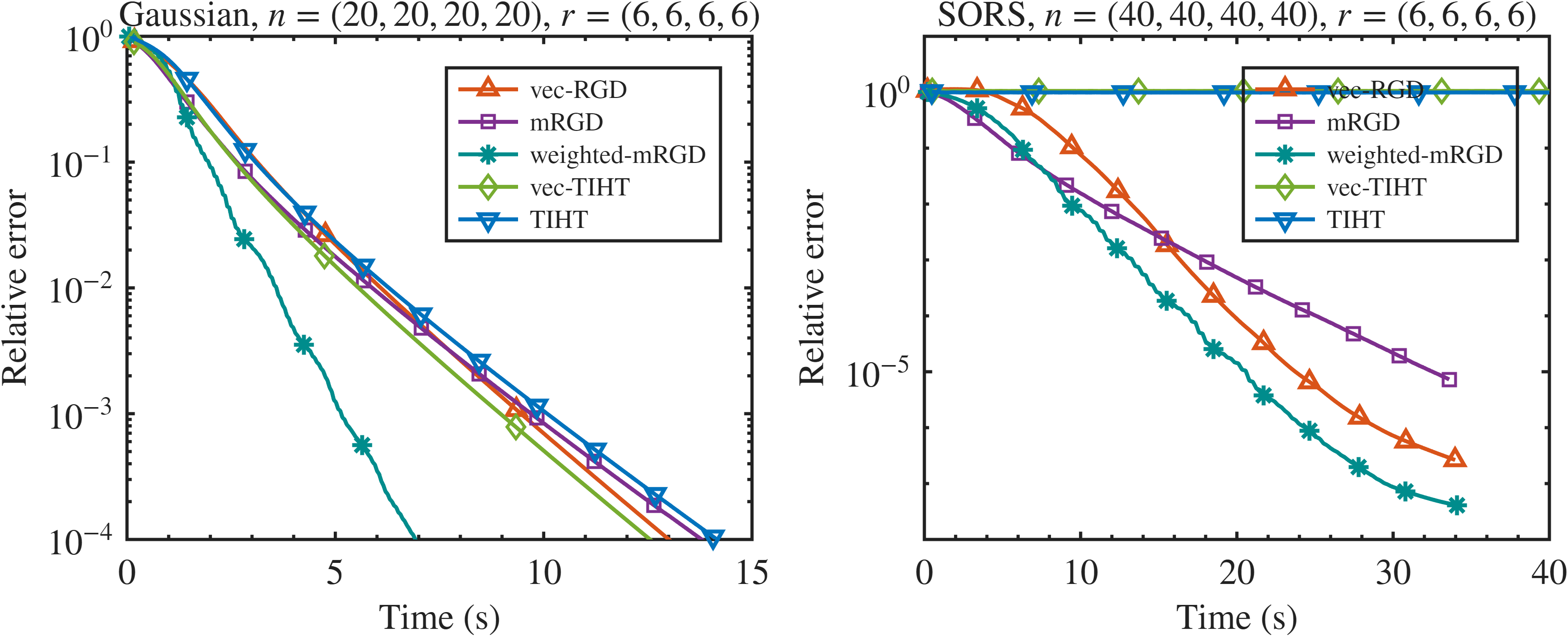}
	\caption{Relative error versus CPU time for weighted-mRGD, mRGD,
		vec-RGD, TIHT, and vec-TIHT. The left panel uses Gaussian measurements
		with $\boldsymbol{n}=(20,20,20,20)$ and
		$\boldsymbol{r}=(6,6,6,6)$, while the right panel uses SORS measurements
		with $\boldsymbol{n}=(40,40,40,40)$ and
		$\boldsymbol{r}=(6,6,6,6)$.}
	\label{fig:cpu_time}
\end{figure}

Figure~\ref{fig:method_comparison} compares weighted-mRGD, mRGD,
vec-RGD, TIHT, and vec-TIHT under Gaussian and SORS measurements.
For Gaussian measurements, all methods achieve accurate recovery when
$p$ is sufficiently large, while weighted-mRGD generally requires fewer
iterations near the recovery threshold. The difference is more evident
for SORS measurements. Both weighted-mRGD and mRGD reach relative errors
around $10^{-2}$ from $p\approx6000$, whereas vec-RGD requires a larger
target dimension. In contrast, TIHT and vec-TIHT remain inaccurate and
often reach the iteration cap over a large part of the tested range.
These results show that combining modewise measurements with RGD improves
the recovery efficiency, while the normalized weighting further
accelerates the convergence of mRGD.

\begin{figure}[!tbp]
	\centering
	\includegraphics[
	width=0.98\linewidth,
	keepaspectratio
	]{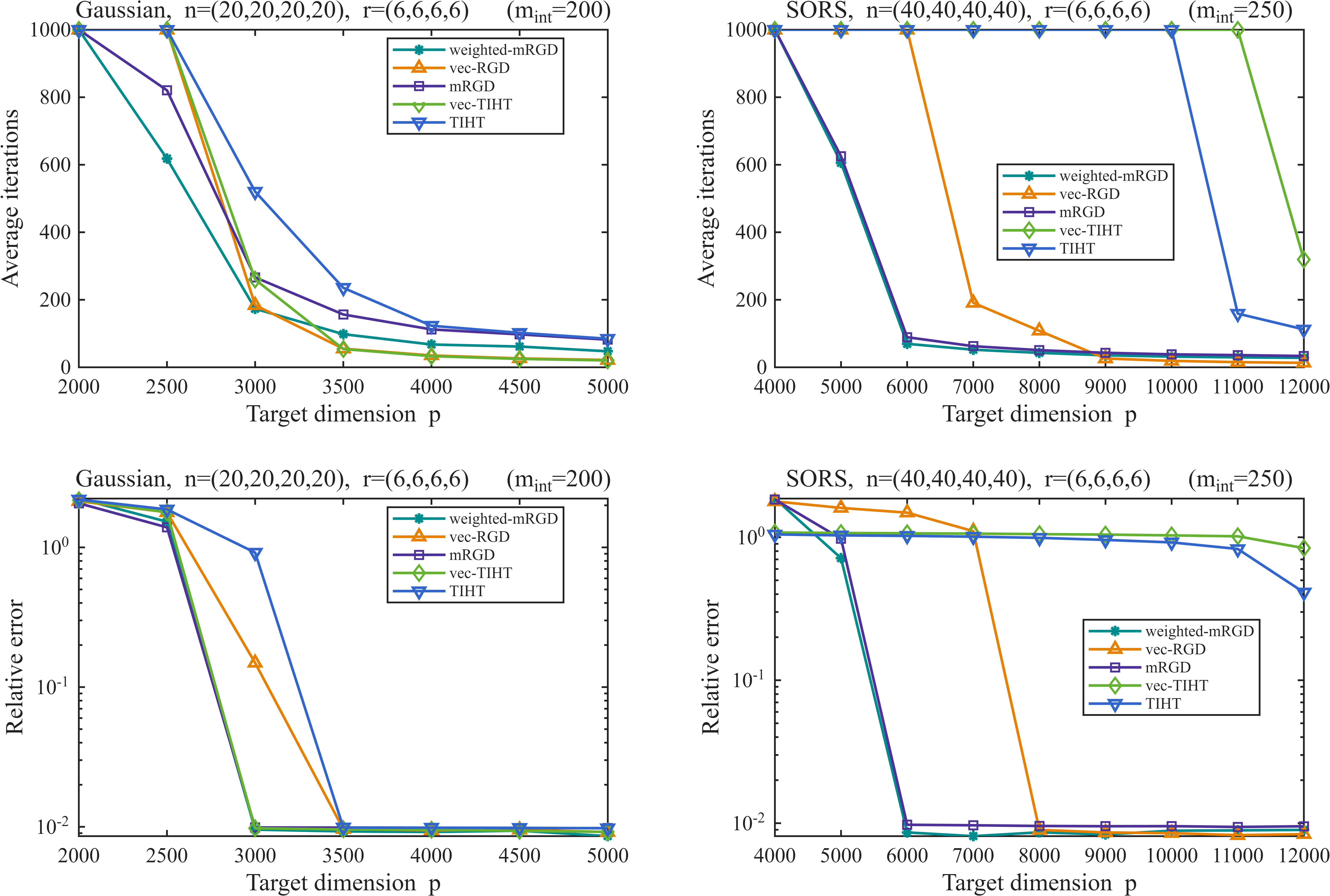}
	\caption{Comparison of weighted and unweighted RGD and TIHT
		methods under Gaussian and SORS measurements.
		The top and bottom rows report the average iteration numbers
		and final relative errors, respectively.
		The left column corresponds to Gaussian measurements with
		$\boldsymbol{n}=(20,20,20,20)$,
		$\boldsymbol{r}=(6,6,6,6)$, and
		$m_{\mathrm{int}}=200$, while the right column corresponds to
		SORS measurements with
		$\boldsymbol{n}=(40,40,40,40)$,
		$\boldsymbol{r}=(6,6,6,6)$, and
		$m_{\mathrm{int}}=250$.}
	\label{fig:method_comparison}
\end{figure}

Overall, the experiments confirm that the proposed method retains the recovery capability of vectorized RGD while using structured modewise measurements. The normalized adaptive weighting is most effective near the recovery threshold, where it generally improves the iteration count and, in several difficult settings, the empirical success rate.

\FloatBarrier
\section{Conclusion and Future Work}
\label{sec:conclusion}

In this paper, we combined modewise measurements with a normalized
block-weighted Riemannian gradient framework for low-multilinear-rank
tensor recovery. The proposed weighting adjusts the relative contributions
of the tangent-gradient components while preserving the underlying
multilinear-rank structure. Numerical results demonstrate improved
convergence efficiency, particularly near the recovery threshold and under
structured SORS measurements.
Future work will further explore the interaction between structured
modewise measurements and block-weighted Riemannian optimization, with
the aim of developing more efficient and scalable methods for large-scale
tensor recovery.

\appendix

\section*{Appendix: Complete Convergence Proofs}
\label{app:complete_convergence_proofs}
\addcontentsline{toc}{section}{Appendix: Complete Convergence Proofs}
\label{app:complete_convergence_proofs}

This appendix supplies the geometric and analytic estimates used in Section~\ref{sec:recovery_guarantee} and gives the complete proof of Theorem~\ref{thm:weighted_local_linear_convergence}.

For each mode $i\in[d]$, we introduce the modewise orthogonal projectors $\bm{\mathscr P}_{V_l^{(i)}}^{(i)}$ and $\bm{\mathscr P}_{U^{(i)}}^{(i)}$, which project the mode-$i$ fibers of any ambient tensor onto the column spaces of the current factor $V_l^{(i)}$ and the true factor $U^{(i)}$, respectively. They are equivalent to multiplying a tensor in mode $i$ by $V_l^{(i)}(V_l^{(i)})^T$ and $U^{(i)}(U^{(i)})^T$.

\begin{lemma}[Perturbation of the mode subspaces]
	\label{lem:app_mode_projector_perturbation}
	For every $i\in[d]$,
	\begin{equation*}
		\left\| \bm{\mathscr P}_{V_l^{(i)}}^{(i)} - \bm{\mathscr P}_{U^{(i)}}^{(i)} \right\|
		\le \frac{\|\mathcal X_l-\mathcal T\|_F}{\sigma_{r_i}(\mathcal T_{(i)})}.
	\end{equation*}
\end{lemma}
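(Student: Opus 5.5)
The plan is to reduce the statement to a classical matrix perturbation bound for the column spaces of the mode-$i$ matricizations, in the spirit of \cite{cai2020provable}. Fix $i\in[d]$ and write $P_V:=V_l^{(i)}(V_l^{(i)})^T$ and $P_U:=U^{(i)}(U^{(i)})^T$, both $n_i\times n_i$ orthogonal projectors of rank $r_i$.

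\emph{Step 1: reduce to matrices.} The modewise projector $\bm{\mathscr P}_{V_l^{(i)}}^{(i)}-\bm{\mathscr P}_{U^{(i)}}^{(i)}$ acts on an ambient tensor $\mathcal Y$ by $\mathcal Y_{(i)}\mapsto (P_V-P_U)\mathcal Y_{(i)}$. Matricization is an isometry between the Frobenius norms, so the operator norm of this tensor map equals $\sup_{\|Y\|_F=1}\|(P_V-P_U)Y\|_F=\|P_V-P_U\|_2$, the spectral norm.

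\emph{Step 2: rewrite the projector difference.} Both projectors have rank $r_i$, so the standard identity for equal-rank orthogonal projectors gives $\|P_V-P_U\|_2=\|(I-P_V)P_U\|_2$. This reduces the task to bounding $\|(I-P_V)P_U\|_2$.

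\emph{Step 3: use the structure of the iterate and the target.} Since $\mathcal X_l\in\mathbb M_{\boldsymbol r}$ has orthogonal Tucker factor $V_l^{(i)}$ with $\operatorname{rank}((\mathcal X_l)_{(i)})=r_i$, we have $(I-P_V)(\mathcal X_l)_{(i)}=0$. Likewise $\mathcal T_{(i)}$ has rank $r_i$ with column space $\operatorname{span}(U^{(i)})$. Therefore $P_U=\mathcal T_{(i)}\mathcal T_{(i)}^\dagger$, which yields
\[
(I-P_V)P_U=(I-P_V)\mathcal T_{(i)}\mathcal T_{(i)}^\dagger=(I-P_V)\bigl(\mathcal T_{(i)}-(\mathcal X_l)_{(i)}\bigr)\mathcal T_{(i)}^\dagger .
\]

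\emph{Step 4: take norms.} Using $\|I-P_V\|_2\le1$, $\|\mathcal T_{(i)}-(\mathcal X_l)_{(i)}\|_2\le\|\mathcal X_l-\mathcal T\|_F$, and $\|\mathcal T_{(i)}^\dagger\|_2=1/\sigma_{r_i}(\mathcal T_{(i)})$, we obtain
\[
\|(I-P_V)P_U\|_2\le\frac{\|\mathcal X_l-\mathcal T\|_F}{\sigma_{r_i}(\mathcal T_{(i)})}.
\]
Combined with Steps 1 and 2, this is the claimed bound.

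\emph{Main obstacle.} The only nontrivial ingredient is Step 2. I would justify it through the CS decomposition or principal angles: for two subspaces of equal dimension $r_i$, the nonzero singular values of $P_V-P_U$ are the sines of the principal angles, and these coincide with the singular values of $(I-P_V)P_U$. If one wants to avoid that identity, an alternative route is the triangle-type bound $\|P_V-P_U\|\le\max\{\|(I-P_V)P_U\|,\|(I-P_U)P_V\|\}$, again using equal ranks. The remaining steps are routine.
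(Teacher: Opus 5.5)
Your proposal is correct and follows the paper's proof essentially step for step. Both arguments reduce to the spectral norm $\|V_l^{(i)}(V_l^{(i)})^T-U^{(i)}(U^{(i)})^T\|_2$ via matricization, use the equal-rank projector identity, use the annihilation $(I-P_V)(\mathcal X_l)_{(i)}=0$, and finish with a pseudoinverse bound. The only cosmetic difference is that you write $P_U=\mathcal T_{(i)}\mathcal T_{(i)}^\dagger$ directly, whereas the paper factors through the core as $U^{(i)}=\mathcal T_{(i)}\bar U^{(i)}\mathcal B_{(i)}^\dagger$; these amount to the same thing, since $\mathcal T_{(i)}^\dagger=\bar U^{(i)}\mathcal B_{(i)}^\dagger(U^{(i)})^T$.
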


\begin{proof}
	By the definition of the modewise projectors and the invariance of the
	Frobenius norm under matricization,
	$\|\bm{\mathscr P}_{V_l^{(i)}}^{(i)}-\bm{\mathscr P}_{U^{(i)}}^{(i)}\|
	=
	\|V_l^{(i)}(V_l^{(i)})^T-U^{(i)}(U^{(i)})^T\|_2$.
	Since $V_l^{(i)}$ and $U^{(i)}$ have orthonormal columns and span
	subspaces of the same dimension $r_i$, the standard identity for
	equal-rank orthogonal projectors gives
	$\|V_l^{(i)}(V_l^{(i)})^T-U^{(i)}(U^{(i)})^T\|_2
	=
	\|\bm{\mathscr{P}}_{V_l^{(i)\perp}}U^{(i)}\|_2$.
	
	Let $\bar U^{(i)}$ denote the Kronecker product of the true factor
	matrices excluding the $i$-th mode, defined analogously to
	$\bar V_l^{(i)}$. Then
	$\mathcal T_{(i)}
	=
	U^{(i)}\mathcal B_{(i)}(\bar U^{(i)})^T$.
	Since $\operatorname{mulrank}(\mathcal T)=\boldsymbol r$,
	$\mathcal B_{(i)}$ has full row rank. Moreover, because
	$U^{(i)}$ and $\bar U^{(i)}$ have orthonormal columns,
	$\mathcal T_{(i)}$ and $\mathcal B_{(i)}$ have the same nonzero
	singular values. Hence
	$\|\mathcal B_{(i)}^\dagger\|_2
	=
	1/\sigma_{r_i}(\mathcal T_{(i)})$, and
	$U^{(i)}
	=
	\mathcal T_{(i)}\bar U^{(i)}\mathcal B_{(i)}^\dagger$.
	
	Since the column space of $(\mathcal X_l)_{(i)}$ is contained in
	$\operatorname{span}(V_l^{(i)})$,
	$\bm{\mathscr{P}}_{V_l^{(i)\perp}}(\mathcal X_l)_{(i)}=0$. Therefore,
	\begin{align*}
		\|\bm{\mathscr{P}}_{V_l^{(i)\perp}} U^{(i)}\|_2
		&=
		\left\|
		\bm{\mathscr{P}}_{V_l^{(i)\perp}}
		(\mathcal T-\mathcal X_l)_{(i)}
		\bar U^{(i)}
		\mathcal B_{(i)}^\dagger
		\right\|_2
		\nonumber\\
		&\le
		\|\bm{\mathscr{P}}_{V_l^{(i)\perp}}\|_2
		\|(\mathcal T-\mathcal X_l)_{(i)}\|_F
		\|\bar U^{(i)}\|_2
		\|\mathcal B_{(i)}^\dagger\|_2
		\le
		\frac{\|\mathcal X_l-\mathcal T\|_F}
		{\sigma_{r_i}(\mathcal T_{(i)})}.
	\end{align*}
\end{proof}

For later use, we define the orthogonal projector associated with the row space of the $i$-th factor tangent block by
\begin{equation}
	\left(
	\bm{\mathscr P}_{\mathcal B_l,\{V_l^{(j)}\}_{j\ne i}}^{(i)}\mathcal Y
	\right)_{(i)}
	=
	\mathcal Y_{(i)}
	\bar V_l^{(i)}
	(\mathcal B_l)_{(i)}^\dagger
	(\mathcal B_l)_{(i)}
	(\bar V_l^{(i)})^T.
	\label{eq:app_factor_row_projector}
\end{equation}
Because $(\mathcal B_l)_{(i)}^\dagger(\mathcal B_l)_{(i)}$ is the orthogonal projector onto the row space of $(\mathcal B_l)_{(i)}$ and $\bar V_l^{(i)}$ has orthonormal columns, \eqref{eq:app_factor_row_projector} indeed defines an orthogonal projector.

The closed-form expressions imply the following decomposition of the canonical tangent-space projector:
\begin{equation}
	\bm{\mathscr{P}}_{\mathcal S_l}
	=
	\prod_{i=1}^d
	\bm{\mathscr P}_{V_l^{(i)}}^{(i)}
	+
	\sum_{i=1}^d
	\bm{\mathscr P}_{\mathcal B_l,\{V_l^{(j)}\}_{j\ne i}}^{(i)}
	\bm{\mathscr P}_{V_l^{(i)\perp}}^{(i)}.
	\label{eq:app_tangent_projector_decomposition}
\end{equation}
The projectors appearing in each product in \eqref{eq:app_tangent_projector_decomposition} commute because the mode-$i$ projector acts on the left side of the mode-$i$ matricization, whereas $\bm{\mathscr P}_{\mathcal B_l,\{V_l^{(j)}\}_{j\ne i}}^{(i)}$ acts on the corresponding right side.

\begin{lemma}[Second-order tangent-space mismatch]
	\label{lem:app_tangent_mismatch}
	The orthogonal tangent-space projector satisfies
	\begin{equation}
		\|(\bm{\mathscr I}-\bm{\mathscr{P}}_{\mathcal S_l})\mathcal T\|_F
		\le
		\frac{2^d-1}{\min_{1\le i\le d}\sigma_{r_i}(\mathcal T_{(i)})}
		\|\mathcal X_l-\mathcal T\|_F^2.
		\label{eq:tangent_mismatch_second_order}
	\end{equation}
\end{lemma}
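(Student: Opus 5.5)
Write $\mathcal E_l:=\mathcal X_l-\mathcal T$. Abbreviate $P_i:=\bm{\mathscr P}_{V_l^{(i)}}^{(i)}$, $P_i^\perp:=\bm{\mathscr I}-P_i$ and $Q_i:=\bm{\mathscr P}_{\mathcal B_l,\{V_l^{(j)}\}_{j\ne i}}^{(i)}$. The plan is to expand $(\bm{\mathscr I}-\bm{\mathscr P}_{\mathcal S_l})\mathcal T$ into a sum of terms, each containing at least two ``perpendicular'' factors. Each perpendicular factor will contribute one power of $\|\mathcal E_l\|_F$, and we will count the terms to obtain the constant $2^d-1$.

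\textbf{Step 1 (telescoping identity).} The modewise projectors $P_1,\ldots,P_d$ commute, so
$\bm{\mathscr I}=\prod_{i=1}^d(P_i+P_i^\perp)$ expands into $2^d$ products indexed by subsets $S\subseteq[d]$, where $S$ is the set of modes carrying $P_i^\perp$. We subtract the decomposition \eqref{eq:app_tangent_projector_decomposition}. The subset $S=\emptyset$ cancels with $\prod_i P_i$. For $S=\{i\}$, the term $P_i^\perp\prod_{j\ne i}P_j$ is paired with $Q_iP_i^\perp$. This gives
\[
(\bm{\mathscr I}-\bm{\mathscr P}_{\mathcal S_l})\mathcal T=\sum_{i=1}^d\Bigl(\prod_{j\ne i}P_j-Q_i\Bigr)P_i^\perp\mathcal T+\sum_{|S|\ge2}\prod_{i\in S}P_i^\perp\prod_{j\notin S}P_j\,\mathcal T .
\]
There are $d$ terms in the first sum and $2^d-1-d$ terms in the second, for a total of $2^d-1$. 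Each term will be shown to have norm at most $\|\mathcal E_l\|_F^2/\min_i\sigma_{r_i}(\mathcal T_{(i)})$.

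\textbf{Step 2 (terms with $|S|\ge2$).} Since $P_i^\perp\mathcal X_l=0$, we have $P_i^\perp\mathcal T=-P_i^\perp\mathcal E_l$. Choose two indices $i\ne k$ in $S$, and rewrite $\mathcal T=\mathcal T\times_i U^{(i)}U^{(i)T}$. Then
\[
P_i^\perp P_k^\perp\mathcal T=P_k^\perp\bigl(P_i^\perp\bm{\mathscr P}^{(i)}_{U^{(i)}}\bigr)\mathcal T=(P_i^\perp-P_i^\perp P_U^{(i)\perp})\ldots
\]
More simply, $P_i^\perp P_k^\perp\mathcal T=(P_i^\perp\bm{\mathscr P}^{(i)}_{U^{(i)}})P_k^\perp\mathcal T$, because the projectors act on different modes and therefore commute. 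Lemma~\ref{lem:app_mode_projector_perturbation} gives
\[
\|P_i^\perp\bm{\mathscr P}^{(i)}_{U^{(i)}}\|=\|P_i^\perp-\bm{\mathscr P}^{(i)}_{U^{(i)\perp}}\|\cdot\text{(restricted)}\le\|P_i-\bm{\mathscr P}^{(i)}_{U^{(i)}}\|\le\|\mathcal E_l\|_F/\sigma_{r_i}(\mathcal T_{(i)}).
\]
We also have $\|P_k^\perp\mathcal T\|_F=\|P_k^\perp\mathcal E_l\|_F\le\|\mathcal E_l\|_F$. The remaining projectors in the product are contractions, so the bound follows.

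\textbf{Step 3 (the $d$ mixed terms; main obstacle).} Here only one perpendicular factor is visible, so a second power of $\|\mathcal E_l\|_F$ must come from the mismatch $\prod_{j\ne i}P_j-Q_i$. We work in the mode-$i$ matricization. The operator $\prod_{j\ne i}P_j$ acts on the right by $\bar V_l^{(i)}\bar V_l^{(i)T}$, while $Q_i$ acts by $\bar V_l^{(i)}(\mathcal B_l)_{(i)}^\dagger(\mathcal B_l)_{(i)}\bar V_l^{(i)T}$. Their difference is right multiplication by the projector onto $\bar V_l^{(i)}\ker((\mathcal B_l)_{(i)})$, i.e.\ onto the part of $\operatorname{range}(\bar V_l^{(i)})$ orthogonal to the row space of $(\mathcal X_l)_{(i)}$. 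Call this projector $\Pi$. Then $(\mathcal X_l)_{(i)}\Pi=0$, and hence
\[
P_i^\perp\mathcal T_{(i)}\Pi=P_i^\perp(\mathcal T-\mathcal X_l)_{(i)}\Pi.
\]
To gain the second factor, write $\mathcal T_{(i)}=\mathcal T_{(i)}W$, where $W$ is the projector onto the row space of $\mathcal T_{(i)}$, and then
\[
P_i^\perp\mathcal T_{(i)}\Pi=P_i^\perp\mathcal T_{(i)}\,W\Pi .
\]
The factor $W\Pi$ is a product of the row projector of $\mathcal T_{(i)}$ with the complement of the row projector of $(\mathcal X_l)_{(i)}$. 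A row-space analogue of Lemma~\ref{lem:app_mode_projector_perturbation} (apply its argument to the transposes $\mathcal T_{(i)}^T$ and $(\mathcal X_l)_{(i)}^T$, which have rank $r_i$) gives $\|W\Pi\|_2\le\|\mathcal E_l\|_F/\sigma_{r_i}(\mathcal T_{(i)})$. Combining this with $\|P_i^\perp\mathcal T_{(i)}\|_F=\|P_i^\perp(\mathcal E_l)_{(i)}\|_F\le\|\mathcal E_l\|_F$ bounds the mixed term.

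Getting the rank and row-space bookkeeping right in Step~3 is where I expect the difficulty. Once it is done, summing the $2^d-1$ bounds and using $\sigma_{r_i}(\mathcal T_{(i)})\ge\min_j\sigma_{r_j}(\mathcal T_{(j)})$ yields \eqref{eq:tangent_mismatch_second_order}.
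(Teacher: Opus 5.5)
Your proof is correct. Step~1 and Step~2 match the paper's proof in substance. The paper uses the same subset expansion of $\bm{\mathscr I}-\bm{\mathscr P}_{\mathcal S_l}$ into $d$ singleton terms and $2^d-1-d$ higher-order terms. For the higher-order terms it gets one factor $\|\mathcal E_l\|_F/\sigma_{r_i}(\mathcal T_{(i)})$ from the mode-$i$ column-space perturbation, and one factor $\|\mathcal E_l\|_F$ from a complement projector in another mode that annihilates $\mathcal X_l$. The garbled middle expression in your Step~2 can be replaced by the identity $P_i^\perp\bm{\mathscr P}^{(i)}_{U^{(i)}}=(\bm{\mathscr P}^{(i)}_{U^{(i)}}-P_i)\bm{\mathscr P}^{(i)}_{U^{(i)}}$, which gives the inequality you want.

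The genuine difference is in Step~3. The paper handles the singleton terms with the same template as the higher-order ones:
\begin{itemize}
\item It writes $P_i^\perp\mathcal T=(\bm{\mathscr P}^{(i)}_{U^{(i)}}-P_i)\mathcal T$.
\item It commutes this mode-$i$ difference past $\prod_{j\ne i}P_j-Q_i$, which acts on the right of the mode-$i$ unfolding.
\item It uses that $\prod_{j\ne i}P_j-Q_i$ is a difference of nested projectors, hence a contraction, and that it annihilates $\mathcal X_l$.
\end{itemize}
So in the paper the factor $1/\sigma_{r_i}$ always comes from Lemma~\ref{lem:app_mode_projector_perturbation}.

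You swap the roles. In your argument $P_i^\perp$ kills $\mathcal X_l$ on the left and supplies $\|\mathcal E_l\|_F$, while $1/\sigma_{r_i}$ comes from a row-space perturbation bound on $W\Pi$. This works, but one point should be stated precisely. Let $W_X$ be the row projector of $(\mathcal X_l)_{(i)}$. Then $\Pi$ is not $I-W_X$ itself, only a sub-projector of it, since its range lies in $\operatorname{range}(\bar V_l^{(i)})$. What you actually use is
$\|W\Pi\|_2\le\|W(I-W_X)\|_2=\|\mathcal T_{(i)}^\dagger(\mathcal T-\mathcal X_l)_{(i)}(I-W_X)\|_2\le\|\mathcal E_l\|_F/\sigma_{r_i}(\mathcal T_{(i)})$.
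Written this way, the bound needs no equal-rank identity at all.

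Your route shows explicitly that the mixed terms measure the row-space misalignment between $\mathcal T_{(i)}$ and $(\mathcal X_l)_{(i)}$. The cost is a second, transposed perturbation estimate. The paper's route is more economical and treats all $2^d-1$ terms uniformly.

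You were in fact one line away from the paper's route. Having noted $(\mathcal X_l)_{(i)}\Pi=0$, insert $U^{(i)}(U^{(i)})^T$ as in your Step~2 to get
$(I-V_l^{(i)}(V_l^{(i)})^T)\mathcal T_{(i)}\Pi=(I-V_l^{(i)}(V_l^{(i)})^T)U^{(i)}(U^{(i)})^T(\mathcal T-\mathcal X_l)_{(i)}\Pi$.
The column-space lemma alone then gives the bound.
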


\begin{proof}
	Expanding
	$\bm{\mathscr I}=\prod_{i=1}^d
	(\bm{\mathscr P}_{V_l^{(i)}}^{(i)}
	+\bm{\mathscr P}_{V_l^{(i)\perp}}^{(i)})$
	and subtracting \eqref{eq:app_tangent_projector_decomposition}
	decomposes $\bm{\mathscr I}-\bm{\mathscr{P}}_{\mathcal S_l}$ into $d$ singleton terms
	($|\Lambda|=1$) and $2^d-1-d$ higher-order terms
	($|\Lambda|\ge2$):
	\begin{equation}
		\bm{\mathscr I}-\bm{\mathscr{P}}_{\mathcal S_l}
		=
		\sum_{i=1}^d
		\underbrace{\left(\prod_{j\ne i}\bm{\mathscr P}_{V_l^{(j)}}^{(j)}-\bm{\mathscr P}_{\mathcal B_l,\{V_l^{(j)}\}_{j\ne i}}^{(i)}\right)}_{:=\bm{\mathscr E}_i}
		\bm{\mathscr P}_{V_l^{(i)\perp}}^{(i)}
		+
		\sum_{\substack{\Lambda\subseteq[d]\\|\Lambda|\ge2}}
		\underbrace{\left(\prod_{j\in\Lambda\setminus\{i\}}\bm{\mathscr P}_{V_l^{(j)\perp}}^{(j)}\right)\left(\prod_{j\notin\Lambda}\bm{\mathscr P}_{V_l^{(j)}}^{(j)}\right)}_{:=\widetilde{\bm{\mathscr E}}_\Lambda}
		\bm{\mathscr P}_{V_l^{(i)\perp}}^{(i)}.
		\label{eq:app_normal_projector_expansion}
	\end{equation}
	where, for each higher-order term, $i$ denotes an arbitrary index
	chosen from $\Lambda$.
	
	Using the identity
	$\bm{\mathscr P}_{U^{(i)}}^{(i)}\mathcal T=\mathcal T$, we have
	$\bm{\mathscr P}_{V_l^{(i)\perp}}^{(i)}\mathcal T
	=
	(\bm{\mathscr P}_{U^{(i)}}^{(i)}
	-\bm{\mathscr P}_{V_l^{(i)}}^{(i)})\mathcal T$
	for every $i\in[d]$.
	
	For the singleton terms, both projectors defining $\bm{\mathscr E}_i$
	act as the identity on $\mathcal X_l$, and hence
	$\bm{\mathscr E}_i\mathcal X_l=0$. Moreover, the range of
	$\bm{\mathscr P}_{\mathcal B_l,\{V_l^{(j)}\}_{j\ne i}}^{(i)}$
	is contained in the range of
	$\prod_{j\ne i}\bm{\mathscr P}_{V_l^{(j)}}^{(j)}$.
	Therefore, $\bm{\mathscr E}_i$ is the difference of two nested
	orthogonal projectors and satisfies $\|\bm{\mathscr E}_i\|\le1$.
	Since $\bm{\mathscr E}_i$ commutes with the mode-$i$ projectors,
	\begin{equation*}
		\bm{\mathscr E}_i \bm{\mathscr P}_{V_l^{(i)\perp}}^{(i)}\mathcal T
		=
		(\bm{\mathscr P}_{U^{(i)}}^{(i)}-\bm{\mathscr P}_{V_l^{(i)}}^{(i)})
		\bm{\mathscr E}_i\mathcal T
		=
		(\bm{\mathscr P}_{U^{(i)}}^{(i)}-\bm{\mathscr P}_{V_l^{(i)}}^{(i)})
		\bm{\mathscr E}_i(\mathcal T-\mathcal X_l).
	\end{equation*}
	Hence, Lemma~\ref{lem:app_mode_projector_perturbation} gives
	\[
	\left\|
	\bm{\mathscr E}_i
	\bm{\mathscr P}_{V_l^{(i)\perp}}^{(i)}\mathcal T
	\right\|_F
	\le
	\frac{\|\mathcal T-\mathcal X_l\|_F^2}
	{\sigma_{r_i}(\mathcal T_{(i)})}.
	\]
	
	For each higher-order term,
	$\widetilde{\bm{\mathscr E}}_\Lambda$ contains at least one
	mode-$j$ orthogonal complement projector with $j\ne i$, so that
	$\widetilde{\bm{\mathscr E}}_\Lambda\mathcal X_l=0$.
	Furthermore, since it is a product of commuting orthogonal
	projectors, $\|\widetilde{\bm{\mathscr E}}_\Lambda\|\le1$, and it
	commutes with the mode-$i$ projectors. The same argument therefore
	yields
	\[
	\left\|
	\widetilde{\bm{\mathscr E}}_\Lambda
	\bm{\mathscr P}_{V_l^{(i)\perp}}^{(i)}\mathcal T
	\right\|_F
	\le
	\frac{\|\mathcal T-\mathcal X_l\|_F^2}
	{\sigma_{r_i}(\mathcal T_{(i)})}.
	\]
	There are $d$ singleton terms and $2^d-1-d$ higher-order terms
	in \eqref{eq:app_normal_projector_expansion}. Summing their
	Frobenius norms and uniformly bounding the denominators by
	$\min_{1\le i\le d}\sigma_{r_i}(\mathcal T_{(i)})$ yields
	\eqref{eq:tangent_mismatch_second_order}.
\end{proof}

\begin{lemma}[Restricted tangent-space operator bounds]
	\label{lem:app_restricted_trip_operator}
	Assume that $\bm{\mathscr L}$ satisfies
	$\operatorname{TRIP}(\delta_{2\boldsymbol r},2\boldsymbol r)$. Then
	\begin{equation}
		\left\|
		\bm{\mathscr{P}}_{\mathcal S_l}-
		\bm{\mathscr{P}}_{\mathcal S_l}\bm{\mathscr L}^*\bm{\mathscr L} \bm{\mathscr{P}}_{\mathcal S_l}
		\right\|
		\le\delta_{2\boldsymbol r},
		\qquad
		\left\|
		\bm{\mathscr{P}}_{\mathcal S_l}\bm{\mathscr L}^*\bm{\mathscr L} \bm{\mathscr{P}}_{\mathcal S_l}
		\right\|
		\le1+\delta_{2\boldsymbol r}.
		\label{eq:app_restricted_trip_operator_bounds}
	\end{equation}
\end{lemma}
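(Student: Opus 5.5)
Both bounds follow from the TRIP through a quadratic-form argument on the tangent space $\mathcal S_l$. The key fact is the one recorded in Section~\ref{subsec:tucker_manifold}: every $\xi\in\mathcal S_l$ satisfies $\operatorname{mulrank}(\xi)\preceq 2\boldsymbol r$. For any such $\xi$, $\operatorname{TRIP}(\delta_{2\boldsymbol r},2\boldsymbol r)$ therefore gives
\begin{equation*}
	\bigl|\langle \xi,\bm{\mathscr L}^*\bm{\mathscr L}\xi\rangle_F-\|\xi\|_F^2\bigr|
	=
	\bigl|\|\bm{\mathscr L}(\xi)\|_2^2-\|\xi\|_F^2\bigr|
	\le\delta_{2\boldsymbol r}\|\xi\|_F^2 .
\end{equation*}
Here we use that $\langle \xi,\bm{\mathscr L}^*\bm{\mathscr L}\xi\rangle_F=\|\bm{\mathscr L}(\xi)\|_2^2$ by the definition of the adjoint.

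Next I would pass from tangent vectors to arbitrary ambient tensors. Consider the self-adjoint operator $\bm{\mathscr K}:=\bm{\mathscr P}_{\mathcal S_l}-\bm{\mathscr P}_{\mathcal S_l}\bm{\mathscr L}^*\bm{\mathscr L}\bm{\mathscr P}_{\mathcal S_l}$. It is self-adjoint because $\bm{\mathscr P}_{\mathcal S_l}$ is an orthogonal projector and $\bm{\mathscr L}^*\bm{\mathscr L}$ is self-adjoint. Its operator norm is therefore $\sup_{\|\mathcal Y\|_F=1}|\langle\mathcal Y,\bm{\mathscr K}\mathcal Y\rangle_F|$. For arbitrary $\mathcal Y$, set $\xi=\bm{\mathscr P}_{\mathcal S_l}\mathcal Y\in\mathcal S_l$. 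Using idempotence and self-adjointness of the projector, we get $\langle\mathcal Y,\bm{\mathscr K}\mathcal Y\rangle_F=\|\xi\|_F^2-\|\bm{\mathscr L}(\xi)\|_2^2$. This quantity is bounded in absolute value by $\delta_{2\boldsymbol r}\|\xi\|_F^2\le\delta_{2\boldsymbol r}\|\mathcal Y\|_F^2$, which gives the first inequality.

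For the second bound, $\bm{\mathscr P}_{\mathcal S_l}\bm{\mathscr L}^*\bm{\mathscr L}\bm{\mathscr P}_{\mathcal S_l}$ is self-adjoint and positive semidefinite. Its quadratic form equals $\|\bm{\mathscr L}(\bm{\mathscr P}_{\mathcal S_l}\mathcal Y)\|_2^2\le(1+\delta_{2\boldsymbol r})\|\bm{\mathscr P}_{\mathcal S_l}\mathcal Y\|_F^2\le(1+\delta_{2\boldsymbol r})\|\mathcal Y\|_F^2$, so the norm bound follows. Alternatively, one can write the operator as $\bm{\mathscr P}_{\mathcal S_l}-\bm{\mathscr K}$ and use the triangle inequality with $\|\bm{\mathscr P}_{\mathcal S_l}\|\le1$.

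The only point needing care is the rank fact: the TRIP may be applied only because every tangent vector has multilinear rank at most $2\boldsymbol r$. This holds since each mode-$i$ fiber space of $\xi$ lies in $\operatorname{span}[V_l^{(i)}\ \dot V^{(i)}]$, which is exactly the augmented representation used in Section~\ref{subsubsec:efficient_retraction}. The other ingredient is the standard identity $\|\bm{\mathscr K}\|=\sup|\langle\mathcal Y,\bm{\mathscr K}\mathcal Y\rangle_F|$, which holds for self-adjoint operators; no further obstacle is expected.
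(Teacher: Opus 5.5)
Your proposal is correct and follows essentially the same route as the paper: both bound the quadratic forms of the self-adjoint operators $\bm{\mathscr P}_{\mathcal S_l}-\bm{\mathscr P}_{\mathcal S_l}\bm{\mathscr L}^*\bm{\mathscr L}\bm{\mathscr P}_{\mathcal S_l}$ and $\bm{\mathscr P}_{\mathcal S_l}\bm{\mathscr L}^*\bm{\mathscr L}\bm{\mathscr P}_{\mathcal S_l}$ by applying the TRIP to tangent vectors, which have multilinear rank at most $2\boldsymbol r$. The only cosmetic difference is that the paper restricts to unit tensors in $\mathcal S_l$ after noting that both operators vanish on $\mathcal S_l^\perp$, whereas you take an arbitrary ambient $\mathcal Y$ and pass to $\xi=\bm{\mathscr P}_{\mathcal S_l}\mathcal Y$, which amounts to the same thing.
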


\begin{proof}
	The first operator in \eqref{eq:app_restricted_trip_operator_bounds}
	is self-adjoint and vanishes on $\mathcal S_l^\perp$, so its operator
	norm is determined by its restriction to $\mathcal S_l$. For any
	tensor $\mathcal Z\in\mathcal S_l$ with $\|\mathcal Z\|_F=1$, we have $\bm{\mathscr{P}}_{\mathcal S_l}\mathcal Z=\mathcal Z$ and 
	$\operatorname{mulrank}(\mathcal Z)\preceq 2\boldsymbol r$, and hence
	the TRIP implies $|\|\mathcal Z\|_F^2-\|\bm{\mathscr L}(\mathcal Z)\|_2^2|\le\delta_{2\boldsymbol r}$.
	By the linearity of the inner product, the definition of the adjoint operator $\bm{\mathscr L}^*$, and the property $\bm{\mathscr{P}}_{\mathcal S_l}\mathcal Z=\mathcal Z$, we can expand the quadratic form as:
	\begin{align*}
		\left\langle \left( \bm{\mathscr{P}}_{\mathcal S_l}-\bm{\mathscr{P}}_{\mathcal S_l}\bm{\mathscr L}^*\bm{\mathscr L} \bm{\mathscr{P}}_{\mathcal S_l} \right)\mathcal Z, \mathcal Z \right\rangle_F 
		&= \langle \bm{\mathscr{P}}_{\mathcal S_l}\mathcal Z, \mathcal Z \rangle_F - \langle \bm{\mathscr{P}}_{\mathcal S_l}\bm{\mathscr L}^*\bm{\mathscr L} \bm{\mathscr{P}}_{\mathcal S_l}\mathcal Z, \mathcal Z \rangle_F \\
		&= \langle \mathcal Z, \mathcal Z \rangle_F - \langle \bm{\mathscr L}^*\bm{\mathscr L} \mathcal Z, \mathcal Z \rangle_F = \|\mathcal Z\|_F^2 - \|\bm{\mathscr L}(\mathcal Z)\|_2^2
		\le\delta_{2\boldsymbol r}.
	\end{align*}
	Thus, the first bound follows by taking the supremum of the absolute value over unit tensors in $\mathcal S_l$.
	The second operator is positive semidefinite and also vanishes on $\mathcal S_l^\perp$. By a similar Rayleigh quotient argument, the upper TRIP inequality guarantees $\langle \bm{\mathscr{P}}_{\mathcal S_l}\bm{\mathscr L}^*\bm{\mathscr L} \bm{\mathscr{P}}_{\mathcal S_l}\mathcal Z, \mathcal Z \rangle_F = \|\bm{\mathscr L}(\mathcal Z)\|_2^2 \le 1+\delta_{2\boldsymbol r}$ for any unit tensor $\mathcal Z \in \mathcal S_l$, yielding $\|\bm{\mathscr{P}}_{\mathcal S_l}\bm{\mathscr L}^*\bm{\mathscr L} \bm{\mathscr{P}}_{\mathcal S_l}\| \le 1+\delta_{2\boldsymbol r}$.
\end{proof}

\begin{lemma}[Rank and TRIP bound for the normal residual]
	\label{lem:app_normal_residual_cross_term}
	The tensor $(\bm{\mathscr I}-\bm{\mathscr{P}}_{\mathcal S_l})\mathcal T$ has multilinear rank at most $2\boldsymbol r$. Consequently, under $\operatorname{TRIP}(\delta_{2\boldsymbol r},2\boldsymbol r)$, it holds that
	\begin{equation*}
		\|\bm{\mathscr{P}}_{\mathcal S_l}\bm{\mathscr L}^*\bm{\mathscr L}
		(\bm{\mathscr I}-\bm{\mathscr{P}}_{\mathcal S_l})\mathcal T\|_F
		\le
		(1+\delta_{2\boldsymbol r})
		\|(\bm{\mathscr I}-\bm{\mathscr{P}}_{\mathcal S_l})\mathcal T\|_F.
	\end{equation*}
\end{lemma}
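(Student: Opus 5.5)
We prove the rank claim first, using only the modewise structure of the projector decomposition \eqref{eq:app_tangent_projector_decomposition}. Write $\mathcal T=\mathcal B_T\times_{i\in[d]}U^{(i)}$, so that $\operatorname{mulrank}(\mathcal T)\preceq\boldsymbol r$. Every term of $\bm{\mathscr P}_{\mathcal S_l}\mathcal T$ is obtained from $\mathcal T$ by applying operators that act on mode-$i$ fibers by left multiplication, or on the right side of the mode-$i$ matricization. In mode $i$, the column space of $(\bm{\mathscr P}_{\mathcal S_l}\mathcal T)_{(i)}$ is therefore contained in $\operatorname{span}(V_l^{(i)})+\operatorname{span}(U^{(i)})$.

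To see this, consider the pieces one at a time.
\begin{itemize}
	\item The core piece $\prod_j\bm{\mathscr P}^{(j)}_{V_l^{(j)}}\mathcal T$ has mode-$i$ column space inside $\operatorname{span}(V_l^{(i)})$.
	\item The $i$-th factor piece $\bm{\mathscr P}^{(i)}_{\mathcal B_l,\cdot}\bm{\mathscr P}^{(i)}_{V_l^{(i)\perp}}\mathcal T$ has mode-$i$ matricization $(I-V_l^{(i)}V_l^{(i)T})\mathcal T_{(i)}(\cdots)$. Its columns lie in $\operatorname{span}(V_l^{(i)},U^{(i)})$, since $\mathcal T_{(i)}$ has columns in $\operatorname{span}(U^{(i)})$.
	\item For the $k$-th factor piece with $k\neq i$, the mode-$i$ action is $\bm{\mathscr P}^{(i)}_{V_l^{(i)}}$, because $\bm{\mathscr P}^{(k)}_{\mathcal B_l,\cdot}$ has range inside $\prod_{j\ne k}\bm{\mathscr P}^{(j)}_{V_l^{(j)}}$, as used in Lemma~\ref{lem:app_tangent_mismatch}. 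Its mode-$i$ columns therefore lie in $\operatorname{span}(V_l^{(i)})$.
\end{itemize}
Hence the mode-$i$ column space of $\mathcal T-\bm{\mathscr P}_{\mathcal S_l}\mathcal T$ lies in $\operatorname{span}(U^{(i)})+\operatorname{span}(V_l^{(i)})$, which has dimension at most $2r_i$. This gives $\operatorname{mulrank}\preceq 2\boldsymbol r$.

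For the norm bound we use duality. Set $\mathcal E:=(\bm{\mathscr I}-\bm{\mathscr P}_{\mathcal S_l})\mathcal T$. Then
\[
\|\bm{\mathscr P}_{\mathcal S_l}\bm{\mathscr L}^*\bm{\mathscr L}\mathcal E\|_F=\sup_{\mathcal Z\in\mathcal S_l,\ \|\mathcal Z\|_F=1}\langle\bm{\mathscr L}(\mathcal E),\bm{\mathscr L}(\mathcal Z)\rangle .
\]
Both $\mathcal E$ and $\mathcal Z$ have multilinear rank at most $2\boldsymbol r$: the first by the rank claim just proved, the second as a tangent vector. By Cauchy--Schwarz and the upper TRIP inequality,
\[
\langle\bm{\mathscr L}(\mathcal E),\bm{\mathscr L}(\mathcal Z)\rangle\le\|\bm{\mathscr L}(\mathcal E)\|_2\,\|\bm{\mathscr L}(\mathcal Z)\|_2\le(1+\delta_{2\boldsymbol r})\|\mathcal E\|_F .
\]
Here $\sqrt{1+\delta}\cdot\sqrt{1+\delta}=1+\delta$. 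Taking the supremum gives the claimed inequality.

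The main obstacle is the bookkeeping in the rank argument. We need to verify carefully that the right-side projector $\bm{\mathscr P}^{(k)}_{\mathcal B_l,\{V_l^{(j)}\}_{j\ne k}}$ acts in every mode $i\ne k$ as $\bm{\mathscr P}^{(i)}_{V_l^{(i)}}$ followed by a multiplication that does not enlarge mode-$i$ column spaces. This follows from the formula \eqref{eq:app_factor_row_projector}, since $\bar V_l^{(k)}$ is a Kronecker product of the $V_l^{(j)}$.
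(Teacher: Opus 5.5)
Your proof is correct and matches the paper's argument. You show that each mode-$i$ column space of $(\bm{\mathscr I}-\bm{\mathscr P}_{\mathcal S_l})\mathcal T$ lies in $\operatorname{span}(U^{(i)})+\operatorname{span}(V_l^{(i)})$, reading this off the projector decomposition rather than the closed-form factor formula, which is the same computation. You then bound the cross term by duality over unit tangent vectors, Cauchy--Schwarz, and the upper TRIP inequality applied to both rank-$2\boldsymbol r$ arguments.
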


\begin{proof}
	By the tangent projection formula \eqref{eq:factor_closed_form}, the
	mode-$k$ column space of $\bm{\mathscr{P}}_{\mathcal S_l}\mathcal T$ is contained
	in the sum of $\operatorname{span}(V_l^{(k)})$ and
	$\operatorname{col}(\bm{\mathscr P}_{V_l^{(k)\perp}}\mathcal T_{(k)})$.
	Since $\operatorname{col}(\mathcal T_{(k)})
	=\operatorname{span}(U^{(k)})$, we have
	$\operatorname{col}(\bm{\mathscr P}_{V_l^{(k)\perp}}\mathcal T_{(k)})
	\subseteq\operatorname{span}(U^{(k)},V_l^{(k)})$. Hence
	$\operatorname{col}((\bm{\mathscr{P}}_{\mathcal S_l}\mathcal T)_{(k)})
	\subseteq\operatorname{span}(U^{(k)},V_l^{(k)})$.
	Since the same inclusion holds for $\mathcal T_{(k)}$, it follows that
	$\operatorname{col}(((\bm{\mathscr I}-\bm{\mathscr{P}}_{\mathcal S_l})\mathcal T)_{(k)})
	\subseteq\operatorname{span}(U^{(k)},V_l^{(k)})$.
	Therefore,
	$\operatorname{mulrank}((\bm{\mathscr I}-\bm{\mathscr{P}}_{\mathcal S_l})\mathcal T)
	\preceq2\boldsymbol r$.
	
	For the cross-term bound, the self-adjointness of
	$\bm{\mathscr{P}}_{\mathcal S_l}$ and the duality of the Frobenius norm give
	\begin{equation*}
		\left\|\bm{\mathscr{P}}_{\mathcal S_l}\bm{\mathscr L}^*\bm{\mathscr L}(\bm{\mathscr I}-\bm{\mathscr{P}}_{\mathcal S_l})\mathcal T\right\|_F
		=
		\sup_{\substack{\mathcal Z\in\mathcal S_l\\ \|\mathcal Z\|_F=1}}
		\left|
		\left\langle
		\bm{\mathscr L}(\bm{\mathscr I}-\bm{\mathscr{P}}_{\mathcal S_l})\mathcal T,
		\bm{\mathscr L}(\mathcal Z)
		\right\rangle
		\right|
		\le
		(1+\delta_{2\boldsymbol r})\left\|(\bm{\mathscr I}-\bm{\mathscr{P}}_{\mathcal S_l})\mathcal T\right\|_F,
	\end{equation*}
	where the inequality follows from the Cauchy--Schwarz inequality and
	the upper TRIP bound applied separately to
	$(\bm{\mathscr I}-\bm{\mathscr{P}}_{\mathcal S_l})\mathcal T$ and $\mathcal Z$, both of which have
	multilinear rank at most $2\boldsymbol r$.
\end{proof}

We next give the complete proofs of the two adaptive perturbation estimates used in Section~\ref{subsec:main_local_convergence}.

\begin{lemma}[Operator bound for the weight perturbation]
	\label{lem:El_bound}
	Let $\rho_l:=\max_{0\le k\le d}|\omega_{l,k}-1|$. Then, for every ambient tensor $\mathcal Z$,
	\begin{equation}
		\|(\bm{\mathscr{P}}_l^{\boldsymbol\omega_l}-\bm{\mathscr{P}}_{\mathcal S_l})\mathcal Z\|_F
		\le
		\rho_l
		\|\bm{\mathscr{P}}_{\mathcal S_l}\mathcal Z\|_F.
		\label{eq:El_operator_bound}
	\end{equation}
	In particular,
	\begin{equation}
		\|(\bm{\mathscr{P}}_l^{\boldsymbol\omega_l}-\bm{\mathscr{P}}_{\mathcal S_l})\|_{\mathcal S_l\to\mathcal S_l}
		\le
		\rho_l.
		\label{eq:El_restricted_operator_norm}
	\end{equation}
\end{lemma}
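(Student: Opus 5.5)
The plan is to write the perturbation operator blockwise and then use the orthogonality of the tangent blocks. By Definition~\ref{def:adaptive_tangent_operator} and the decomposition $\bm{\mathscr{P}}_{\mathcal S_l}=\sum_{k=0}^d\bm{\Pi}_l^{(k)}$ from Section~\ref{subsec:tucker_manifold}, we have the identity
\[
\bm{\mathscr{P}}_l^{\boldsymbol\omega_l}-\bm{\mathscr{P}}_{\mathcal S_l}
=\sum_{k=0}^{d}(\omega_{l,k}-1)\bm{\Pi}_l^{(k)} .
\]
Apply this to an arbitrary ambient tensor $\mathcal Z$. The summand $(\omega_{l,k}-1)\bm{\Pi}_l^{(k)}\mathcal Z$ lies in $\mathcal S_l^{(k)}$, and these subspaces are mutually orthogonal by \eqref{eq:tangent_direct_sum}. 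The Pythagorean identity then gives
\[
\|(\bm{\mathscr{P}}_l^{\boldsymbol\omega_l}-\bm{\mathscr{P}}_{\mathcal S_l})\mathcal Z\|_F^2
=\sum_{k=0}^{d}(\omega_{l,k}-1)^2\|\bm{\Pi}_l^{(k)}\mathcal Z\|_F^2
\le\rho_l^2\sum_{k=0}^{d}\|\bm{\Pi}_l^{(k)}\mathcal Z\|_F^2 .
\]

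Next, identify the last sum. Applying Pythagoras once more to $\bm{\mathscr{P}}_{\mathcal S_l}\mathcal Z=\sum_k\bm{\Pi}_l^{(k)}\mathcal Z$ shows that it equals $\|\bm{\mathscr{P}}_{\mathcal S_l}\mathcal Z\|_F^2$. Taking square roots yields \eqref{eq:El_operator_bound}.

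For \eqref{eq:El_restricted_operator_norm}, restrict to $\mathcal Z\in\mathcal S_l$. In that case $\bm{\mathscr{P}}_{\mathcal S_l}\mathcal Z=\mathcal Z$. Also, the image of the difference operator lies in $\mathcal S_l$, since each summand does. The bound therefore reads $\|(\bm{\mathscr{P}}_l^{\boldsymbol\omega_l}-\bm{\mathscr{P}}_{\mathcal S_l})\mathcal Z\|_F\le\rho_l\|\mathcal Z\|_F$, and taking the supremum over unit $\mathcal Z\in\mathcal S_l$ gives the operator-norm statement.

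There is no real obstacle here. The only point needing care is that the blocks $\bm{\Pi}_l^{(k)}$ are orthogonal projectors with mutually orthogonal ranges. This justifies both the cross-term cancellation and the identity $\sum_k\|\bm{\Pi}_l^{(k)}\mathcal Z\|_F^2=\|\bm{\mathscr{P}}_{\mathcal S_l}\mathcal Z\|_F^2$, and it is exactly the content of the orthogonal direct-sum decomposition \eqref{eq:tangent_direct_sum}.
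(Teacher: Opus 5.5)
Your proposal is correct and matches the paper's proof: you expand the difference as $\sum_{k=0}^{d}(\omega_{l,k}-1)\bm{\Pi}_l^{(k)}$, use the mutual orthogonality of the block ranges to obtain the Pythagorean identity and the bound $\rho_l^2\|\bm{\mathscr{P}}_{\mathcal S_l}\mathcal Z\|_F^2$, and then specialize to $\mathcal Z\in\mathcal S_l$. Your remark that the difference operator maps into $\mathcal S_l$ is a small but useful addition, since it justifies the $\mathcal S_l\to\mathcal S_l$ notation that the paper leaves implicit.
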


\begin{proof}
	Since the ranges of $\bm{\Pi}_l^{(0)},\ldots,\bm{\Pi}_l^{(d)}$ are mutually orthogonal, for every ambient tensor $\mathcal Z$ we have
	\begin{equation}
		\left\|(\bm{\mathscr{P}}_l^{\boldsymbol\omega_l}-\bm{\mathscr{P}}_{\mathcal S_l})\mathcal Z\right\|_F^2
		=
		\sum_{k=0}^d(\omega_{l,k}-1)^2\|\bm{\Pi}_l^{(k)}\mathcal Z\|_F^2
		\le
		\rho_l^2\sum_{k=0}^d\|\bm{\Pi}_l^{(k)}\mathcal Z\|_F^2
		=
		\rho_l^2\|\bm{\mathscr{P}}_{\mathcal S_l}\mathcal Z\|_F^2.
		\label{eq:app_weight_perturbation_orthogonal_sum}
	\end{equation}
	Taking square roots in \eqref{eq:app_weight_perturbation_orthogonal_sum} gives \eqref{eq:El_operator_bound}. For $\mathcal Z\in\mathcal S_l$, we have $\bm{\mathscr{P}}_{\mathcal S_l}\mathcal Z=\mathcal Z$, and hence taking the supremum over nonzero tensors in $\mathcal S_l$ gives \eqref{eq:El_restricted_operator_norm}.
\end{proof}

\begin{lemma}[Adaptive-weight cross term]
	\label{lem:weighted_cross_term}
	Assume that $\bm{\mathscr L}$ satisfies
	$\operatorname{TRIP}(\delta_{2\boldsymbol r},2\boldsymbol r)$. Then
	\begin{equation}
		\begin{split}
			&\|(\bm{\mathscr{P}}_l^{\boldsymbol\omega_l}-\bm{\mathscr{P}}_{\mathcal S_l})\bm{\mathscr L}^*\bm{\mathscr L}
			(\mathcal X_l-\mathcal T)\|_F\\
			&\qquad\le
			\rho_l(1+\delta_{2\boldsymbol r})
			\left(
			\|\mathcal X_l-\mathcal T\|_F+
			\frac{2^d-1}{\min_{1\le i\le d}\sigma_{r_i}(\mathcal T_{(i)})}\|\mathcal X_l-\mathcal T\|_F^2
			\right).
		\end{split}
		\label{eq:weighted_cross_term_bound}
	\end{equation}
\end{lemma}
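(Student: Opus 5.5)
The plan is to combine Lemma~\ref{lem:El_bound} with a split of the error $\mathcal X_l-\mathcal T$ into its tangent and normal parts. By \eqref{eq:El_operator_bound} applied to the ambient tensor $\bm{\mathscr L}^*\bm{\mathscr L}(\mathcal X_l-\mathcal T)$, we get
\[
\|(\bm{\mathscr{P}}_l^{\boldsymbol\omega_l}-\bm{\mathscr{P}}_{\mathcal S_l})\bm{\mathscr L}^*\bm{\mathscr L}(\mathcal X_l-\mathcal T)\|_F
\le
\rho_l\,\|\bm{\mathscr{P}}_{\mathcal S_l}\bm{\mathscr L}^*\bm{\mathscr L}(\mathcal X_l-\mathcal T)\|_F .
\]
It therefore suffices to bound $\|\bm{\mathscr{P}}_{\mathcal S_l}\bm{\mathscr L}^*\bm{\mathscr L}(\mathcal X_l-\mathcal T)\|_F$ by $(1+\delta_{2\boldsymbol r})$ times the bracket in \eqref{eq:weighted_cross_term_bound}.

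For this, write $\mathcal X_l-\mathcal T=\bm{\mathscr{P}}_{\mathcal S_l}(\mathcal X_l-\mathcal T)-(\bm{\mathscr I}-\bm{\mathscr{P}}_{\mathcal S_l})\mathcal T$. This uses $\mathcal X_l\in\mathcal S_l$, so that $(\bm{\mathscr I}-\bm{\mathscr{P}}_{\mathcal S_l})\mathcal X_l=0$. The triangle inequality then gives two terms.

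For the tangent term, note $\bm{\mathscr{P}}_{\mathcal S_l}\bm{\mathscr L}^*\bm{\mathscr L}\bm{\mathscr{P}}_{\mathcal S_l}(\mathcal X_l-\mathcal T)$. Lemma~\ref{lem:app_restricted_trip_operator} bounds it by $(1+\delta_{2\boldsymbol r})\|\bm{\mathscr{P}}_{\mathcal S_l}(\mathcal X_l-\mathcal T)\|_F$, which is at most $(1+\delta_{2\boldsymbol r})\|\mathcal X_l-\mathcal T\|_F$ since orthogonal projection is nonexpansive.

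For the normal term, Lemma~\ref{lem:app_normal_residual_cross_term} gives the bound $(1+\delta_{2\boldsymbol r})\|(\bm{\mathscr I}-\bm{\mathscr{P}}_{\mathcal S_l})\mathcal T\|_F$. Lemma~\ref{lem:app_tangent_mismatch} then bounds this by $(1+\delta_{2\boldsymbol r})\frac{2^d-1}{\min_i\sigma_{r_i}(\mathcal T_{(i)})}\|\mathcal X_l-\mathcal T\|_F^2$. Adding the two estimates and multiplying by $\rho_l$ yields \eqref{eq:weighted_cross_term_bound}.

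The only delicate point is that $\mathcal X_l-\mathcal T$ itself may have multilinear rank up to $2\boldsymbol r$, but $\bm{\mathscr L}^*\bm{\mathscr L}$ of it is not low rank. This is why TRIP cannot be applied to the whole residual directly. The splitting above avoids the issue: each piece lies either in $\mathcal S_l$ or is the rank-$2\boldsymbol r$ normal residual, where the earlier lemmas apply. We also rely on Lemma~\ref{lem:El_bound} holding for arbitrary ambient inputs, not just tangent ones, and it was stated in that form.
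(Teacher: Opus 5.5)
Your proposal is correct and follows the paper's proof essentially step for step. It applies Lemma~\ref{lem:El_bound} to $\bm{\mathscr L}^*\bm{\mathscr L}(\mathcal X_l-\mathcal T)$, uses the split $\mathcal X_l-\mathcal T=\bm{\mathscr{P}}_{\mathcal S_l}(\mathcal X_l-\mathcal T)-(\bm{\mathscr I}-\bm{\mathscr{P}}_{\mathcal S_l})\mathcal T$, bounds the two pieces with Lemmas~\ref{lem:app_restricted_trip_operator} and~\ref{lem:app_normal_residual_cross_term}, and finishes with Lemma~\ref{lem:app_tangent_mismatch}. One correction to your closing remark: the splitting is not actually forced, since the duality argument of \eqref{eq:trip_block_gradient_bound} (pairing unit $\mathcal Z\in\mathcal S_l$ against the rank-$2\boldsymbol r$ tensor $\mathcal X_l-\mathcal T$, then Cauchy--Schwarz and upper TRIP) directly gives $\|\bm{\mathscr{P}}_{\mathcal S_l}\bm{\mathscr L}^*\bm{\mathscr L}(\mathcal X_l-\mathcal T)\|_F\le(1+\delta_{2\boldsymbol r})\|\mathcal X_l-\mathcal T\|_F$, so the lemma even holds without its second-order term.
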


\begin{proof}
	Since $\mathcal X_l\in\mathcal S_l$, we have $\bm{\mathscr{P}}_{\mathcal S_l}\mathcal X_l=\mathcal X_l$ and therefore $\mathcal X_l-\mathcal T=\bm{\mathscr{P}}_{\mathcal S_l}(\mathcal X_l-\mathcal T)-(\bm{\mathscr I}-\bm{\mathscr{P}}_{\mathcal S_l})\mathcal T$. Applying Lemma~\ref{lem:El_bound} to $\bm{\mathscr L}^*\bm{\mathscr L}(\mathcal X_l-\mathcal T)$ and then using this decomposition gives
	\begin{align}
		&\left\|
		(\bm{\mathscr{P}}_l^{\boldsymbol\omega_l}-\bm{\mathscr{P}}_{\mathcal S_l})
		\bm{\mathscr L}^*\bm{\mathscr L}(\mathcal X_l-\mathcal T)
		\right\|_F
		\nonumber\\
		&\quad\le
		\rho_l
		\left\|
		\bm{\mathscr{P}}_{\mathcal S_l}\bm{\mathscr L}^*\bm{\mathscr L}
		\bm{\mathscr{P}}_{\mathcal S_l}(\mathcal X_l-\mathcal T)
		\right\|_F
		+
		\rho_l
		\left\|
		\bm{\mathscr{P}}_{\mathcal S_l}\bm{\mathscr L}^*\bm{\mathscr L}
		(\bm{\mathscr I}-\bm{\mathscr{P}}_{\mathcal S_l})\mathcal T
		\right\|_F
		\nonumber\\
		&\quad\le
		\rho_l(1+\delta_{2\boldsymbol r})
		\left(
		\|\mathcal X_l-\mathcal T\|_F+
		\|(\bm{\mathscr I}-\bm{\mathscr{P}}_{\mathcal S_l})\mathcal T\|_F
		\right).
		\label{eq:app_weighted_cross_term_chain}
	\end{align}
	In the last inequality of \eqref{eq:app_weighted_cross_term_chain}, we used Lemma~\ref{lem:app_restricted_trip_operator}, Lemma~\ref{lem:app_normal_residual_cross_term}, and $\|\bm{\mathscr{P}}_{\mathcal S_l}(\mathcal X_l-\mathcal T)\|_F\le\|\mathcal X_l-\mathcal T\|_F$. Applying the second-order estimate \eqref{eq:tangent_mismatch_second_order} to the remaining normal component proves \eqref{eq:weighted_cross_term_bound}.
\end{proof}

\begin{lemma}[Step-size bound for exact line search]
	\label{lem:weighted_stepsize_bound}
	Suppose $\rho_l := \max_{0\le k\le d}|\omega_{l,k}-1| < 1$ and $\bm{\mathscr L}$ satisfies $\operatorname{TRIP}(\delta_{2\boldsymbol r}, 2\boldsymbol{r})$. Then the step size $\alpha_l$ in \eqref{eq:exact_stepsize} satisfies
	\begin{equation}
		\frac{1}{(1+\delta_{2\boldsymbol r})(1+\rho_l)} \le \alpha_l \le \frac{1}{(1-\delta_{2\boldsymbol r})(1-\rho_l)}.
		\label{eq:weighted_stepsize_interval}
	\end{equation}
	Consequently,
	\begin{equation}
		|\alpha_l-1| \le \frac{1}{(1-\delta_{2\boldsymbol r})(1-\rho_l)} - 1.
		\label{eq:weighted_stepsize_deviation}
	\end{equation}
\end{lemma}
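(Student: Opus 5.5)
The plan is to bound the numerator and the denominator of \eqref{eq:exact_stepsize} separately, in terms of the single quantity $\|\mathcal Z_l\|_F^2=\sum_{k=0}^d\omega_{l,k}^2\|\bm{\Pi}_l^{(k)}\mathcal G_l\|_F^2$. Write $g_k:=\|\bm{\Pi}_l^{(k)}\mathcal G_l\|_F^2$. We may assume $\mathcal Z_l\neq0$, which is the case in which $\alpha_l$ is defined; the TRIP then forces $\bm{\mathscr L}(\mathcal Z_l)\neq0$. Since $1-\rho_l\le\omega_{l,k}\le1+\rho_l$ and $1-\rho_l>0$, every weight is positive.

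\emph{Numerator.} For each $k$ we have $\omega_{l,k}g_k=\omega_{l,k}^{-1}\,\omega_{l,k}^2g_k$ and $\omega_{l,k}^{-1}\in[(1+\rho_l)^{-1},(1-\rho_l)^{-1}]$. Summing over $k$ gives
\[
\frac{\|\mathcal Z_l\|_F^2}{1+\rho_l}\le\langle\mathcal G_l,\mathcal Z_l\rangle_F=\sum_{k=0}^d\omega_{l,k}g_k\le\frac{\|\mathcal Z_l\|_F^2}{1-\rho_l}.
\]

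\emph{Denominator.} As observed at the start of Section~\ref{subsec:modewise_trip_2r}, $\mathcal Z_l\in\mathcal S_l$, so $\operatorname{mulrank}(\mathcal Z_l)\preceq2\boldsymbol r$. The TRIP therefore gives
\[
(1-\delta_{2\boldsymbol r})\|\mathcal Z_l\|_F^2\le\|\bm{\mathscr L}(\mathcal Z_l)\|_2^2\le(1+\delta_{2\boldsymbol r})\|\mathcal Z_l\|_F^2.
\]

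\emph{Combining.} Dividing the numerator bounds by the denominator bounds, the factor $\|\mathcal Z_l\|_F^2$ cancels and \eqref{eq:weighted_stepsize_interval} follows.

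For \eqref{eq:weighted_stepsize_deviation}, write $a:=(1-\delta_{2\boldsymbol r})(1-\rho_l)$ and $b:=(1+\delta_{2\boldsymbol r})(1+\rho_l)$. If $\alpha_l\ge1$, then $|\alpha_l-1|\le a^{-1}-1$ directly from the upper bound. If $\alpha_l<1$, then $|\alpha_l-1|\le1-b^{-1}$, so it suffices to show $1-b^{-1}\le a^{-1}-1$, that is, $a^{-1}+b^{-1}\ge2$. Since $a,b>0$, the AM--HM inequality gives $a^{-1}+b^{-1}\ge4/(a+b)$. A direct expansion yields $a+b=2(1+\delta_{2\boldsymbol r}\rho_l)\le4$, because $\delta_{2\boldsymbol r},\rho_l<1$. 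This elementary comparison is the only step that needs any care; everything else follows immediately from the orthogonality of the tangent blocks and the TRIP.
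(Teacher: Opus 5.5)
Your derivation of the interval \eqref{eq:weighted_stepsize_interval} follows the paper's argument. The paper writes the numerator bound as $(1-\rho_l)\langle\mathcal G_l,\mathcal Z_l\rangle_F\le\|\mathcal Z_l\|_F^2\le(1+\rho_l)\langle\mathcal G_l,\mathcal Z_l\rangle_F$, which is your two-sided bound rearranged. It then combines this with the TRIP applied to $\mathcal Z_l\in\mathcal S_l$. That part is fine.

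The step you singled out as needing care does not work as written. AM--HM gives $a^{-1}+b^{-1}\ge 4/(a+b)=2/(1+\delta_{2\boldsymbol r}\rho_l)$, and this is at most $2$. Your bound $a+b\le 4$ only yields $a^{-1}+b^{-1}\ge 1$. No refinement of this route can succeed, because it would need $a+b\le2$, whereas $a+b=2(1+\delta_{2\boldsymbol r}\rho_l)>2$ whenever $\delta_{2\boldsymbol r}\rho_l>0$.

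The inequality $a^{-1}+b^{-1}\ge2$ is nevertheless true; you need to use the product $ab$ rather than the sum. Since $ab=(1-\delta_{2\boldsymbol r}^2)(1-\rho_l^2)\le1$,
\[
a^{-1}+b^{-1}=\frac{a+b}{ab}=\frac{2(1+\delta_{2\boldsymbol r}\rho_l)}{(1-\delta_{2\boldsymbol r}^2)(1-\rho_l^2)}\ge2 .
\]
Equivalently, $ab\le1$ gives $b^{-1}\ge a$, hence $1-b^{-1}\le1-a\le a^{-1}-1$ by $a+a^{-1}\ge2$.

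For comparison, the paper does not justify this step at all; it simply asserts that the upper endpoint gives the larger deviation from one. You were right that an argument is needed here, but it has to go through $ab\le1$, not AM--HM.
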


\begin{proof}
	Let $a_{l,k}:=\|\bm{\Pi}_l^{(k)}\mathcal G_l\|_F$. The mutual
	orthogonality of the tangent blocks yields $\langle\mathcal G_l,\mathcal Z_l\rangle_F=\sum_{k=0}^{d}\omega_{l,k}a_{l,k}^2$ and $\|\mathcal Z_l\|_F^2=\sum_{k=0}^{d}\omega_{l,k}^2a_{l,k}^2$.
	Since $1-\rho_l\le\omega_{l,k}\le1+\rho_l$, it follows that
	\begin{equation}
		(1-\rho_l)\langle\mathcal G_l,\mathcal Z_l\rangle_F
		\le
		\|\mathcal Z_l\|_F^2
		\le
		(1+\rho_l)\langle\mathcal G_l,\mathcal Z_l\rangle_F.
		\label{eq:weighted_inner_norm_bound}
	\end{equation}
	
	Moreover, $\mathcal Z_l\in\mathcal S_l$ implies
	$\operatorname{mulrank}(\mathcal Z_l)\preceq2\boldsymbol r$.
	Hence $\operatorname{TRIP}(\delta_{2\boldsymbol r},2\boldsymbol r)$ gives
	\begin{equation}
		(1-\delta_{2\boldsymbol r})\|\mathcal Z_l\|_F^2
		\le
		\|\bm{\mathscr L}(\mathcal Z_l)\|_2^2
		\le
		(1+\delta_{2\boldsymbol r})\|\mathcal Z_l\|_F^2.
		\label{eq:TRIP_weighted_direction}
	\end{equation}
	Combining \eqref{eq:weighted_inner_norm_bound} and
	\eqref{eq:TRIP_weighted_direction} with
	$\alpha_l
	=
	\frac{\langle\mathcal G_l,\mathcal Z_l\rangle_F}{\|\bm{\mathscr L}(\mathcal Z_l)\|_2^2}$
	yields \eqref{eq:weighted_stepsize_interval}.
	
	The upper endpoint in \eqref{eq:weighted_stepsize_interval} gives
	the larger deviation from one, which proves \eqref{eq:weighted_stepsize_deviation}.
\end{proof}

\begin{lemma}[Uniform control of the adaptive-weight deviation]
	\label{lem:app_weight_deviation_control}
	Suppose that $\max_{0\le k\le d}\|\bm{\Pi}_l^{(k)}\mathcal G_l\|_F\le\Gamma$,
	and define $S_{\Gamma}:=\sqrt{\Gamma^2+\eta^2}$. Then
	\begin{equation}
		\rho_l
		:=
		\max_{0\le k\le d}
		|\omega_{l,k}-1|
		\le
		\rho_\eta(\Gamma)
		:=
		\frac{
			d(S_{\Gamma}-\eta)
		}{
			S_{\Gamma}+d\eta
		}.
		\label{eq:rho_eta_bound}
	\end{equation}
\end{lemma}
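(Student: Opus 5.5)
The weights are $\omega_{l,k}=(d+1)s_{l,k}/\sum_j s_{l,j}$ with every $s_{l,j}\in[\eta,S_\Gamma]$, since $0\le\|\bm{\Pi}_l^{(j)}\mathcal G_l\|_F\le\Gamma$. The plan is to bound $\omega_{l,k}-1$ from above and from below separately. In each case we fix the index $k$, treat $\omega_{l,k}$ as a function of the $d+1$ magnitudes on the box $[\eta,S_\Gamma]^{d+1}$, and locate its extremes.

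\emph{Upper deviation.} The map $s_{l,k}\mapsto s_{l,k}/(s_{l,k}+\sum_{j\ne k}s_{l,j})$ is increasing in $s_{l,k}$ and decreasing in each $s_{l,j}$ with $j\ne k$. The maximum is therefore attained at $s_{l,k}=S_\Gamma$ and $s_{l,j}=\eta$ for $j\ne k$, which gives
\[
\omega_{l,k}-1\le\frac{(d+1)S_\Gamma}{S_\Gamma+d\eta}-1=\frac{d(S_\Gamma-\eta)}{S_\Gamma+d\eta}.
\]
This is exactly $\rho_\eta(\Gamma)$.

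\emph{Lower deviation.} By the same monotonicity, the minimum is at $s_{l,k}=\eta$ and all other $s_{l,j}=S_\Gamma$. This gives
\[
1-\omega_{l,k}\le 1-\frac{(d+1)\eta}{\eta+dS_\Gamma}=\frac{d(S_\Gamma-\eta)}{dS_\Gamma+\eta}.
\]
It remains to compare the two bounds. Since $S_\Gamma\ge\eta$ and $d\ge1$, we have
\[
(dS_\Gamma+\eta)-(S_\Gamma+d\eta)=(d-1)(S_\Gamma-\eta)\ge0,
\]
so the lower-deviation bound is at most $\rho_\eta(\Gamma)$.

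Taking the maximum over $k$ and over both signs then yields $\rho_l\le\rho_\eta(\Gamma)$, which is \eqref{eq:rho_eta_bound}.

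The only point needing care is the monotonicity and extreme-point argument. I would justify it directly by writing $\omega_{l,k}=(d+1)/(1+\sum_{j\ne k}s_{l,j}/s_{l,k})$, from which the extremes on the box follow at once. The rationalized form used later in Remark~\ref{rem:eta_choice} is obtained by multiplying $S_\Gamma-\eta$ by $(S_\Gamma+\eta)/(S_\Gamma+\eta)$, using $S_\Gamma^2-\eta^2=\Gamma^2$. No further obstacles are expected.
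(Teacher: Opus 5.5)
Your proposal is correct and follows the paper's proof: you use $\eta\le s_{l,k}\le S_\Gamma$ to get $\frac{(d+1)\eta}{\eta+dS_\Gamma}\le\omega_{l,k}\le\frac{(d+1)S_\Gamma}{S_\Gamma+d\eta}$. You then compare the two one-sided deviations via $S_\Gamma+d\eta\le\eta+dS_\Gamma$, exactly as the paper does. Your explicit monotonicity justification through $\omega_{l,k}=(d+1)/\bigl(1+\sum_{j\ne k}s_{l,j}/s_{l,k}\bigr)$ spells out a step the paper leaves implicit.
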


\begin{proof}
	By the assumed block-gradient bound and the definition of $s_{l,k}$,
	we have $\eta\le s_{l,k}\le S_{\Gamma}$ for all $k=0,\ldots,d$.
	Hence, for each $k$,
	\begin{equation*}
		\frac{(d+1)\eta}{\eta+dS_{\Gamma}}
		\le
		\omega_{l,k}
		\le
		\frac{(d+1)S_{\Gamma}}{S_{\Gamma}+d\eta}.
	\end{equation*}
	Therefore, $\omega_{l,k}-1\le\frac{d(S_{\Gamma}-\eta)}{S_{\Gamma}+d\eta}$, $1-\omega_{l,k}\le\frac{d(S_{\Gamma}-\eta)}{\eta+dS_{\Gamma}}$.
	Since $S_{\Gamma}\ge\eta$ and $d\ge1$,
	$S_{\Gamma}+d\eta\le\eta+dS_{\Gamma}$. Thus, $|\omega_{l,k}-1|\le\frac{d(S_{\Gamma}-\eta)}{S_{\Gamma}+d\eta}=\rho_\eta(\Gamma)$.
	Taking the maximum over $k=0,\ldots,d$ proves
	\eqref{eq:rho_eta_bound}.
\end{proof}

Under the
$\operatorname{TRIP}(\delta_{2\boldsymbol r},2\boldsymbol r)$
assumption, whenever
$\|\mathcal X_l-\mathcal T\|_F\le R$, the block-gradient
estimate gives
\begin{align}
	\|\bm{\Pi}_l^{(k)}\mathcal G_l\|_F
	&=
	\sup_{\substack{\mathcal Z\in\operatorname{range}(\bm{\Pi}_l^{(k)})\\ \|\mathcal Z\|_F=1}}
	\left|
	\left\langle
	\mathcal Z,
	\bm{\mathscr L}^*\bm{\mathscr L}(\mathcal X_l-\mathcal T)
	\right\rangle_F
	\right|
	=
	\sup_{\substack{\mathcal Z\in\operatorname{range}(\bm{\Pi}_l^{(k)})\\ \|\mathcal Z\|_F=1}}
	\left|
	\left\langle
	\bm{\mathscr L}(\mathcal Z),
	\bm{\mathscr L}(\mathcal X_l-\mathcal T)
	\right\rangle
	\right|
	\nonumber\\
	&\le
	\sup_{\substack{\mathcal Z\in\operatorname{range}(\bm{\Pi}_l^{(k)})\\ \|\mathcal Z\|_F=1}}
	\|\bm{\mathscr L}(\mathcal Z)\|_2
	\|\bm{\mathscr L}(\mathcal X_l-\mathcal T)\|_2
	\le
	(1+\delta_{2\boldsymbol r})
	\|\mathcal X_l-\mathcal T\|_F.
	\label{eq:trip_block_gradient_bound}
\end{align}

Applying Lemma~\ref{lem:app_weight_deviation_control} with
$\Gamma=(1+\delta_{2\boldsymbol r})R$ yields $\|\mathcal X_l-\mathcal T\|_F\le R\Longrightarrow\rho_l\le\rho_\eta\!\left((1+\delta_{2\boldsymbol r})R\right)$.
If $\eta$ is chosen such that
$\rho_\eta((1+\delta_{2\boldsymbol r})R)\le\rho_\star$, then
\begin{equation}
	\|\mathcal X_l-\mathcal T\|_F\le R
	\quad\Longrightarrow\quad
	\rho_l\le\rho_\star.
	\label{eq:local_error_implies_rho}
\end{equation}

\subsection{Complete proof of Theorem~\ref{thm:weighted_local_linear_convergence}}
\label{app:complete_main_theorem_proof}

\begin{proof}
	Fix $l \ge 0$ such that $\|\mathcal X_l-\mathcal T\|_F\le R$. By \eqref{eq:local_error_implies_rho}, $\rho_l\le\rho_\star<1$.
	If $\mathcal Z_l=0$, then $\bm{\mathscr{P}}_{\mathcal S_l}\mathcal G_l=0$. Lemmas~\ref{lem:app_tangent_mismatch} and \ref{lem:app_restricted_trip_operator} then force $\|\mathcal X_l-\mathcal T\|_F = 0$ (since $\gamma_R < 1$), meaning $\mathcal X_l=\mathcal T$, and the theorem holds.
	Assume $\mathcal Z_l\neq 0$. Since $\mathcal Z_l\in\mathcal S_l$, $\operatorname{mulrank}(\mathcal Z_l)\preceq 2\boldsymbol{r}$. The lower TRIP bound guarantees $\|\bm{\mathscr L}(\mathcal Z_l)\|_2^2 \ge (1-\delta_{2\boldsymbol r})\|\mathcal Z_l\|_F^2 > 0$, ensuring $\alpha_l$ is well-defined.
	Since $\mathcal X_{l+1}=\bm{\mathscr H}_{\boldsymbol r}(\mathcal Y_l)$,
	\eqref{eq:hosvd_quasi_optimality} and
	$\mathcal T\in\mathbb M_{\boldsymbol r}$ imply
	\begin{equation*}
		\|\mathcal X_{l+1}-\mathcal Y_l\|_F
		\le
		\sqrt d\,
		\|\mathcal Y_l-\bm{\mathscr P}_{\mathbb M_{\boldsymbol r}}(\mathcal Y_l)\|_F
		\le
		\sqrt d\,
		\|\mathcal Y_l-\mathcal T\|_F.
	\end{equation*}
	Therefore, by the triangle inequality,
	\begin{equation*}
		\|\mathcal X_{l+1}-\mathcal T\|_F
		\le
		\|\mathcal X_{l+1}-\mathcal Y_l\|_F
		+
		\|\mathcal Y_l-\mathcal T\|_F
		\le
		(\sqrt d+1)\|\mathcal Y_l-\mathcal T\|_F.
	\end{equation*} 
	Substituting $\mathcal Y_l = \mathcal X_l - \alpha_l \bm{\mathscr{P}}_l^{\boldsymbol\omega_l} \bm{\mathscr L}^*\bm{\mathscr L}(\mathcal X_l-\mathcal T)$ and utilizing $\mathcal X_l\in\mathcal S_l$, the error decomposes as
	\begin{align*}
		\|\mathcal X_{l+1}-\mathcal T\|_F
		&\le
		(\sqrt d+1)
		\left\|
		(\mathcal X_l-\mathcal T)
		-\alpha_l \bm{\mathscr{P}}_l^{\boldsymbol\omega_l}
		\bm{\mathscr L}^*\bm{\mathscr L}(\mathcal X_l-\mathcal T)
		\right\|_F
		\nonumber\\
		&=
		(\sqrt d+1)
		\left\|
		(\mathcal X_l-\mathcal T)
		-\alpha_l \bm{\mathscr{P}}_{\mathcal S_l}\bm{\mathscr L}^*\bm{\mathscr L}(\mathcal X_l-\mathcal T)
		-\alpha_l(\bm{\mathscr{P}}_l^{\boldsymbol\omega_l}-\bm{\mathscr{P}}_{\mathcal S_l})
		\bm{\mathscr L}^*\bm{\mathscr L}(\mathcal X_l-\mathcal T)
		\right\|_F
		\nonumber\\
		&\le
		(\sqrt d+1)\Big(
		\underbrace{
			\left\|
			\left(\bm{\mathscr{P}}_{\mathcal S_l}
			-\alpha_l\bm{\mathscr{P}}_{\mathcal S_l}\bm{\mathscr L}^*\bm{\mathscr L} \bm{\mathscr{P}}_{\mathcal S_l}\right)
			(\mathcal X_l-\mathcal T)
			\right\|_F
		}_{I_1}
		+
		\underbrace{
			\left\|
			(\bm{\mathscr I}-\bm{\mathscr{P}}_{\mathcal S_l})(\mathcal X_l-\mathcal T)
			\right\|_F
		}_{I_2}
		\nonumber\\
		&\qquad\qquad
		+
		\underbrace{
			\alpha_l
			\left\|
			\bm{\mathscr{P}}_{\mathcal S_l}\bm{\mathscr L}^*\bm{\mathscr L}
			(\bm{\mathscr I}-\bm{\mathscr{P}}_{\mathcal S_l})(\mathcal X_l-\mathcal T)
			\right\|_F
		}_{I_3}
		+
		\underbrace{
			\alpha_l
			\left\|
			(\bm{\mathscr{P}}_l^{\boldsymbol\omega_l}-\bm{\mathscr{P}}_{\mathcal S_l})
			\bm{\mathscr L}^*\bm{\mathscr L}(\mathcal X_l-\mathcal T)
			\right\|_F
		}_{I_4}
		\Big)
		\nonumber\\
		&=
		(\sqrt d+1)(I_1+I_2+I_3+I_4).
	\end{align*}
	We next bound these four terms separately.
	
	\begin{itemize}[leftmargin=*]
		\item \textbf{Bound of $I_1$:} Writing $\bm{\mathscr{P}}_{\mathcal S_l}-\alpha_l\bm{\mathscr{P}}_{\mathcal S_l}\bm{\mathscr L}^*\bm{\mathscr L} \bm{\mathscr{P}}_{\mathcal S_l} = (\bm{\mathscr{P}}_{\mathcal S_l}-\bm{\mathscr{P}}_{\mathcal S_l}\bm{\mathscr L}^*\bm{\mathscr L} \bm{\mathscr{P}}_{\mathcal S_l}) + (1-\alpha_l)\bm{\mathscr{P}}_{\mathcal S_l}\bm{\mathscr L}^*\bm{\mathscr L} \bm{\mathscr{P}}_{\mathcal S_l}$, Lemma~\ref{lem:app_restricted_trip_operator} and \eqref{eq:weighted_stepsize_deviation} yield
		\begin{equation*}
			I_1 \le \left( \delta_{2\boldsymbol r} + \frac{\delta_{2\boldsymbol r}+\rho_l-\delta_{2\boldsymbol r}\rho_l}{(1-\delta_{2\boldsymbol r})(1-\rho_l)}(1+\delta_{2\boldsymbol r}) \right) \|\mathcal X_l-\mathcal T\|_F = \frac{2\delta_{2\boldsymbol r}+\rho_l-\delta_{2\boldsymbol r}\rho_l}{(1-\delta_{2\boldsymbol r})(1-\rho_l)} \|\mathcal X_l-\mathcal T\|_F.
		\end{equation*}
		
		\item \textbf{Bound of $I_2$:} Applying Lemma~\ref{lem:app_tangent_mismatch} directly gives
		\begin{equation*}
			I_2 = \left\| (\bm{\mathscr I}-\bm{\mathscr{P}}_{\mathcal S_l})\mathcal T \right\|_F \le \frac{2^d-1}{\min_i\sigma_{r_i}(\mathcal T_{(i)})} \|\mathcal X_l-\mathcal T\|_F^2.
		\end{equation*}
		
		\item \textbf{Bound of $I_3$:} Lemmas~\ref{lem:app_normal_residual_cross_term} and \ref{lem:weighted_stepsize_bound} imply
		\begin{equation*}
			I_3 \le \frac{1+\delta_{2\boldsymbol r}}{(1-\delta_{2\boldsymbol r})(1-\rho_l)} \frac{2^d-1}{\min_i\sigma_{r_i}(\mathcal T_{(i)})} \|\mathcal X_l-\mathcal T\|_F^2.
		\end{equation*}
		
		\item \textbf{Bound of $I_4$:} Combining Lemmas~\ref{lem:weighted_cross_term} and \ref{lem:weighted_stepsize_bound} leads to
		\begin{equation*}
			I_4 \le \frac{\rho_l(1+\delta_{2\boldsymbol r})}{(1-\delta_{2\boldsymbol r})(1-\rho_l)} \left[ \|\mathcal X_l-\mathcal T\|_F + \frac{2^d-1}{\min_i\sigma_{r_i}(\mathcal T_{(i)})} \|\mathcal X_l-\mathcal T\|_F^2 \right].
		\end{equation*}
	\end{itemize}
	
	Combining bounds of $I_1$, $I_2$, $I_3$, and $I_4$ yields
	\begin{align}
		\|\mathcal X_{l+1}-\mathcal T\|_F
		&\le
		\frac{2(\sqrt d+1)}
		{(1-\delta_{2\boldsymbol r})(1-\rho_l)}
		\Bigg[
		\delta_{2\boldsymbol r}+\rho_l
		+
		\frac{(2^d-1)(1+\delta_{2\boldsymbol r}\rho_l)}
		{\min_i\sigma_{r_i}(\mathcal T_{(i)})}
		\|\mathcal X_l-\mathcal T\|_F
		\Bigg]
		\|\mathcal X_l-\mathcal T\|_F.
		\label{eq:weighted_one_step_recurrence_rhol}
	\end{align}
	
	We proceed by induction. For $l=0$, the assumption $\|\mathcal X_0-\mathcal T\|_F \le R$ and \eqref{eq:local_error_implies_rho} ensure $\rho_0 \le \rho_\star$. Assume $\|\mathcal X_l-\mathcal T\|_F \le R$, which implies $\rho_l \le \rho_\star$. The coefficient multiplying
	$\|\mathcal X_l-\mathcal T\|_F$ in
	\eqref{eq:weighted_one_step_recurrence_rhol}
	is monotonically increasing with respect to both
	$\|\mathcal X_l-\mathcal T\|_F$ and $\rho_l$. Bounding them by $R$ and $\rho_\star$ respectively gives:
	\begin{equation*}
		\|\mathcal X_{l+1}-\mathcal T\|_F \le \gamma_R \|\mathcal X_l-\mathcal T\|_F.
	\end{equation*}
	Since $\gamma_R < 1$, we obtain $\|\mathcal X_{l+1}-\mathcal T\|_F \le R$, completing the induction. Iterating this strict contraction yields $\|\mathcal X_l-\mathcal T\|_F \le \gamma_R^l \|\mathcal X_0-\mathcal T\|_F$, and $\rho_l \le \rho_\star$ holds uniformly for all $l \ge 0$.
\end{proof}

\subsection{Standard RGD initialization}
\label{app:standard_initialization}

The following result verifies the initialization condition used in Corollary~\ref{cor:weighted_standard_initialization} for the initialization prescribed in Algorithm~\ref{alg:abw_modewise_rgrad}.

\begin{lemma}[Initialization error]
	\label{lem:app_initialization_error}
	Let $\boldsymbol{y} = \bm{\mathscr L}(\mathcal{T})$ and $\mathcal{X}_0 = \bm{\mathscr H}_{\boldsymbol r}(\bm{\mathscr L}^*\boldsymbol{y})$. If $\bm{\mathscr L}$ satisfies $\operatorname{TRIP}(\delta_{2\boldsymbol r}, 2\boldsymbol{r})$, then \eqref{eq:weighted_initial_error} holds.
\end{lemma}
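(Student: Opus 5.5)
The approach is the standard quasi-optimality plus triangle inequality argument from \cite{cai2020provable}. Write $\mathcal W:=\bm{\mathscr L}^*\boldsymbol y=\bm{\mathscr L}^*\bm{\mathscr L}(\mathcal T)$. Since $\mathcal X_0=\bm{\mathscr H}_{\boldsymbol r}(\mathcal W)$ and $\mathcal T\in\mathbb M_{\boldsymbol r}$, the quasi-optimality \eqref{eq:hosvd_quasi_optimality} gives $\|\mathcal X_0-\mathcal W\|_F\le\sqrt d\,\|\mathcal W-\bm{\mathscr P}_{\mathbb M_{\boldsymbol r}}(\mathcal W)\|_F\le\sqrt d\,\|\mathcal W-\mathcal T\|_F$. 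The triangle inequality then yields
\[
\|\mathcal X_0-\mathcal T\|_F\le(\sqrt d+1)\|\mathcal W-\mathcal T\|_F .
\]
However, $\mathcal W-\mathcal T$ is a full ambient tensor, so its Frobenius norm cannot be controlled by TRIP. I would therefore work in a projected form rather than with $\|\mathcal W-\mathcal T\|_F$ directly.

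The plan is to localize everything to a single low-rank subspace. Let $U^{(i)}$ be the factors of $\mathcal T$ and $V_0^{(i)}$ those of $\mathcal X_0$. Define the mode-product projector
\[
\bm{\mathscr Q}:=\prod_{i=1}^d\bm{\mathscr P}^{(i)}_{[U^{(i)},V_0^{(i)}]},
\]
the orthogonal projector onto tensors whose mode-$i$ fibers lie in $\operatorname{span}(U^{(i)},V_0^{(i)})$. Its range consists of tensors of multilinear rank at most $2\boldsymbol r$. Both $\mathcal T$ and $\mathcal X_0$ lie in this range, so $\|\mathcal X_0-\mathcal T\|_F=\|\bm{\mathscr Q}(\mathcal X_0-\mathcal T)\|_F$. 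Moreover, the leading singular subspaces of $\mathcal W_{(i)}$ span $V_0^{(i)}$, and HOSVD truncation commutes appropriately with $\bm{\mathscr Q}$. From this I would aim for the bound $\|\mathcal X_0-\mathcal T\|_F\le(\sqrt d+1)\|\bm{\mathscr Q}(\mathcal W-\mathcal T)\|_F$, obtained by applying the quasi-optimality argument to $\bm{\mathscr Q}\mathcal W$. Concretely, $\mathcal X_0$ is also a truncation of $\bm{\mathscr Q}\mathcal W$ up to the orthogonal remainder.

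The second step is the TRIP estimate. Since $\bm{\mathscr Q}$ is an orthogonal projector onto a subspace of tensors with multilinear rank at most $2\boldsymbol r$, the same Rayleigh-quotient argument as in Lemma~\ref{lem:app_restricted_trip_operator} gives $\|\bm{\mathscr Q}-\bm{\mathscr Q}\bm{\mathscr L}^*\bm{\mathscr L}\bm{\mathscr Q}\|\le\delta_{2\boldsymbol r}$. Because $\bm{\mathscr Q}\mathcal T=\mathcal T$, this yields
\[
\|\bm{\mathscr Q}(\mathcal W-\mathcal T)\|_F=\|(\bm{\mathscr Q}\bm{\mathscr L}^*\bm{\mathscr L}\bm{\mathscr Q}-\bm{\mathscr Q})\mathcal T\|_F\le\delta_{2\boldsymbol r}\|\mathcal T\|_F .
\]
Combined with the first step, this gives exactly \eqref{eq:weighted_initial_error}.

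The main obstacle is the first step: justifying that the HOSVD quasi-optimality can be applied after projecting by $\bm{\mathscr Q}$. The difficulty is that $V_0^{(i)}$ is defined from $\mathcal W$, not from $\bm{\mathscr Q}\mathcal W$, and a projection on the \emph{other} modes changes the singular vectors of $\mathcal W_{(i)}$. I would first try the fallback used in \cite{cai2020provable} and related TIHT analyses. Write $\mathcal X_0=\mathcal W\times_{i}V_0^{(i)}V_0^{(i)T}$ and expand $\mathcal X_0-\mathcal T$ by telescoping over modes, exactly as in the proof of Lemma~\ref{lem:app_tangent_mismatch}. Each mode-$i$ term is $(\bm{\mathscr P}^{(i)}_{V_0^{(i)}}-\bm{\mathscr I})$ applied to a tensor of the form $\mathcal T$ plus a perturbation. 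The perturbation is tested only against rank-$2\boldsymbol r$ directions, so each mode contributes at most $\delta_{2\boldsymbol r}\|\mathcal T\|_F$. This uses the Eckart--Young optimality of $V_0^{(i)}$ for $\mathcal W_{(i)}$ compared with the competitor $U^{(i)}$, and the $d$ modal contributions add in quadrature to give the $\sqrt d$ factor. If the bookkeeping only gives a constant like $2\sqrt d+1$, I would accept it and note that this only changes constants in Corollary~\ref{cor:weighted_standard_initialization}.
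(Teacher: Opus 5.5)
Your main route is the paper's own proof. Your $\bm{\mathscr Q}$ is exactly its $\bm{\mathscr P}_K$, followed by quasi-optimality for the projected tensor and the Rayleigh-quotient bound $\|\bm{\mathscr Q}-\bm{\mathscr Q}\bm{\mathscr L}^*\bm{\mathscr L}\bm{\mathscr Q}\|\le\delta_{2\boldsymbol r}$, which is correct.

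The step you flag is where the argument fails, and it is not merely unproven. The paper asserts $\bm{\mathscr H}_{\boldsymbol r}(\bm{\mathscr P}_K\mathcal W)=\bm{\mathscr H}_{\boldsymbol r}(\mathcal W)$ by an ``ordering property of the HOSVD.'' For $d\ge3$, however, $(\bm{\mathscr Q}\mathcal W)_{(i)}$ is $\mathcal W_{(i)}$ compressed on the right by the Kronecker product of the mode-$j$ projectors, $j\ne i$. That range need not contain the leading right singular vectors of $\mathcal W_{(i)}$, so the mode-$i$ Gram matrix, and with it the selected subspace, can change. This is the matrix argument, and it is valid only for $d=2$.

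Your fallback fails for a related reason. Comparing $V_0^{(i)}$ with $U^{(i)}$ through Eckart--Young gives, with $\mathcal E:=\mathcal W-\mathcal T$,
\[
\bigl\|\bm{\mathscr P}^{(i)}_{V_0^{(i)\perp}}\mathcal T\bigr\|_F^2
\le
-2\bigl\langle\bm{\mathscr P}^{(i)}_{V_0^{(i)\perp}}\mathcal T,\mathcal E\bigr\rangle_F
+\bigl\|(V_0^{(i)})^T\mathcal E_{(i)}\bigr\|_F^2
-\bigl\|(U^{(i)})^T\mathcal E_{(i)}\bigr\|_F^2 .
\]
The last two terms pair $\mathcal E$ with tensors that are low rank in mode $i$ only, and $\operatorname{TRIP}(\delta_{2\boldsymbol r},2\boldsymbol r)$ does not control them. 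So the claim that each mode contributes at most $\delta_{2\boldsymbol r}\|\mathcal T\|_F$ has no basis.

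In fact the bound cannot follow from $\operatorname{TRIP}(\delta_{2\boldsymbol r},2\boldsymbol r)$ alone, which is all that your argument and the paper's use. Here is a counterexample.
\begin{itemize}
\item Take $d=3$, $\boldsymbol r=(1,1,1)$ and $\mathcal T=\boldsymbol e_1\circ\boldsymbol e_1\circ\boldsymbol e_1$. Let $G=\tfrac{2}{\sqrt n}P$ with $P$ the cyclic-shift permutation matrix, so $G_{11}=0$, $\|G\|_F=2$ and $\|G\|_2=2/\sqrt n$.
\item Let $\mathcal E\in\mathbb R^{n\times n\times n}$ have mode-$1$ slice $G$ at index $2$ and zero slices elsewhere.
\item Choose $\bm{\mathscr L}$ with $\bm{\mathscr L}^*\bm{\mathscr L}\mathcal Z=\mathcal Z-\langle\mathcal T,\mathcal Z\rangle_F\mathcal T+\langle\mathcal T+\mathcal E,\mathcal Z\rangle_F(\mathcal T+\mathcal E)$. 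This operator is positive semidefinite, so such an $\bm{\mathscr L}$ exists.
\item Then $\bm{\mathscr L}^*\boldsymbol y=\mathcal T+\mathcal E$ and $\|\bm{\mathscr L}\mathcal Z\|_2^2-\|\mathcal Z\|_F^2=2\langle\mathcal T,\mathcal Z\rangle_F\langle\mathcal E,\mathcal Z\rangle_F+\langle\mathcal E,\mathcal Z\rangle_F^2$.
\item If $\operatorname{mulrank}(\mathcal Z)\preceq2\boldsymbol r$, the matrix $\mathcal Z\times_1\boldsymbol e_2^T$ has rank at most $2$. Hence $|\langle\mathcal E,\mathcal Z\rangle_F|\le\sqrt2\,\|G\|_2\|\mathcal Z\|_F$, and so $\delta_{2\boldsymbol r}\le4\sqrt{2/n}+8/n$.
\item The mode-$1$ Gram matrix of $\mathcal W=\mathcal T+\mathcal E$ is $\operatorname{diag}(1,4,0,\ldots,0)$, using $G_{11}=0$. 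The mode-$2$ and mode-$3$ Gram matrices equal $\boldsymbol e_1\boldsymbol e_1^T+\tfrac4n I$.
\item The truncated HOSVD therefore selects $\boldsymbol e_2,\boldsymbol e_1,\boldsymbol e_1$ and returns $\mathcal X_0=G_{11}\,\boldsymbol e_2\circ\boldsymbol e_1\circ\boldsymbol e_1=0$.
\end{itemize}
So $\|\mathcal X_0-\mathcal T\|_F=1$, while $(\sqrt3+1)\delta_{2\boldsymbol r}\|\mathcal T\|_F\to0$ as $n\to\infty$.

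In this example $\bm{\mathscr Q}\mathcal W=\mathcal T$, so $\bm{\mathscr H}_{\boldsymbol r}(\bm{\mathscr Q}\mathcal W)=\mathcal T\neq\mathcal X_0$. This directly refutes the identity that both you and the paper need. Also $\|\boldsymbol e_2^T\mathcal E_{(1)}\|_F^2=4$ while $\|\boldsymbol e_1^T\mathcal E_{(1)}\|_F^2=0$, which is exactly the uncontrolled term in the display above.

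Consequently no constant depending only on $d$, such as $2\sqrt d+1$, can rescue the statement. You would need an additional hypothesis controlling $(\bm{\mathscr L}^*\bm{\mathscr L}-\bm{\mathscr I})\mathcal T$ beyond TRIP, for instance the spectral norms of its mode unfoldings, or a different initialization.
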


\begin{proof}
	For each $i\in[d]$, let
	$K^{(i)}\in\mathbb R^{n_i\times t_i}$, with $t_i\le2r_i$,
	have orthonormal columns spanning the sum of
	$\operatorname{span}(U^{(i)})$ and the subspace generated by the
	leading $r_i$ left singular vectors of
	$(\bm{\mathscr L}^*\boldsymbol y)_{(i)}$. Define the orthogonal projector $\bm{\mathscr P}_K := \prod_{i=1}^d \bm{\mathscr P}_{K^{(i)}}^{(i)}$. By construction, $\bm{\mathscr P}_K \mathcal{T} = \mathcal{T}$, and $\operatorname{mulrank}(\bm{\mathscr P}_K \mathcal{Z}) \preceq 2\boldsymbol{r}$ for any ambient tensor $\mathcal{Z}$.
	
	According to the ordering property of the HOSVD \cite{de2000multilinear}, $\bm{\mathscr P}_K\bm{\mathscr L}^*\boldsymbol{y}$ and $\bm{\mathscr L}^*\boldsymbol{y}$ share identical leading singular vectors and values, yielding $\mathcal{X}_0 = \bm{\mathscr H}_{\boldsymbol r}(\bm{\mathscr L}^*\boldsymbol{y}) = \bm{\mathscr H}_{\boldsymbol r}(\bm{\mathscr P}_K\bm{\mathscr L}^*\boldsymbol{y})$. The quasi-optimality of the truncated HOSVD implies
	\begin{equation*}
		\|\mathcal{X}_0 - \bm{\mathscr P}_K\bm{\mathscr L}^*\boldsymbol{y}\|_F \le \sqrt{d} \|\mathcal{T} - \bm{\mathscr P}_K\bm{\mathscr L}^*\boldsymbol{y}\|_F.
	\end{equation*}
	Applying the triangle inequality, $\boldsymbol{y} = \bm{\mathscr L}(\mathcal{T})$, and $\bm{\mathscr P}_K\mathcal{T} = \mathcal{T}$, we obtain
	\begin{align*}
		\|\mathcal{X}_0-\mathcal{T}\|_F
		&\le
		\|\mathcal{X}_0-\bm{\mathscr P}_K\bm{\mathscr L}^*\boldsymbol{y}\|_F
		+\|\bm{\mathscr P}_K\bm{\mathscr L}^*\boldsymbol{y}-\mathcal{T}\|_F
		\le
		(\sqrt{d}+1)\|\mathcal{T}-\bm{\mathscr P}_K\bm{\mathscr L}^*\bm{\mathscr L}\mathcal{T}\|_F
		\nonumber\\
		&=
		(\sqrt{d}+1)
		\|(\bm{\mathscr P}_K-\bm{\mathscr P}_K\bm{\mathscr L}^*\bm{\mathscr L}\bm{\mathscr P}_K)\mathcal{T}\|_F.
	\end{align*}
	Since $\bm{\mathscr P}_K-\bm{\mathscr P}_K\bm{\mathscr L}^*\bm{\mathscr L} \bm{\mathscr P}_K$ is self-adjoint and
	$\operatorname{mulrank}(\bm{\mathscr P}_K\mathcal Z)\preceq2\boldsymbol r$,
	the TRIP implies
	\begin{align*}
		\|\bm{\mathscr P}_K-\bm{\mathscr P}_K\bm{\mathscr L}^*\bm{\mathscr L} \bm{\mathscr P}_K\|
		&=
		\sup_{\|\mathcal Z\|_F=1}
		\left|
		\left\langle
		(\bm{\mathscr P}_K-\bm{\mathscr P}_K\bm{\mathscr L}^*\bm{\mathscr L} \bm{\mathscr P}_K)\mathcal Z,\mathcal Z
		\right\rangle_F
		\right|
		\nonumber\\
		&=
		\sup_{\|\mathcal Z\|_F=1}
		\left|
		\|\bm{\mathscr P}_K\mathcal Z\|_F^2-\|\bm{\mathscr L}(\bm{\mathscr P}_K\mathcal Z)\|_2^2
		\right|
		\le\delta_{2\boldsymbol r}.
	\end{align*}
	Consequently,
	\begin{equation*}
		\|\mathcal X_0-\mathcal T\|_F
		\le
		(\sqrt d+1)\delta_{2\boldsymbol r}\|\mathcal T\|_F.
	\end{equation*}
\end{proof}

\bibliographystyle{unsrt}
\bibliography{references}
      	
\end{document}